\documentclass{article}

\usepackage[dvipsnames]{xcolor}  %

\usepackage{microtype}
\usepackage{graphicx}
\usepackage{subfigure}
\usepackage{booktabs} %

\usepackage{hyperref}

\usepackage[accepted]{icml2022}

\icmltitlerunning{Interactively Learning Preference Constraints in Linear Bandits}

\usepackage{amsmath}
\usepackage{amssymb}
\usepackage{amsthm}
\usepackage{mathtools}
\usepackage{cleveref}
\usepackage{xspace}
\usepackage{thmtools}
\usepackage{thm-restate}

\usepackage[hang,flushmargin]{footmisc}

\newcommand{\specialcell}[2][c]{%
\begin{tabular}[#1]{@{}c@{}}#2\end{tabular}}
\usepackage{array}
\newcolumntype{M}[1]{>{\centering\arraybackslash}m{#1}}

\newcommand{\R}{\mathbb{R}}

\newcommand{\cA}{\mathcal{A}}

\newcommand{\vx}{\mathbf{x}}

\newcommand{\vf}{\mathbf{f}}
\newcommand{\vr}{\mathbf{r}}
\newcommand{\vc}{\mathbf{c}}

\newcommand{\diag}{\mathrm{diag}}

\newcommand{\Gaussian}{\mathcal{N}}

\newcommand{\argmax}{\mathop\mathrm{argmax}}
\newcommand{\argmin}{\mathop\mathrm{argmin}}
\newcommand{\Expectation}{\mathbb{E}}
\newcommand{\NormalDist}{\mathcal{N}}
\newcommand{\KL}{\mathrm{KL}}
\newcommand{\defeq}{\coloneqq}

\newcommand{\Problem}{\nu}
\newcommand{\RewParam}{\theta}
\newcommand{\ConstParam}{\phi}
\newcommand{\Act}{x}
\newcommand{\ActSet}{\mathcal{X}}

\newcommand{\noise}{\eta}
\newcommand{\BetterActSet}{\ActSet^{\geq}_\RewParam}

\newcommand{\Design}{\lambda}
\newcommand{\DesMat}{A_{\Design}}
\newcommand{\Complexity}{H_{\mathrm{CLB}}}
\newcommand{\WeakComplexity}{\bar{H}_{\mathrm{CLB}}}

\newcommand{\MinConst}{{C^+_{\mathrm{min}}}}
\newcommand{\StoppingTime}{\tau}

\newcommand{\BestAct}{\Act^*_\Problem}
\newcommand{\BestActP}{\Act^*_{\Problem'}}
\newcommand{\TrueConst}{{\ConstParam}}

\newcommand{\UpperConst}{u_\ConstParam}
\newcommand{\LowerConst}{l_\ConstParam}

\newcommand{\pd}[2]{\frac{\partial #1}{\partial #2}}
\renewcommand{\equiv}{\Leftrightarrow}

\newcommand{\UncertainSet}{\mathcal{U}}
\newcommand{\FeasibleSet}{\mathcal{F}}
\newcommand{\Round}{t}
\newcommand{\GoodEvent}{\mathcal{E}}
\newcommand{\SmallGapSet}{\mathcal{S}}

\let\emptyset\varnothing

\newtheorem{definition}{Definition}
\newtheorem{theorem}{Theorem}

\newtheorem{lemma}{Lemma}

\newcommand{\CBAILong}{constrained linear best-arm identification\xspace}
\newcommand{\CBAIShort}{CBAI\xspace}
\newcommand{\AlgLong}{Adaptive Constraint Learning\xspace}
\newcommand{\AlgShort}{ACOL\xspace}
\newcommand{\AlgGreedyLong}{Greedy Adaptive Constraint Learning\xspace}
\newcommand{\AlgGreedyShort}{G-ACOL\xspace}

\newcommand{\MaxRewF}{MaxRew-\ensuremath{\FeasibleSet}\xspace}
\newcommand{\MaxRewU}{MaxRew-\ensuremath{\UncertainSet}\xspace}

\newcommand{\paragraphsmall}[1]{\vspace{-1em}\paragraph{#1}}
\setlength{\belowcaptionskip}{-2em}
\addtolength{\textfloatsep}{-1em}

\unskip
\newcommand{\legendMaxvarAllTuned}{\includegraphics[width=4ex]{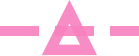}}%
\newcommand{\legendMaxvarAll}{\includegraphics[width=4ex]{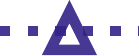}}%
\newcommand{\legendAdaptiveStatic}{\includegraphics[width=4ex]{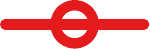}}%
\newcommand{\legendAdaptiveTuned}{\includegraphics[width=4ex]{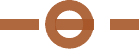}}%
\newcommand{\legendUniformAll}{\includegraphics[width=4ex]{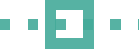}}%
\newcommand{\legendUniformAllTuned}{\includegraphics[width=4ex]{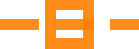}}%
\newcommand{\legendAdaptiveUniformTuned}{\includegraphics[width=4ex]{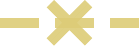}}%
\newcommand{\legendAdaptiveUniform}{\includegraphics[width=4ex]{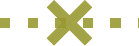}}%
\newcommand{\legendAdaptive}{\includegraphics[width=4ex]{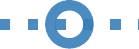}}%
\newcommand{\legendGAllocation}{\includegraphics[width=4ex]{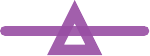}}%
\newcommand{\legendOracle}{\includegraphics[width=4ex]{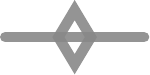}}%
\newcommand{\legendUniform}{\includegraphics[width=4ex]{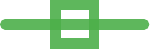}}%
\newcommand{\legendMaxRewU}{\includegraphics[width=4ex]{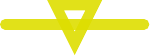}}%
\newcommand{\legendMaxRewUTuned}{\includegraphics[width=4ex]{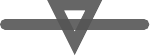}}%
\newcommand{\legendDriverTargetVelocity}{\raisebox{0.5ex}{\includegraphics[width=4ex]{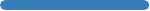}}\xspace}%
\newcommand{\legendDriverTargetLocation}{\raisebox{0.5ex}{\includegraphics[width=4ex]{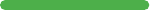}}\xspace}%
\newcommand{\legendDriverBlocked}{\raisebox{0.5ex}{\includegraphics[width=4ex]{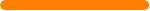}}\xspace}%

\newcommand{\codeurl}{\url{https://github.com/lasgroup/adaptive-constraint-learning}}

\begin{document}

\twocolumn[
\icmltitle{
Interactively Learning Preference Constraints in Linear Bandits
}

\icmlsetsymbol{equal}{*}

\begin{icmlauthorlist}
\icmlauthor{David Lindner}{eth}
\icmlauthor{Sebastian Tschiatschek}{uov}
\icmlauthor{Katja Hofmann}{msft}
\icmlauthor{Andreas Krause}{eth}
\end{icmlauthorlist}

\icmlaffiliation{eth}{ETH Zurich, Switzerland}
\icmlaffiliation{uov}{University of Vienna, Austria}
\icmlaffiliation{msft}{Microsoft Research Cambridge, UK}

\icmlcorrespondingauthor{David Lindner}{david.lindner@inf.ethz.ch}

\icmlkeywords{Machine Learning, ICML}

\vskip 0.3in
]

\printAffiliationsAndNotice{}  %

\begin{abstract}
\looseness -1 We study sequential decision-making with known rewards and unknown constraints, motivated by situations where the constraints represent expensive-to-evaluate human preferences, such as safe and comfortable driving behavior. We formalize the challenge of interactively learning about these constraints as a novel linear bandit problem which we call \emph{\CBAILong}. To solve this problem, we propose the \emph{\AlgLong} (\AlgShort) algorithm. We provide an instance-dependent lower bound for \CBAILong and show that \AlgShort's sample complexity matches the lower bound in the worst-case. In the average case, \AlgShort's sample complexity bound is still significantly tighter than bounds of simpler approaches. In synthetic experiments, \AlgShort performs on par with an oracle solution and outperforms a range of baselines. As an application, we consider learning constraints to represent human preferences in a driving simulation. \AlgShort is significantly more sample efficient than alternatives for this application. Further, we find that learning preferences as constraints is more robust to changes in the driving scenario than encoding the preferences directly in the reward function.
\end{abstract}

\section{Introduction}\label{sec:introduction}

Often, (sequential) decision-making problems are formalized as maximizing an unknown reward function that captures an expensive-to-evaluate objective, for example, user preferences (see Chapter 1 of \citet{lattimore2020bandit} for examples). However, in many practical situations, it can be more natural to model problems with a known reward function and unknown, expensive-to-evaluate constraints.

For example, a cookie manufacturer might want to create a low-calorie cookie.\footnotemark[4] The cookie should have the lowest amount of calories possible, but at least $95\%$ of customers should like it. To evaluate this constraint, the manufacturer has to produce specific cookies and test them with customers. The reward, i.e., the amount of calories for a recipe, is easy to evaluate without producing a cookie. Because customer trials are expensive, the cookie manufacturer wants to find the best constrained solution with as few trials as possible.

\begin{figure*}\centering
   \newlength{\DriverImageWidth}
   \setlength{\DriverImageWidth}{2.7cm}
   \setlength{\tabcolsep}{0.5pt}
   \renewcommand{\arraystretch}{0.5}
   \small
   \subfigure[Reward Penalty]{
       \begin{tabular}{M{\DriverImageWidth} M{\DriverImageWidth} M{\DriverImageWidth}}
           \specialcell{Base \\ Scenario} \vspace*{0.7pt} & \specialcell{Different \\ Reward} \vspace*{0.7pt} & \specialcell{Different \\ Environment} \vspace*{0.7pt} \\ \includegraphics[width=\DriverImageWidth]{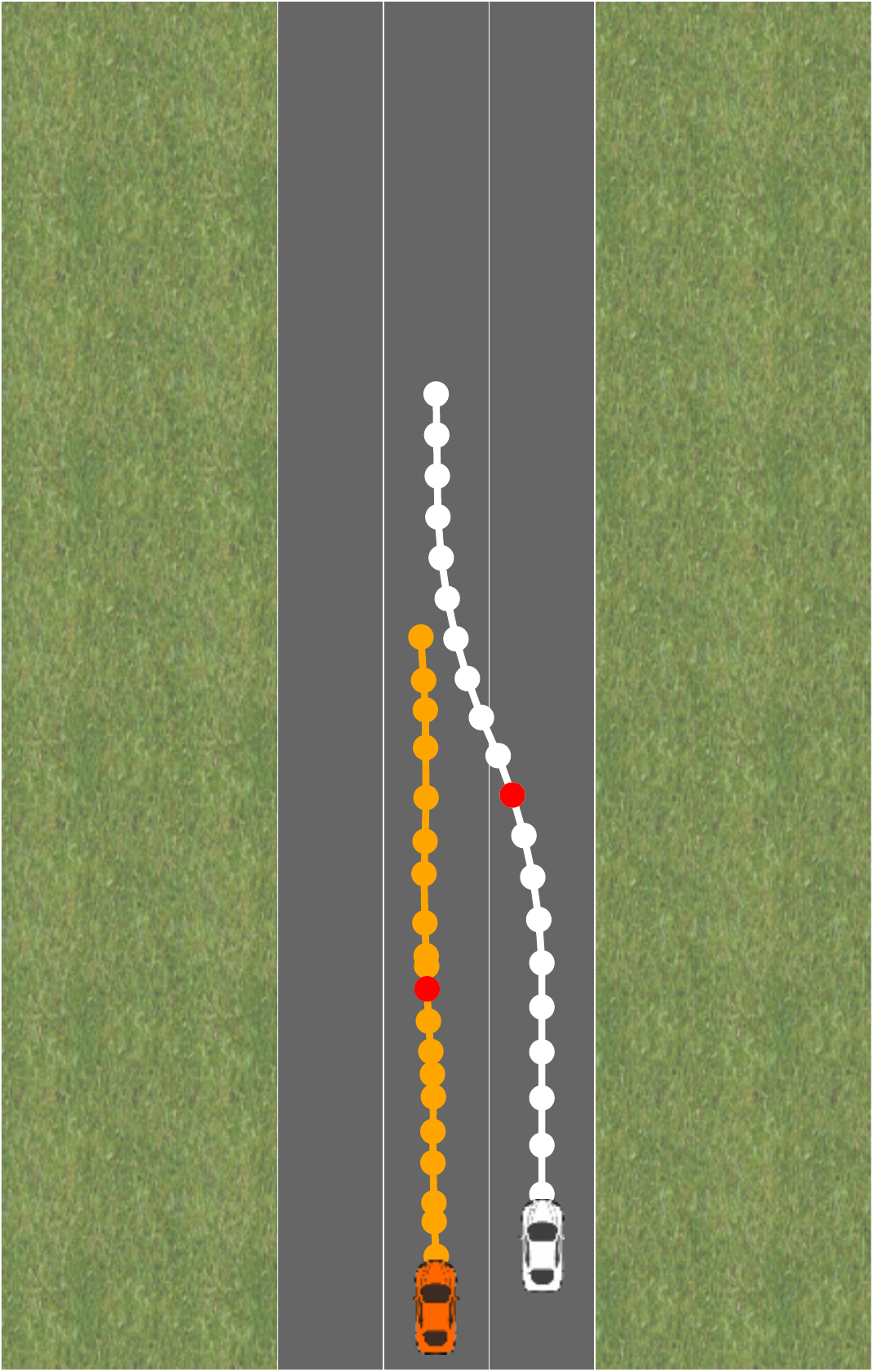} & \includegraphics[width=\DriverImageWidth]{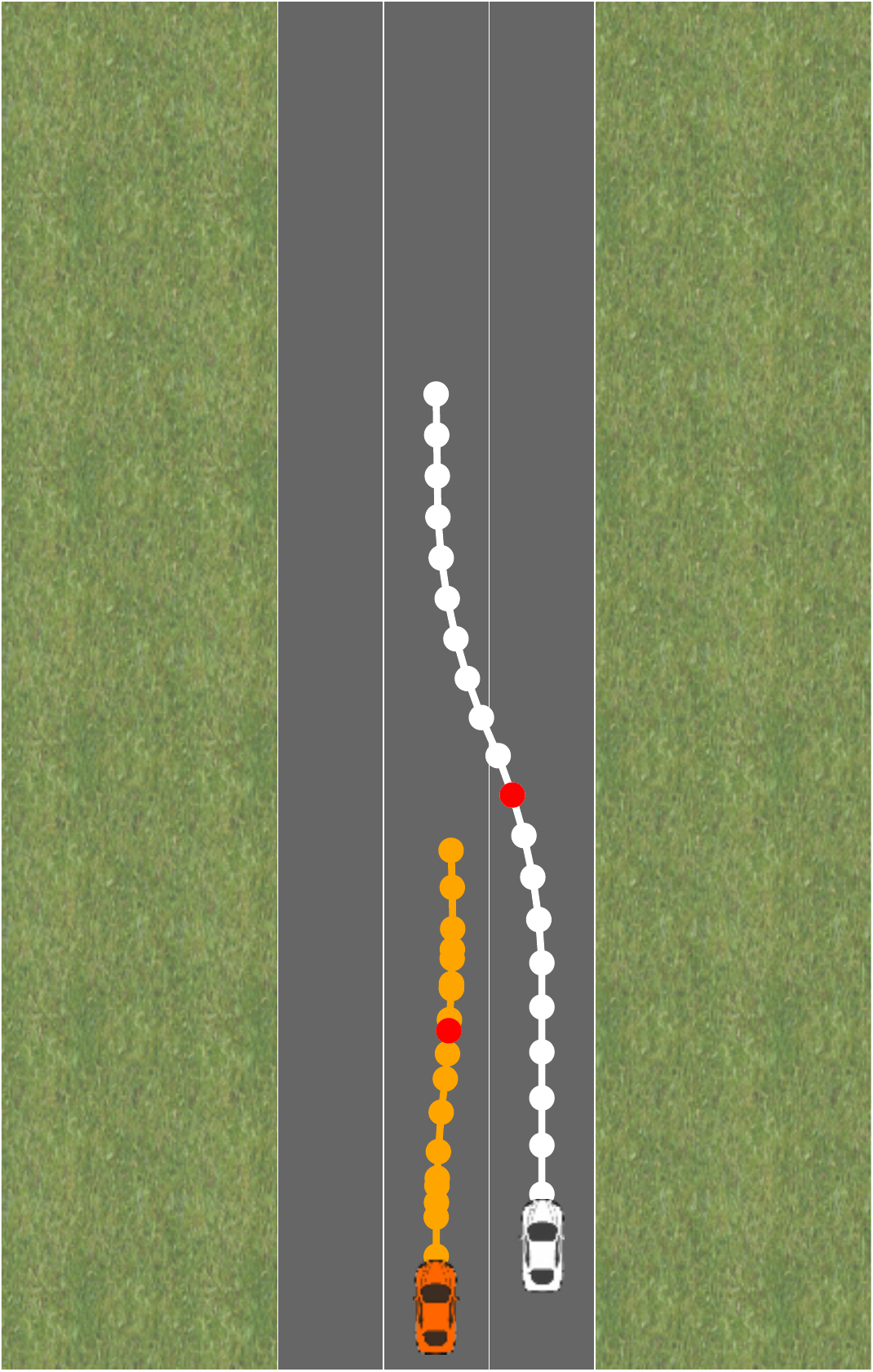} & \includegraphics[width=\DriverImageWidth]{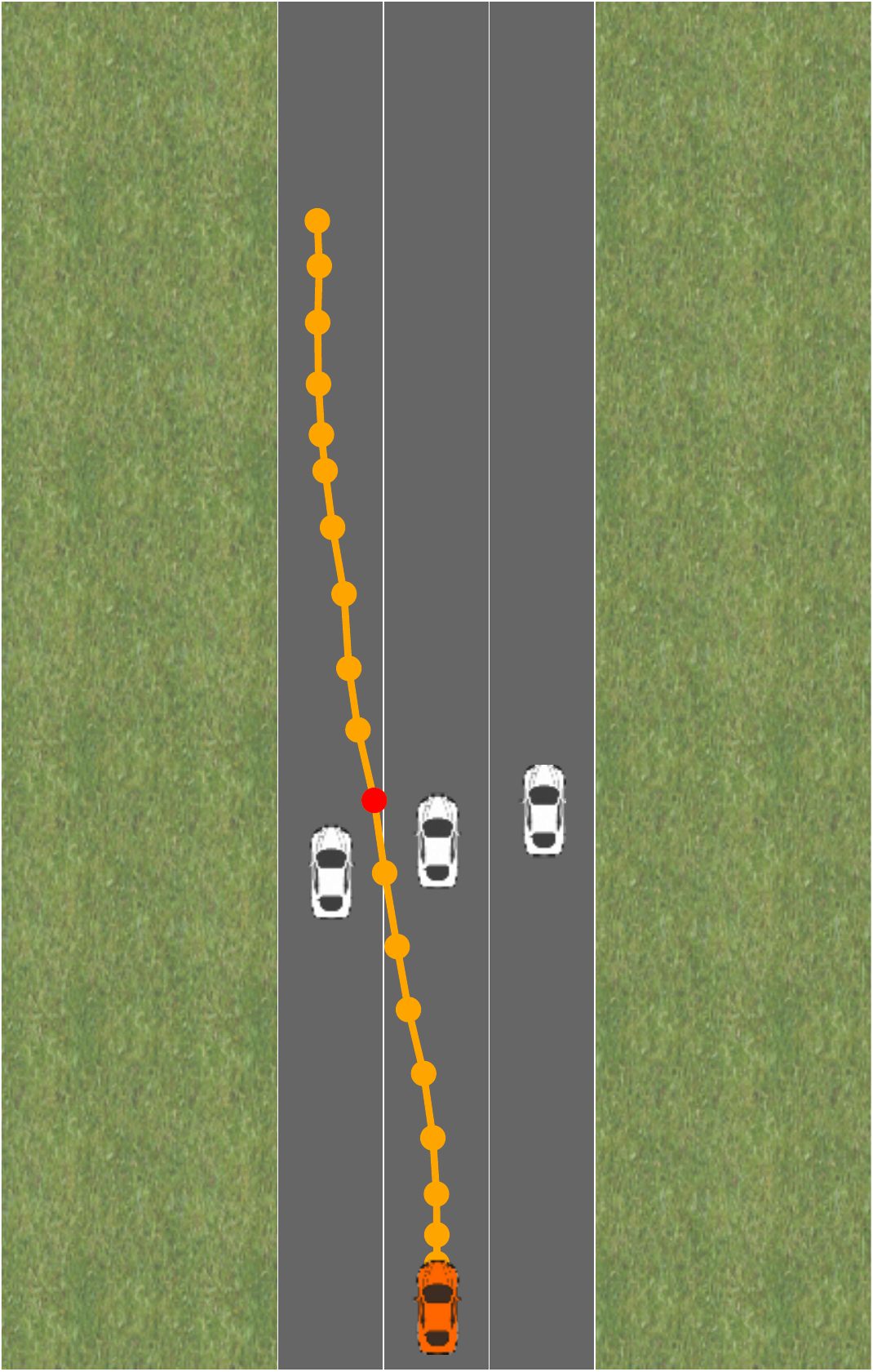}
       \end{tabular}
       \label{subfig:driver_reward_penalty}
   }\hspace{0.1cm}
   \subfigure[Constraint]{
       \begin{tabular}{M{\DriverImageWidth} M{\DriverImageWidth} M{\DriverImageWidth}}
           \specialcell{Base \\ Scenario} \vspace*{0.7pt} & \specialcell{Different \\ Reward} \vspace*{0.7pt} & \specialcell{Different \\ Environment} \vspace*{0.7pt} \\
           \includegraphics[width=\DriverImageWidth]{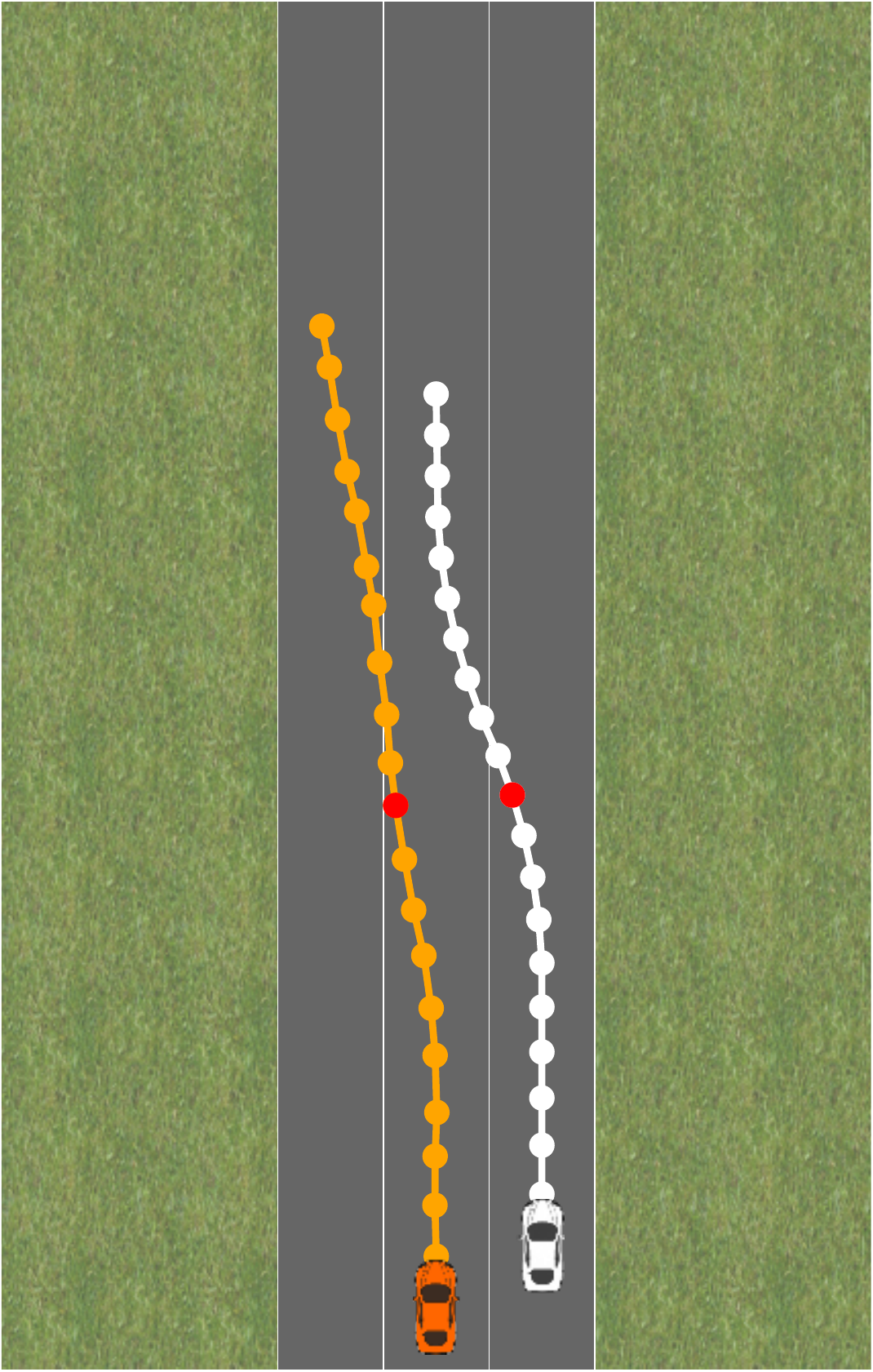} & \includegraphics[width=\DriverImageWidth]{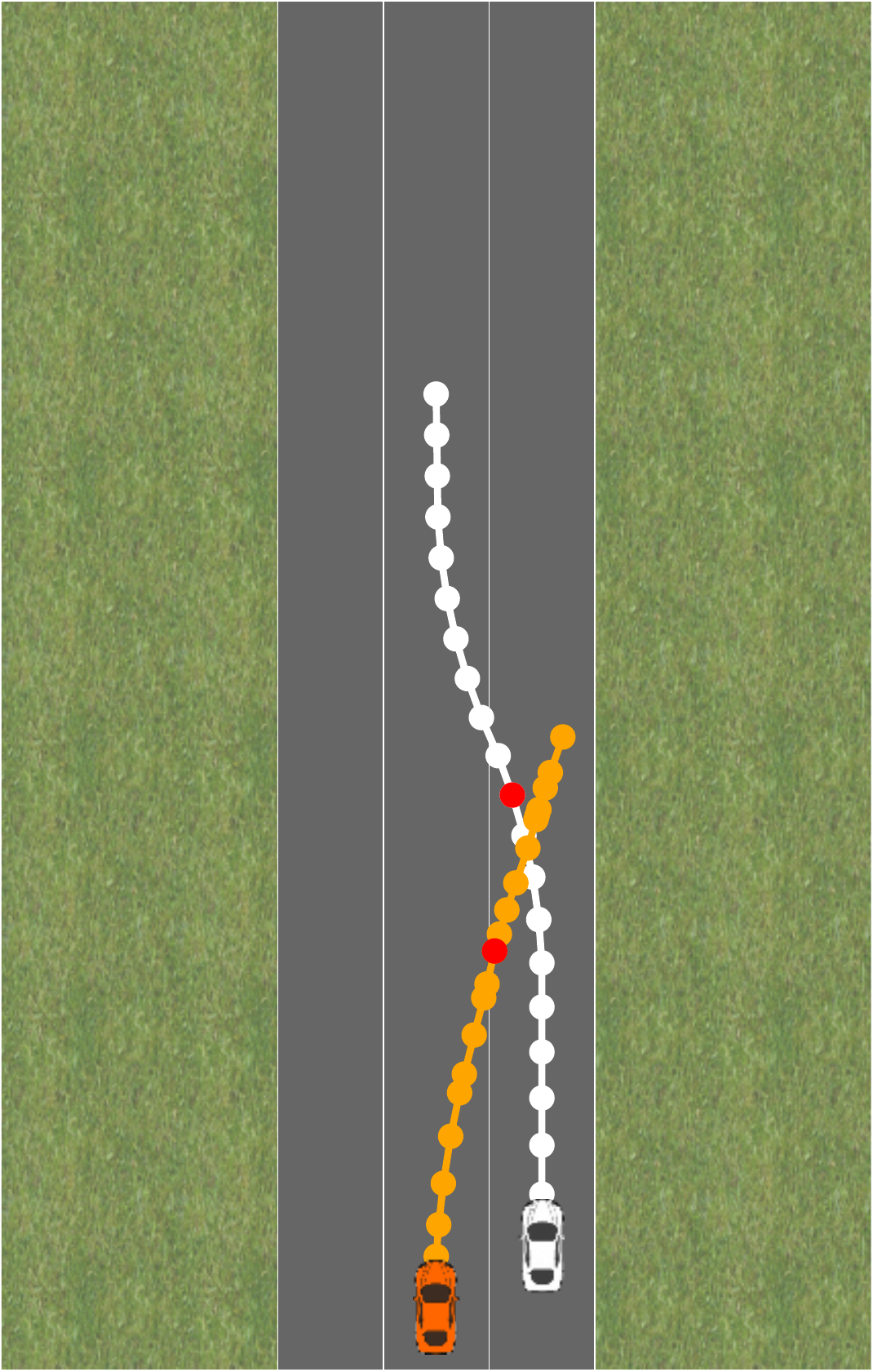} & \includegraphics[width=\DriverImageWidth]{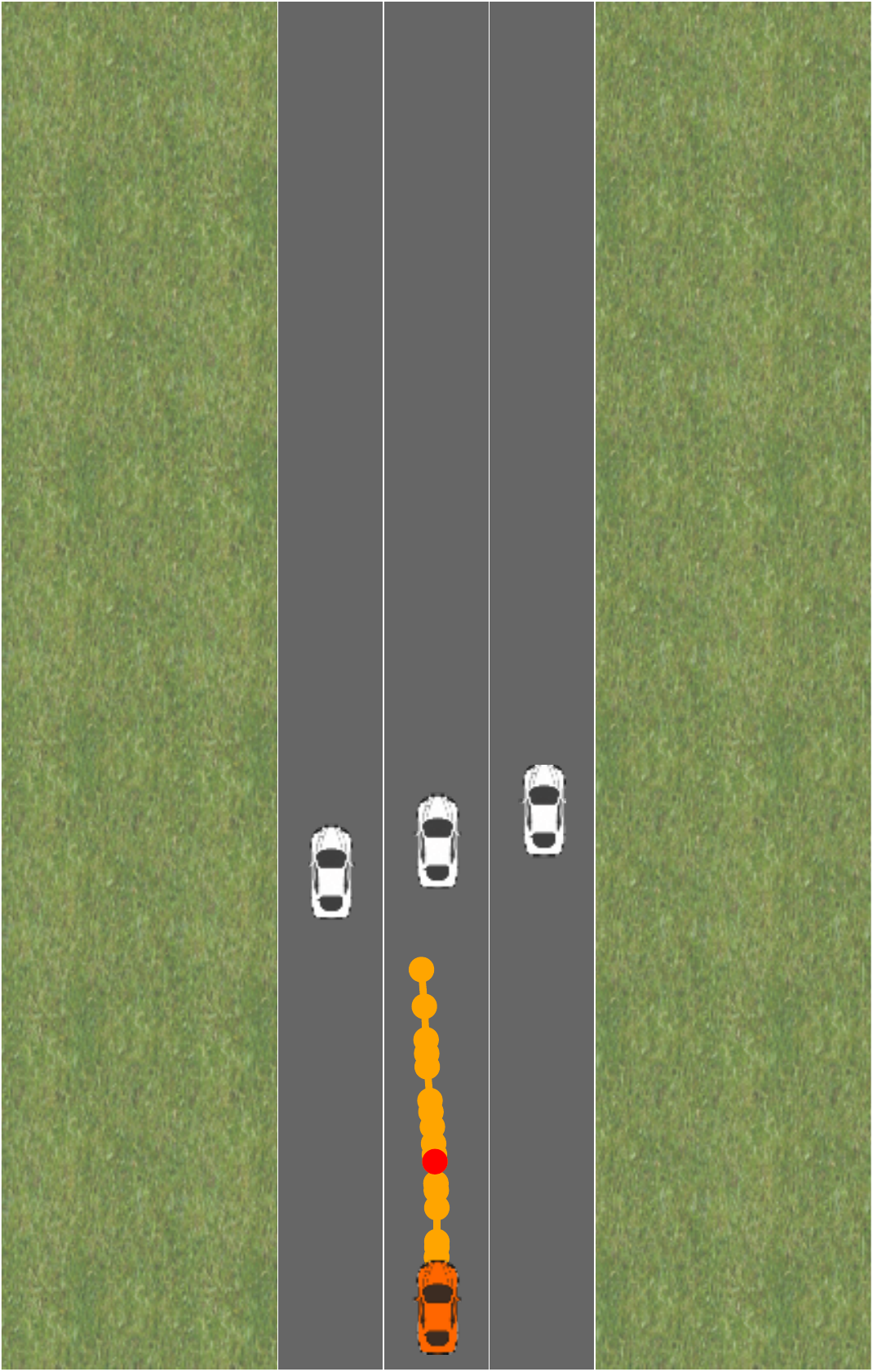}
       \end{tabular}
       \label{subfig:driver_constraint}
   }
   \caption{We want to select a controller for driving the orange car. In the base scenario (left image), the car should drive at velocity $v$, which we encode as reward. We model other driving rules, like ``usually drive in a lane'' or ``don't get too close to other cars'', either as a penalty on the reward function (in \subref{subfig:driver_reward_penalty}) or as a constraint (in \subref{subfig:driver_constraint}). In the ``different reward'' scenario (middle image), the car should pull over to the right of the street instead of keeping the velocity. If we reuse the same reward penalty for this task, the controller does not achieve the task because the penalty is too strong. So, we would have to tune the penalty for this new task, whereas the constrained controller still completes the new task safely without any tuning. In the ``different environment'' scenario (right image), the goal of driving at a target velocity remains the same, but the environment changes. In this changed environment, three vehicles block the road. Here, the penalized controller trades off the penalty with achieving high reward and tries to go through the cars to keep the velocity, which is too dangerous. On the other hand, the constraint formulation does not allow violating a constraint such as ``don't get too close to other cars'' which makes the orange car stop before the street is blocked. In \Cref{sec:driving_experiments}, we study this example in more detail.}
   \label{fig:driver_example}
\end{figure*}

\looseness -1 As a second example, consider finding safe control parameters for an autonomous car. A car manufacturer might have a set of controllers to choose from that perform a specific task, such as reaching a target destination as quickly as possible. The ideal controller achieves this task well and drives safely and comfortably. Whereas the objective -- travel time -- is easy to specify as a reward function, the constraints -- perceived safety and comfort -- may require feedback from human drivers and passengers. Similarly as in the previous example, the manufacturer's goal is to find the best, safe controller with as few trials that involve human feedback as possible. We assume that the controllers are evaluated in a simulation, so it is acceptable to evaluate an unsafe controller during training; however, the constraints have to be satisfied during deployment.

\looseness -1 In both examples, the decision-making problem is naturally characterized by an easy-to-evaluate part (the reward) and an expensive-to-evaluate part (the constraint). Additionally, we observe that constraints are more robust to changes in the environment and can be transferred to selecting controllers for different goals, in contrast to encoding the constraints as a penalty in the reward function (see \Cref{fig:driver_example}). Hence, in this paper, we study {\em learning about unknown, expensive-to-evaluate constraints}. 

\footnotetext[4]{\vspace{1cm}Example adapted from \citet{gelbart2014bayesian}.}

Specifically, we propose a two-phase approach to solving problems with unknown constraints. In the first phase, we learn to estimate the expensive-to-evaluate constraint function well enough for solving the constraint optimization problem. In the second phase, we recommend a solution. Constraint violations are allowed in the first phase, but the final recommendation has to satisfy the constraints.

\looseness -1
\paragraphsmall{Contributions.} We formalize learning about unknown constraints to find the best constrained solution as a novel linear bandit problem (\Cref{sec:cbai}) which we call \emph{\CBAILong} (\CBAIShort). We provide an instance-dependent sample complexity lower bound (\Cref{sec:lower_bound}) and propose \emph{\AlgLong} (\AlgShort), an algorithm that almost matches this lower bound (\Cref{sec:algorithm}). Our empirical evaluation shows that \AlgShort gets close to the performance of an oracle solution that has access to the true constraint function while outperforming a range of simpler baselines (\Cref{sec:bandit_experiments}). As a concrete application, we consider learning driving behavior in a simulation, where the constraints represent human preferences about driving behavior (\Cref{sec:driving_experiments}). We demonstrate empirically that \AlgShort can learn these constraints and propose heuristic variants of the algorithm that empirically improve sample efficiency. Additionally, we quantify the observation that learning driving preferences as constraints instead of rewards increases the robustness and transferability of the learned preferences.

\section{Related Work}

Learning constraints is similar to actively classifying arms as ``feasible'' or ``infeasible''; but, in contrast to typical active learning \citep{settles2009active}, we do not need to classify all arms. Instead, we only want to find the best feasible arm, which can require fewer samples than classifying all arms.
Our problem formalization as a linear \emph{multi-armed bandit} \emph{best-arm identification} problem \cite{audibert2010best} is similar to \citet{soare2014best} in the unconstrained setting, but focused on learning constraints.

\looseness -1 Much prior work on constraints in multi-armed bandits considers other notions of constraints than we do. For example, constraints holding in expectation rather than with high probability \cite{pacchiano2021stochastic}, or constraints in the form of a lower bound (threshold) on the reward \cite{locatelli2016optimal,kazerouni2017conservative,kano2019good,khezeli2020safe}.

\citet{amani2019linear} and \citet{moradipari2021safe} consider a linear bandit setting with a separate (linear) constraint function. Both differ from our work in three important ways: (1) they assume an unknown reward function whereas we assume the reward to be known, (2) they focus on cumulative regret minimization whereas we focus on best-arm identification, and (3) they require the constraints to be satisfied during exploration whereas we only require them to be satisfied for the final recommendation. These works adapt bandit algorithms based on upper confidence bounds \citep{amani2019linear} or Thompson sampling \citep{moradipari2021safe} to minimize regret in the constraint setting. To enable safe exploration, they need to assume a convex and compact set of arms; we do not require this assumption.

\citet{wang2021best} also study best-arm identification with linear constraints. In contrast to our work, they assume unknown rewards and focus on safety constraints that must be satisfied during exploration. To make this possible, they need to make more assumptions about the structure of the set of arms. In particular, they assume that the decision maker can only query in each dimension independently. Because of this, their algorithm cannot be applied to our setting without significant changes.

Our algorithm is conceptually similar to other bandit algorithms based on the principle of eliminating sub-optimal arms step-by-step. Our theoretical analysis employs similar tools as that used for best-arm-identification in unconstrained linear bandits \citep{soare2014best,fiez2019sequential}.

Some works in Bayesian optimization (BO) also study the problem of exploring to find the best constrained solution to a problem with an expensive-to-evaluate constraint function. Common approaches heuristically extend BO methods to incorporate an unknown constraint function \citep{gardner2014bayesian,hernandez2016general,perrone2019constrained}. In contrast to this line of work, we obtain sample complexity guarantees by focusing on linear constraint functions. Similar to the bandit literature, most work on BO with constraints focuses on the setting where safety constraints must hold during exploration \citep[e.g.,][]{sui2015safe}.

We apply our algorithm to the problem of learning from human preferences, which is essential for building systems with hard-to-specify goals, e.g., in robotics \citep{daniel2014active}. We use an environment by \citet{dorsa2017active} who model human preferences as rewards rather than constraints.

\section{The Linear Constrained Best-Arm Identification Problem}\label{sec:cbai}

We seek to find the best constrained solution from a discrete set of options represented by feature vectors $\Act \in \ActSet \subset \R^d$. We assume that both the known reward function and the unknown constraint function are linear in $\Act$.

\begin{definition}
A \emph{\CBAILong} (\CBAIShort) problem $\Problem = (\ActSet, \RewParam, \ConstParam, \tau)$ consists of a finite action set $\ActSet\subset\R^d$, a reward parameter $\RewParam \in \R^d$, a constraint parameter $\ConstParam \in \R^d$, and threshold $\tau \in \R$. The decision-maker knows $\ActSet$ and $\RewParam$, but not $\ConstParam$ and $\tau$. In each iteration, the decision-maker selects an arm $\Act\in\ActSet$ and observes $\ConstParam^T \Act + \noise_\Act$, where $\noise_\Act$ is sub-Gaussian noise. Their goal is to identify a constrained optimal arm
\[
\Act^* \in \argmax\limits_{\Act \in \ActSet, \ConstParam^T \Act \leq \tau} \RewParam^T \Act
\]
within as few iterations as possible.
\end{definition}
In our initial example, $\ActSet$ contains all potential cookie recipes. $\RewParam^T \Act$ encodes the amount of calories for recipe $\Act$, which the decision-maker knows and wants to minimize. $\ConstParam^T x$ encodes the unknown customer preferences, which the decision-maker must infer from as few experiments as possible. In the following, we assume w.l.o.g. $\tau = 0$ but generalization to $\tau \neq 0$ is straightforward. If $\tau$ is unknown, we can simply model it as a constant shift in the const To simplify notation, we omit $\tau$ and talk about a \CBAIShort problem $\Problem = (\ActSet, \RewParam, \ConstParam)$.

\subsection{Lower Bounds}\label{sec:lower_bound}

We first provide a lower bound on the sample complexity of solving a given \CBAIShort problem. The following theorem states how many samples are necessary to distinguish a given \CBAIShort instance from the closest instance with a different solution, which is necessary to solve an instance.

\begin{restatable}[\CBAIShort lower bound]{theorem}{LowerBound}\label{thm:lower_bound}
Assume $\noise_\Act \sim \NormalDist(0, 1)$ for all $\Act\in\ActSet$. For any \CBAIShort problem $\Problem = (\ActSet, \RewParam, \ConstParam)$, there exists another \CBAIShort problem $\Problem' = (\ActSet, \RewParam, \ConstParam')$ with the same set of actions $\ActSet$ and reward parameter $\RewParam$ but a different constraint parameter and optimal arm, such that the expected number of iterations $\StoppingTime$ needed by any allocation strategy that can distinguish between $\Problem$ and $\Problem'$ with probability at least $1-\delta$ is lower bounded as
\[
\Expectation[\StoppingTime] \geq 2 \log \left( \frac{1}{2.4\delta} \right) \max_{\Act \in \BetterActSet(\BestAct)} \frac{\| \Act \|_{ \DesMat^{-1} }^2}{(\ConstParam^T \Act)^2},
\]
\looseness -1 where $\Design$ is a probability distribution over arms which the allocation strategy follows, i.e., $\Design(\Act)$ is the probability that it pulls arm $\Act$, $\DesMat = \sum_\Act \Design(\Act) \Act \Act^T$ is the design matrix, $\BetterActSet(\BestAct) = \{ \Act' \in \ActSet | \RewParam^T \Act' \geq \RewParam^T \BestAct \}$ is the set of all arms with reward no less than $\BestAct$, the optimal arm for problem $\Problem$.
\end{restatable}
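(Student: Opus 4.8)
The plan is to use the standard change-of-measure (transportation) argument for lower bounds in pure-exploration bandit problems, adapted to the constraint-learning setting. The core idea is that to reliably solve problem $\Problem$ with probability $1-\delta$, any allocation strategy must gather enough information to distinguish $\Problem$ from any alternative problem $\Problem'$ that shares the same $\ActSet$ and $\RewParam$ but has a different constrained-optimal arm. Since the reward is known and only the constraint is learned, the confusing alternative must be constructed by perturbing $\ConstParam$ so that some arm $\Act \in \BetterActSet(\BestAct)$ — which has reward at least as high as $\BestAct$ and is therefore a more attractive candidate — flips from infeasible (under $\Problem$) to feasible (under $\Problem'$), thereby changing the optimal solution.

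First I would set up the transportation inequality. For a fixed arm $\Act$ in $\BetterActSet(\BestAct)$, the observations are $\ConstParam^T \Act + \noise_\Act$ with $\noise_\Act \sim \NormalDist(0,1)$, so the per-pull KL divergence between the $\Problem$-distribution and the $\Problem'$-distribution at arm $\Act'$ is $\tfrac12 ((\ConstParam - \ConstParam')^T \Act')^2$. Using the standard data-processing / Bretagnolle--Huber-type bound on the expected number of samples to distinguish two bandit instances with error probability $\delta$ (as in the pure-exploration literature, e.g.\ the tools cited from \citet{soare2014best,fiez2019sequential}), I would obtain
\[
\sum_{\Act'} \Expectation[N_{\Act'}] \cdot \tfrac12 \big((\ConstParam - \ConstParam')^T \Act'\big)^2 \;\geq\; \log\!\left(\frac{1}{2.4\delta}\right),
\]
where $N_{\Act'}$ is the number of pulls of arm $\Act'$. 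Writing $\Expectation[N_{\Act'}] = \Expectation[\StoppingTime]\,\Design(\Act')$ and recognizing $\sum_{\Act'} \Design(\Act')\Act'\Act'^T = \DesMat$, the left-hand side becomes $\tfrac12 \Expectation[\StoppingTime]\,\|\ConstParam - \ConstParam'\|_{\DesMat}^2$, which rearranges to $\Expectation[\StoppingTime] \geq 2\log(1/(2.4\delta)) / \|\ConstParam - \ConstParam'\|_{\DesMat}^2$.

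Next I would choose the cheapest confusing $\ConstParam'$. For a fixed target arm $\Act$ that is infeasible under $\Problem$ (so $\ConstParam^T\Act > 0$) but has high reward, the alternative must make $\Act$ feasible, i.e.\ $\ConstParam'^T\Act \leq 0$. The minimal perturbation achieving the boundary $\ConstParam'^T\Act = 0$ while minimizing $\|\ConstParam - \ConstParam'\|_{\DesMat}^2$ is a constrained quadratic minimization whose closed-form solution gives $\|\ConstParam - \ConstParam'\|_{\DesMat}^2 = (\ConstParam^T\Act)^2 / \|\Act\|_{\DesMat^{-1}}^2$. Substituting this into the previous display yields the single-arm bound $\Expectation[\StoppingTime] \geq 2\log(1/(2.4\delta))\,\|\Act\|_{\DesMat^{-1}}^2/(\ConstParam^T\Act)^2$. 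Finally, since the strategy must be able to distinguish $\Problem$ from the \emph{hardest} such alternative, I would take the maximum over all candidate arms $\Act \in \BetterActSet(\BestAct)$, giving the stated bound.

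The main obstacle I anticipate is the construction of the confusing instance and verifying it genuinely has a \emph{different optimal arm}, not merely a different feasible set: one must confirm that flipping $\Act$ to feasible actually changes the $\argmax$, which relies on $\Act$ having reward at least $\RewParam^T\BestAct$ (the defining property of $\BetterActSet(\BestAct)$) so that $\Act$ becomes a new optimal solution under $\Problem'$. A secondary technical point is justifying the solution of the quadratic minimization (it is a projection of $\ConstParam$ onto the halfspace boundary in the $\DesMat$-geometry, and Cauchy--Schwarz in the $\DesMat^{-1}$ norm gives the clean ratio), and ensuring the boundary-case feasibility inequality $\ConstParam'^T\Act \leq 0$ can be met exactly; a small $\varepsilon$-perturbation argument handles the strict-versus-weak inequality if needed.
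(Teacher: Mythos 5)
Your overall strategy coincides with the paper's: a change-of-measure argument via the transportation lemma of \citet{kaufmann2016complexity}, the identity $\sum_{\Act'}\Expectation[N_{\Act'}]\,\tfrac12((\ConstParam-\ConstParam')^T\Act')^2=\tfrac12\Expectation[\StoppingTime]\,\|\ConstParam-\ConstParam'\|_{\DesMat}^2$, and then a minimization of $\|\ConstParam'-\ConstParam\|_{\DesMat}^2$ over confusing alternatives, where the constrained quadratic program (equivalently, Cauchy--Schwarz in the $\DesMat$-geometry) yields exactly $(\ConstParam^T\Act)^2/\|\Act\|_{\DesMat^{-1}}^2$. All of those steps are correct as written and match the paper's proof.

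The gap is in which confusing instances you construct. You only build alternatives $\Problem'$ by flipping an arm $\Act\in\BetterActSet(\BestAct)$ from infeasible to feasible, which presupposes $\ConstParam^T\Act>0$. But the maximum in the stated bound ranges over all of $\BetterActSet(\BestAct)$, and that set contains $\BestAct$ itself (and possibly other feasible arms of equal reward), for which $\ConstParam^T\Act\leq 0$ and your construction produces no alternative. If the maximizer is $\BestAct$ --- i.e., the optimal arm is the one closest to the constraint boundary in the normalized sense $|\ConstParam^T\Act|/\|\Act\|_{\DesMat^{-1}}$ --- your argument only certifies the weaker bound restricted to the infeasible members of $\BetterActSet(\BestAct)$. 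The paper explicitly treats a second, symmetric case: perturb $\ConstParam$ so that $\BestAct$ becomes infeasible in $\Problem'$, which by the same projection argument costs $\|\varepsilon\|_{\DesMat}\geq|\ConstParam^T\BestAct|/\|\BestAct\|_{\DesMat^{-1}}$, and then notes that since $\BestAct\in\BetterActSet(\BestAct)$ the two cases combine into the single maximum in the theorem. Your machinery handles this missing case with no modification (the quadratic minimization is direction-symmetric), but as written the proposal does not establish the theorem when the maximum is attained at a feasible arm.
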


\looseness -1 The proof in \Cref{app:proof_lower_bound} uses a proof strategy similar to that for lower bounds for standard linear bandits \citep{soare2014best,soare2015sequential,fiez2019sequential}. We consider the log-likelihood ratio of making a series of observations in instance $\Problem$ compared to $\Problem'$ and consider how we can choose $\Problem'$ to have a different solution but a small log-likelihood ratio, i.e., the decision-maker makes similar observations as if they were in $\Problem$. In contrast to the standard linear bandit case, we need to carefully reason about the constraints when ensuring that $\Problem'$ has a different solution than $\Problem$. We distinguish the case that the solution of $\Problem$ is infeasible in $\Problem'$ and the case that an arm with larger reward is feasible in $\Problem'$ but not $\Problem$. Reasoning about these two cases yields the result.

Our lower bound has a similar form as those for best-arm identification in linear bandits \citep{soare2014best}. In particular, we have the same uncertainty term in the numerator. Instead of a suboptimality gap in the denominator, we get the distance to the constraint boundary: the problem is harder if arms are closer to the constraint boundary. However, our maximization is over individual arms instead of directions, i.e., pairs of arms, and the set $\BetterActSet(\BestAct)$ does not appear in the linear bandit case.

We want to characterize the sample complexity of different algorithms for solving the \CBAIShort problem. To this end, let us define the sample complexity of a given problem instance using the lower bound we just derived.

\begin{definition}[\CBAIShort sample complexity]\label{def:sample_complexity}
We define the \emph{sample complexity} of a \CBAIShort problem $\Problem$ as
\begin{align*}
    \Complexity(\Problem) \defeq \min_\lambda \max_{\Act \in \BetterActSet(\BestAct)} \frac{\| \Act \|_{A_{\lambda}^{-1}}^2}{(\TrueConst^T \Act)^2}
\end{align*}
\end{definition}

This describes the best sample complexity that any algorithm can achieve on \CBAIShort problem $\Problem$. It will also be helpful to have a worst-case upper bound on $\Complexity(\Problem)$ as a point of comparison, which the next proposition provides.

\begin{restatable}{proposition}{InstanceIndependentLowerBound}\label{thm:instance_independent_lower_bound}
For any \CBAIShort problem $\Problem$, we have
$\Complexity(\Problem) \leq d / (\MinConst)^2$,
where $\MinConst = \min_{\Act\in\ActSet} |\TrueConst^T \Act|$. This bound is tight, i.e, there is an instance $\Problem$, such that we have $\Complexity(\Problem) = d / (\MinConst)^2$.
\end{restatable}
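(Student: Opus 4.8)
The plan is to separate the two factors in the definition of $\Complexity(\Problem)$ and recognize what remains as the classical $G$-optimal design problem. First I would bound the constraint term uniformly. By definition $\MinConst = \min_{\Act\in\ActSet}|\ConstParam^T\Act|$, so $(\ConstParam^T\Act)^2 \ge (\MinConst)^2$ for every $\Act\in\ActSet$, and hence
\[
\frac{\|\Act\|_{A_\Design^{-1}}^2}{(\ConstParam^T\Act)^2} \;\le\; \frac{\|\Act\|_{A_\Design^{-1}}^2}{(\MinConst)^2}
\]
pointwise. Taking the maximum over $\Act\in\BetterActSet(\BestAct)$, enlarging this set to all of $\ActSet$ (using $\BetterActSet(\BestAct)\subseteq\ActSet$, which only increases the maximum), and then minimizing over the design $\Design$ gives
\[
\Complexity(\Problem) \;\le\; \frac{1}{(\MinConst)^2}\,\min_\Design \max_{\Act\in\ActSet} \|\Act\|_{A_\Design^{-1}}^2 .
\]

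The remaining quantity is exactly the value of the $G$-optimal design problem over $\ActSet$. By the Kiefer--Wolfowitz equivalence theorem, assuming $\ActSet$ spans $\R^d$ (otherwise $d$ is replaced by $\dim\Span\ActSet \le d$, which only strengthens the claim), this minimax variance equals the ambient dimension, $\min_\Design \max_{\Act\in\ActSet}\|\Act\|_{A_\Design^{-1}}^2 = d$. Substituting into the previous display yields $\Complexity(\Problem) \le d/(\MinConst)^2$, the asserted bound.

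For tightness I would exhibit an instance attaining equality. Take $\ActSet = \{\ve_1,\dots,\ve_d\}$ the standard basis, $\RewParam = \mathbf{0}$, and $\ConstParam = -\mathbf{1}$ (with threshold $\tau=0$). Every arm is feasible and has reward $0$, so $\BestAct$ is an arbitrary arm and $\BetterActSet(\BestAct) = \ActSet$; moreover $|\ConstParam^T\ve_i| = 1$ for all $i$, so $\MinConst = 1$. Since $A_\Design = \diag(\Design(\ve_1),\dots,\Design(\ve_d))$ is diagonal, we have $\|\ve_i\|_{A_\Design^{-1}}^2 = 1/\Design(\ve_i)$, and the inner maximum $\max_i 1/\Design(\ve_i)$ is minimized by the uniform design $\Design(\ve_i)=1/d$, giving value $d$. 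Hence $\Complexity(\Problem) = d = d/(\MinConst)^2$, so the bound is attained.

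The only non-routine step is recognizing the reduced objective as a $G$-optimal design problem and invoking Kiefer--Wolfowitz to pin its value to exactly $d$; the remaining parts are a pointwise bound on the constraint term and a direct diagonal computation for the lower instance. The one subtlety worth stating carefully is the spanning assumption on $\ActSet$, which is what guarantees $A_\Design$ is invertible and ties the design value to the dimension rather than merely bounding it.
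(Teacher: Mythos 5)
Your proof is correct and takes essentially the same route as the paper's: bound the denominator pointwise by $(\MinConst)^2$, reduce to the $G$-optimal design value, and invoke the Kiefer--Wolfowitz equivalence theorem to pin that value at $d$. Your tightness instance (standard basis vectors with equal constraint values) is exactly the concrete realization of the example the paper describes in words, so there is nothing substantively different to compare.
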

This results indicates that a \CBAIShort problem is harder if it has a larger dimension $d$, or if the distance of the arm that is closest to the constraint boundary ($\MinConst$) is smaller.
This worst case bound corresponds to situations where all arms are linearly independent and pulling one arm does not provide any information about any other arm.

\paragraphsmall{Oracle solution.}
We can make the definition of sample complexity more concrete by considering an oracle solution that has access to the true constraint value to select which arms to query. The oracle selects arms by explicitly minimizing $\Complexity$:
\[
\lambda^{\star} \in \argmin_\lambda \max_{\Act \in \BetterActSet(\BestAct)} \frac{\| \Act \|_{A_{\lambda}^{-1}}^2}{(\TrueConst^T \Act)^2}.
\]
The oracle prefers arms with high uncertainty (high $\| \Act \|_{A_{\lambda}^{-1}}^2$) and arms close to the constraint boundary (low $(\TrueConst^T \Act)^2$). Moreover, it focuses on reducing the uncertainty about arms that have higher reward than the true optimal arm (arms in $\BetterActSet(\BestAct)$). In \Cref{app:oracle}, we show that this oracle solution has sample complexity on order $\Complexity(\Problem)$, i.e., it is indeed optimal.

\subsection{Confidence Intervals for Linear Regression}

Our algorithms rely on high probability confidence intervals on the linear constraints constructed from observations. Hence, let us briefly review how to construct such confidence intervals from observations with sub-Gaussian noise.

Suppose, an algorithm queried a sequence of arms $\vx_\Round = (\Act_1, \dots, \Act_\Round)$. For a given $\Act_i$, it observed $\tilde{y}_i = \TrueConst^T \Act_i + \noise_{\Act_i}$, where $\TrueConst$ is the true constraint parameter, and $\noise_{\Act_i}$ is sub-Gaussian noise. We now aim to find confidence intervals such that $\TrueConst^T \Act \in [\LowerConst^\Round(\Act), \UpperConst^\Round(\Act)]$ with probability at least $1-\delta$, where $\LowerConst^\Round(\Act) = \hat{\ConstParam}^T \Act - \sqrt{\beta_\Round} \| \Act \|_{A_{\vx_\Round}^{-1}}$ and $\UpperConst^\Round(\Act) = \hat{\ConstParam}^T \Act + \sqrt{\beta_\Round} \| \Act \|_{A_{\vx_\Round}^{-1}}$. Based on these confidence intervals, we can decide whether a given arm is likely feasible or not.

\looseness -1 If the queries follow a distribution that does not depend on the observations, it is straightforward to derive confidence intervals (e.g., Chapter 20 in \citet{lattimore2020bandit}).

\begin{restatable}[]{proposition}{OLSBoundStatic}
\label{thm:ols_bound_static}
Let $\vx_\Round = (\Act_1, \dots, \Act_\Round)$ be a sequence of arms from a fixed allocation for which we have observed $\TrueConst^T \Act_i + \noise_{\Act_i}$ where $\noise_{\Act_i}$ is independent sub-Gaussian noise. If we estimate $\hat{\ConstParam}$ from the observations using least-squares regression and choose $\beta_\Round = \sqrt{2\log(|\ActSet|/\delta)}$ then we have
$
P(\exists \Act\in\ActSet: \TrueConst^T \Act \notin [\LowerConst^\Round(\Act), \UpperConst^\Round(\Act)]) \leq \delta
$.
\end{restatable}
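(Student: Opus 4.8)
The plan is to derive the confidence width $\sqrt{\beta_\Round}$ from a standard Gaussian/sub-Gaussian tail bound for a single fixed direction, and then pay a union-bound factor over the finitely many arms in $\ActSet$. Since the allocation $\vx_\Round$ is fixed in advance (independent of the observations), the least-squares estimator has a clean closed form: writing $A_{\vx_\Round} = \sum_i \Act_i \Act_i^T$ and stacking the observations, we have $\hat{\ConstParam} = A_{\vx_\Round}^{-1} \sum_i \Act_i \tilde{y}_i$. Substituting $\tilde{y}_i = \TrueConst^T \Act_i + \noise_{\Act_i}$ gives $\hat{\ConstParam} - \TrueConst = A_{\vx_\Round}^{-1} \sum_i \Act_i \noise_{\Act_i}$, so for any fixed arm $\Act$ the estimation error along $\Act$ is $(\hat{\ConstParam} - \TrueConst)^T \Act = \sum_i (\Act^T A_{\vx_\Round}^{-1} \Act_i)\, \noise_{\Act_i}$, a fixed linear combination of the independent sub-Gaussian noise terms.

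The key observation is that this is a weighted sum of independent sub-Gaussian random variables, hence itself sub-Gaussian, and I need to compute its variance proxy. The coefficient on $\noise_{\Act_i}$ is $w_i \defeq \Act^T A_{\vx_\Round}^{-1} \Act_i$, and the sum of squared coefficients telescopes neatly: $\sum_i w_i^2 = \sum_i (\Act^T A_{\vx_\Round}^{-1} \Act_i)(\Act_i^T A_{\vx_\Round}^{-1} \Act) = \Act^T A_{\vx_\Round}^{-1} \big(\sum_i \Act_i \Act_i^T\big) A_{\vx_\Round}^{-1} \Act = \Act^T A_{\vx_\Round}^{-1} \Act = \| \Act \|_{A_{\vx_\Round}^{-1}}^2$. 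So $(\hat{\ConstParam} - \TrueConst)^T \Act$ is sub-Gaussian with variance proxy $\| \Act \|_{A_{\vx_\Round}^{-1}}^2$, and the standard tail bound yields, for each fixed $\Act$,
\[
P\!\left( |(\hat{\ConstParam} - \TrueConst)^T \Act| \geq \sqrt{\beta_\Round}\, \| \Act \|_{A_{\vx_\Round}^{-1}} \right) \leq 2 \exp(-\beta_\Round / 2).
\]

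To finish, I would apply a union bound over all $\Act \in \ActSet$: the probability that \emph{some} arm has $\TrueConst^T \Act \notin [\LowerConst^\Round(\Act), \UpperConst^\Round(\Act)]$ is at most $|\ActSet| \cdot 2 \exp(-\beta_\Round / 2)$. Setting this equal to $\delta$ and solving gives $\beta_\Round = 2 \log(2|\ActSet|/\delta)$; the stated choice $\beta_\Round = \sqrt{2\log(|\ActSet|/\delta)}$ (interpreted so the width is $\sqrt{2\log(|\ActSet|/\delta)}\,\|\Act\|_{A_{\vx_\Round}^{-1}}$) matches up to the constant inside the log, and I would reconcile the exact constant with the paper's conventions. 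The main obstacle here is not conceptual but bookkeeping: the critical step is the variance-proxy computation, which hinges on the telescoping identity $\sum_i w_i^2 = \| \Act \|_{A_{\vx_\Round}^{-1}}^2$ — this is exactly what makes the width $\|\Act\|_{A_{\vx_\Round}^{-1}}$ appear, and it relies crucially on the allocation being \emph{fixed} so that the $\noise_{\Act_i}$ remain independent of the coefficients $w_i$. (If the allocation were adaptive, this independence would fail and one would need a self-normalized martingale bound instead, which is precisely what the next proposition in the paper must address.)
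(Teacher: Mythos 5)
Your proposal is correct and follows exactly the standard fixed-design argument that the paper itself defers to (it cites Chapter 20 of Lattimore and Szepesv\'ari rather than giving its own proof): express $(\hat{\ConstParam}-\TrueConst)^T\Act$ as a fixed linear combination of the independent noise terms, use the identity $\sum_i w_i^2 = \|\Act\|_{A_{\vx_\Round}^{-1}}^2$ to get the variance proxy, and union-bound over $\ActSet$. You are also right that the stated $\beta_\Round=\sqrt{2\log(|\ActSet|/\delta)}$ is a typo for the half-width $\sqrt{\beta_\Round}=\sqrt{2\log(|\ActSet|/\delta)}$ (this is the form actually used in the proof of the \AlgShort sample-complexity theorem), with the remaining factor of $2$ inside the logarithm from the two-sided bound absorbed into the constants.
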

However, in sequential decision-making we usually want to adapt our strategy after making observations. In this case, we need to be more careful in constructing confidence intervals, as observed by \citet{abbasi2011improved}. Unfortunately, the resulting confidence intervals are weaker than those for static allocations by a factor of $\sqrt{d}$.

\begin{restatable}[Theorem 2 by \citet{abbasi2011improved}]{proposition}{OLSBoundAdaptive}
\label{thm:ols_bound_adaptive}
Let $\vx_\Round = (\Act_1, \dots, \Act_\Round)$ be a sequence of points selected with a possibly adaptive strategy for which we have observed $\TrueConst^T x_i + \noise_{\Act_i}$ where $\noise_{\Act_i}$ is independent sub-Gaussian noise. Assume, that $\| \ConstParam \|_2 \leq S$ and $\| \Act \|_2 \leq L$ for all $\Act\in\ActSet$. If we estimate $\hat{\ConstParam}$ from the observations using least-squares regression, then for every $\Act \in \R^d$ and for all $t \geq 0$:
$
P(\exists \Act\in\ActSet: \TrueConst^T \Act \notin [\LowerConst^\Round(\Act), \UpperConst^\Round(\Act)]) \leq \delta
$
with $\beta_\Round = \sqrt{d\log\left( (1+tL^2/\lambda) / \delta \right)} + \sqrt{\lambda} S$.
\end{restatable}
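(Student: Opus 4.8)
This proposition restates the self-normalized confidence bound of \citet{abbasi2011improved}, so the plan is to reproduce its martingale argument. Write $A_{\vx_\Round} = \lambda\identity + \sum_{i=1}^\Round \Act_i\Act_i^T$ for the regularized design matrix and let $\hat{\ConstParam} = A_{\vx_\Round}^{-1}\sum_{i=1}^\Round \tilde y_i \Act_i$ be the ridge estimator. Substituting $\tilde y_i = \TrueConst^T\Act_i + \noise_{\Act_i}$ gives the error decomposition
\begin{equation*}
\hat{\ConstParam} - \TrueConst = A_{\vx_\Round}^{-1}\Big(\textstyle\sum_{i=1}^\Round \noise_{\Act_i}\Act_i\Big) - \lambda\, A_{\vx_\Round}^{-1}\TrueConst .
\end{equation*}
Taking the $A_{\vx_\Round}$-norm and the triangle inequality, and using $A_{\vx_\Round}^{-1}\preceq \lambda^{-1}\identity$ to control the bias term by $\lambda\|\TrueConst\|_{A_{\vx_\Round}^{-1}}\le\sqrt\lambda\,\|\TrueConst\|_2\le\sqrt\lambda\,S$, reduces everything to bounding the self-normalized martingale term $\|\mathbf{b}_\Round\|_{A_{\vx_\Round}^{-1}}$ with $\mathbf{b}_\Round=\sum_{i=1}^\Round \noise_{\Act_i}\Act_i$. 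Once this is shown to be at most $\sqrt{d\log((1+\Round L^2/\lambda)/\delta)}$ uniformly in $\Round$, a Cauchy--Schwarz step $|(\hat{\ConstParam}-\TrueConst)^T\Act|\le\|\hat{\ConstParam}-\TrueConst\|_{A_{\vx_\Round}}\,\|\Act\|_{A_{\vx_\Round}^{-1}}$ yields the claimed interval for every $\Act\in\ActSet$ simultaneously.

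The heart of the argument is a bound on $\mathbf{b}_\Round$ that holds even though each $\Act_i$ may depend on the earlier noise. Let $(\mathcal{F}_i)_{i\ge0}$ be the filtration for which $\Act_{i}$ is $\mathcal{F}_{i-1}$-measurable and $\noise_{\Act_i}$ is conditionally sub-Gaussian (with variance proxy one, matching the stated constants). For a fixed $\xi\in\R^d$ the process $\exp\big(\xi^T\mathbf{b}_\Round-\tfrac12\|\xi\|_{\bar A_\Round}^2\big)$, with $\bar A_\Round=\sum_{i\le\Round}\Act_i\Act_i^T$, is a nonnegative supermartingale by the sub-Gaussian moment-generating inequality applied conditionally at each step. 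I then integrate this family against a Gaussian prior on $\xi$ (the method of mixtures); the Gaussian integral evaluates in closed form and produces a single nonnegative supermartingale whose value is $\big(\det(\lambda\identity)/\det(A_{\vx_\Round})\big)^{1/2}\exp\big(\tfrac12\|\mathbf{b}_\Round\|_{A_{\vx_\Round}^{-1}}^2\big)$. Applying Ville's maximal inequality to this supermartingale gives, with probability at least $1-\delta$ and simultaneously for all $\Round\ge0$,
\begin{equation*}
\|\mathbf{b}_\Round\|_{A_{\vx_\Round}^{-1}}^2 \le 2\log\!\Big(\frac{\det(A_{\vx_\Round})^{1/2}\det(\lambda\identity)^{-1/2}}{\delta}\Big).
\end{equation*}

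It then remains to make the determinant ratio explicit. Since $\mathrm{trace}(A_{\vx_\Round})=\lambda d+\sum_{i=1}^\Round\|\Act_i\|_2^2\le\lambda d+\Round L^2$, the AM--GM inequality on the eigenvalues gives $\det(A_{\vx_\Round})\le(\lambda+\Round L^2/d)^d$, hence $\det(A_{\vx_\Round})^{1/2}\det(\lambda\identity)^{-1/2}\le(1+\Round L^2/(\lambda d))^{d/2}\le(1+\Round L^2/\lambda)^{d/2}$. Substituting into the self-normalized bound and combining with the bias term yields $\|\hat{\ConstParam}-\TrueConst\|_{A_{\vx_\Round}}\le\beta_\Round$ for the stated $\beta_\Round$, and the Cauchy--Schwarz reduction from the first paragraph finishes the proof. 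The one genuinely delicate point, and the reason the static union-bound argument of \Cref{thm:ols_bound_static} does not apply, is the supermartingale/method-of-mixtures step: because $\Act_i$ is chosen adaptively, $\mathbf{b}_\Round$ is not a sum of independent terms, and it is precisely the mixture construction together with Ville's inequality that simultaneously handles the adaptivity and delivers the uniform-in-$\Round$ guarantee, at the cost of the extra $\sqrt d$ factor relative to the static bound.
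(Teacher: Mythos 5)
Your proposal is correct: the paper does not reprove this proposition but imports it by citation from \citet{abbasi2011improved}, and your argument is precisely the self-normalized martingale / method-of-mixtures proof (Theorems 1--2 of that reference), including the ridge-estimator decomposition, the $\sqrt{\lambda}S$ bias bound, Ville's inequality, and the determinant-trace bound that yields the stated $\beta_\Round$. No gaps; the only implicit convention, which you correctly flag, is that the noise has variance proxy one and that ``least-squares regression'' here means the $\lambda$-regularized estimator.
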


\subsection{Algorithms Using Static Confidence Intervals}\label{sec:algorithm}

To design an algorithm for solving \CBAIShort problems, we need to decide (1) which arms to pull during exploration and (2) when we can stop the algorithm and return the correct arm with high probability. First, let us address the second question and then get back to the first one.

\paragraphsmall{Stopping condition.}
Using the past observations, we can define confidence intervals for the constraint value of each arm. Let $\LowerConst^\Round(\Act)$ and $\UpperConst^\Round(\Act)$ be such that we know with high probability (w.h.p.) $\TrueConst^T \Act \in [\LowerConst^\Round(\Act), \UpperConst^\Round(\Act)]$. Now we can also determine w.h.p.\ that all arms with $\LowerConst^\Round(\Act) > 0$ are infeasible, and all arms with $\UpperConst^\Round(\Act) \leq 0$ are feasible. Moreover, we can identify suboptimal arms by considering $\bar{r} = \max_{\UpperConst^\Round(\Act) \leq 0} \RewParam^T \Act$. The solution to this optimization problem are the highest-reward arms that are feasible w.h.p. Therefore, all arms with reward less than $\bar{r}$ are clearly suboptimal. Combining these observations, we can define a set of arms that we are uncertain about, i.e., that could still be optimal:
\[
\UncertainSet_\Round = \{ \Act\in\ActSet | \LowerConst^\Round(\Act) \leq 0 \text{ and } \UpperConst^\Round(\Act) > 0 \text{ and } \RewParam^T \Act > \bar{r} \}
\]
Note, that if $\UncertainSet_\Round$ is empty, we can stop and return an arm in $\argmax_{\UpperConst^\Round(\Act) \leq 0} \RewParam^T \Act$. This arm will be optimal w.h.p.

\paragraphsmall{Arm selection criterion.} In each iteration, we have to decide which arm to pull. We could, e.g., combine the above stopping condition with querying uniformly random arms. This algorithm would return the correct optimal arm with high probability. However, random querying will usually not be the most sample efficient approach. Another natural approach is to select the arms that we are most uncertain about, which is sometimes called \emph{uncertainty sampling}. We could, e.g., choose a fixed allocation
\[
\Design^{\mathrm{G}} \in \argmin_{\Design} \max_{\Act\in\ActSet} \| \Act \|_{\DesMat^{-1}}.
\]
This approach is also called \emph{G-Allocation} in the experimental design literature. We show in \Cref{app:g-allocation} that G-Allocation matches the worst-case lower bound in \Cref{thm:instance_independent_lower_bound}. However, we can do better by {\em focusing on arms that we cannot yet exclude as being certainly feasible, infeasible, or suboptimal}. Concretely, we modify G-Allocation to reduce uncertainty only about arms in $\UncertainSet_\Round$:
\[
\Design^{\mathrm{ACOL}} \in \argmin_{\Design} \max_{\Act\in\UncertainSet_\Round} \| \Act \|_{\DesMat^{-1}}
\]

\paragraphsmall{Rounding.} All algorithms implementing a static allocation require a rounding procedure to translate an allocation $\Design$ into a finite sequence of arms $\Act_1, \dots, \Act_n$. The experimental design literature provides various efficient rounding procedures that are $\varepsilon$-approximate. We use a standard procedure described in Chapter 12 of \citet{friedrich2006optimal}.

\paragraphsmall{\AlgLong (\AlgShort).} \Cref{adaptive_algorithm} shows the full algorithm we call \emph{\AlgLong} (\AlgShort). The algorithm proceeds in rounds. In each round $\Round$ it pulls arms to reduce the uncertainty about arms in $\UncertainSet_\Round$, then updates $\UncertainSet_\Round$, and decides if it can stop and return a recommendation. The round length $N_t$ is chosen carefully to allow us to provide a tight sample complexity result.

\begin{algorithm}[t]
\caption{\AlgLong (\AlgShort).}
\label{adaptive_algorithm}
\begin{algorithmic}[1]
    \STATE \textbf{Input:} significance $\delta$
    \STATE $\UncertainSet_1 \gets \ActSet$ \hfill (uncertain arms)
    \STATE $\FeasibleSet_1 \gets \emptyset$ \hfill (feasible arms)
    \STATE $\Round \gets 1$ \hfill (round)
    \WHILE{$\UncertainSet_\Round \neq \emptyset$}
        \STATE $\delta_\Round \gets \delta^2 / \Round^2$
        \STATE $\Design^*_\Round \gets \argmin_{\Design} \max_{\Act \in \UncertainSet_\Round} \| \Act \|_{\DesMat^{-1}}^2$
        \STATE $\rho^*_\Round \gets \min_{\Design} \max_{\Act \in \UncertainSet_\Round} \| \Act \|_{\DesMat^{-1}}^2$
        \STATE $N_\Round \gets \max \left\{ \left\lceil 2^{2\Round + 3} \log\left(\frac{|\ActSet|}{\delta_t}\right) (1+\varepsilon) \rho^*_\Round \right\rceil, r(\varepsilon) \right\}$
        \STATE $\vx_{N_\Round} \gets \mathtt{Round}(\Design_\Round^*, N_\Round)$
        \STATE Pull arms $\Act_1, \dots, \Act_{N_\Round}$ and observe constraint values
        \STATE $\Round \gets \Round + 1$
        \STATE Update $\hat{\ConstParam}_\Round$ and $A$ based on new data
        \STATE $\LowerConst^\Round(\Act) \gets \hat{\ConstParam}_\Round^T \Act - \sqrt{\beta_t} \| \Act \|_{A^{-1}}$ for all arms $\Act$
        \STATE $\UpperConst^\Round(\Act) \gets \hat{\ConstParam}_\Round^T \Act + \sqrt{\beta_t} \| \Act \|_{A^{-1}}$ for all arms $\Act$
        \STATE $\FeasibleSet_\Round \gets \FeasibleSet_{\Round-1} \cup \{ \Act | \UpperConst^\Round(\Act) \leq 0 \}$
        \STATE $\bar{r} \gets \max_{\Act\in\FeasibleSet_\Round} \RewParam^T \Act$
        \STATE $\UncertainSet_\Round \gets \UncertainSet_{\Round-1} \setminus \{ \Act | \LowerConst^\Round(\Act) > 0 \} \setminus \{ \Act | \UpperConst^\Round(\Act) \leq 0 \}$ \\
        \hspace{1.7cm}$\setminus \{ \Act | \RewParam^T \Act < \bar{r} \}$
    \ENDWHILE
    \STATE \textbf{return} $\Act^* \in \argmax_{\Act\in\FeasibleSet_\Round} \RewParam^T \Act$
\end{algorithmic}
\vspace{0.5cm} %
\end{algorithm}

The following theorem -- the main theoretical result of our paper -- establishes that \AlgShort returns the correct optimal solution to any \CBAIShort problem and provides an upper bound on the number of samples necessary.

\begin{restatable}[\AlgShort sample complexity]{theorem}{AdaptiveAlgorithmComplexity}
\label{thm:adaptive_algorithm_complexity}
Assume \Cref{adaptive_algorithm} is implemented with an $\varepsilon$-approximate rounding strategy. Then, after $N$ iterations the algorithm returns an optimal arm with probability at least $1-\delta$, and we have:
\begin{align*}
N &\leq 8 \log\left( \frac{|\ActSet|\bar{t}^2}{\delta^2} \right) (1+\varepsilon) \sum_{\Round=1}^{\bar{t}} \min_{\lambda} \max_{\Act \in \UncertainSet_\Round} \frac{\| \Act \|_{\DesMat^{-1}}^2}{(\TrueConst^T \Act)^2} + \bar{t} \\
&\leq 8 \log\left( \frac{|\ActSet|\bar{t}^2}{\delta^2} \right) (1+\varepsilon) \bar{t} \WeakComplexity(\Problem) + \bar{\Round}
\end{align*}
where $\WeakComplexity(\Problem) = \min_{\Design} \max_{\Act \in \ActSet} \| \Act \|_{\DesMat^{-1}}^2 / (\TrueConst^T \Act)^2$, and $\bar{t} = \left\lceil - \log_2 \MinConst \right\rceil$. Moreover, $\WeakComplexity(\Problem) \leq d/(\MinConst)^2$
\end{restatable}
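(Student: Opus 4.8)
The plan is to condition on a single ``good event'' under which every confidence interval built by the algorithm is valid, and then argue both correctness and the sample-complexity bound deterministically on that event. The key structural observation is that within any fixed round $\Round$ the allocation $\Design^*_\Round$ depends only on $\UncertainSet_\Round$, hence only on the history of previous rounds; conditioned on that history the round-$\Round$ observations are non-adaptive, so \Cref{thm:ols_bound_static} applies with confidence parameter $\delta_\Round=\delta^2/\Round^2$ and certifies $\TrueConst^T\Act\in[\LowerConst^\Round(\Act),\UpperConst^\Round(\Act)]$ for all $\Act$. Let $\GoodEvent$ be the event that this holds in every round. Since $\sum_{\Round\ge 1}\delta_\Round=\delta^2\pi^2/6\le\delta$, a union bound over rounds gives $P(\GoodEvent)\ge 1-\delta$. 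On $\GoodEvent$ the algorithm is correct: every arm placed in $\FeasibleSet_\Round$ is truly feasible, so $\bar r\le\RewParam^T\Act^*$ is a valid reward lower bound; consequently the optimal arm $\Act^*$ is never removed as infeasible (it is feasible) nor as suboptimal (its reward is at least $\bar r$), so it can leave $\UncertainSet$ only by being certified feasible. Hence at termination ($\UncertainSet_\Round=\emptyset$) we have $\Act^*\in\FeasibleSet_\Round$, and the returned $\argmax_{\Act\in\FeasibleSet_\Round}\RewParam^T\Act$ is optimal.

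To bound the number of rounds I would track how fast the confidence widths shrink. With an $\varepsilon$-approximate rounding and $N_\Round$ pulls of $\Design^*_\Round$ we have $\|\Act\|_{A^{-1}}^2\le(1+\varepsilon)\rho^*_\Round/N_\Round$ for every $\Act\in\UncertainSet_\Round$, so by the choice $N_\Round\ge 2^{2\Round+3}\log(|\ActSet|/\delta_\Round)(1+\varepsilon)\rho^*_\Round$ the half-width satisfies $\sqrt{\beta_\Round}\|\Act\|_{A^{-1}}\lesssim 2^{-(\Round+1)}$ (the $\log$ factors cancel). On $\GoodEvent$ an arm whose $|\TrueConst^T\Act|$ exceeds the full width is certified feasible or infeasible, so every arm surviving round $\Round$ obeys $|\TrueConst^T\Act|\lesssim 2^{-\Round}$. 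Since $|\TrueConst^T\Act|\ge\MinConst$ for all arms, no arm can survive once $2^{-\Round}\le\MinConst$; thus the algorithm stops after at most $\bar t=\lceil-\log_2\MinConst\rceil$ rounds (the additional suboptimality elimination only shrinks $\UncertainSet_\Round$ further and cannot increase this count).

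To bound $N=\sum_\Round N_\Round$ I would turn each $N_\Round$ into the per-round sample-complexity term. The survival guarantee gives $(\TrueConst^T\Act)^2\lesssim 2^{-2\Round}$ for $\Act\in\UncertainSet_\Round$, hence $2^{2\Round}\rho^*_\Round=2^{2\Round}\min_\Design\max_{\Act\in\UncertainSet_\Round}\|\Act\|_{\DesMat^{-1}}^2\le c\,\min_\Design\max_{\Act\in\UncertainSet_\Round}\|\Act\|_{\DesMat^{-1}}^2/(\TrueConst^T\Act)^2$ for a constant $c$. Plugging this into $N_\Round\le 2^{2\Round+3}\log(|\ActSet|/\delta_\Round)(1+\varepsilon)\rho^*_\Round+1$ and using $\log(|\ActSet|/\delta_\Round)\le\log(|\ActSet|\bar t^2/\delta^2)$ for $\Round\le\bar t$ gives the first displayed bound after summing, with the $+\bar t$ collecting the ceiling/rounding remainders. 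The second bound follows from monotonicity of $\min_\Design\max_{\Act\in S}(\cdot)$ in $S$: since $\UncertainSet_\Round\subseteq\ActSet$, every summand is at most $\WeakComplexity(\Problem)$, producing the factor $\bar t\,\WeakComplexity(\Problem)$. Finally $\WeakComplexity(\Problem)\le d/(\MinConst)^2$ is obtained exactly as in \Cref{thm:instance_independent_lower_bound}, by bounding $(\TrueConst^T\Act)^2\ge(\MinConst)^2$ and using that the G-optimal value $\min_\Design\max_{\Act\in\ActSet}\|\Act\|_{\DesMat^{-1}}^2$ is at most $d$ (Kiefer--Wolfowitz).

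The main obstacle is reconciling the adaptively chosen designs with the static confidence intervals and, relatedly, making the round schedule self-consistent: $N_\Round$ is set through $\rho^*_\Round$, a quantity that does not involve the unknown $\TrueConst$, whereas the target bound is phrased through the ratio $\|\Act\|_{\DesMat^{-1}}^2/(\TrueConst^T\Act)^2$. Bridging the two relies entirely on the geometric width-shrinkage argument that pins down $|\TrueConst^T\Act|\lesssim 2^{-\Round}$ for surviving arms; getting the constants right there so that the stated constant and the clean round count $\bar t=\lceil-\log_2\MinConst\rceil$ come out, while ensuring the data-dependent number of rounds still costs only the benign $\sum_\Round\delta_\Round\le\delta$ in the union bound, is the delicate part.
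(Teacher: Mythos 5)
Your proposal is correct and follows essentially the same route as the paper's proof: a good event under which confidence widths shrink geometrically via the round lengths $N_\Round$ and \Cref{thm:ols_bound_static}, termination at $\bar t=\lceil-\log_2\MinConst\rceil$ because surviving arms must satisfy $|\TrueConst^T\Act|\lesssim 2^{-\Round}$, and the same substitution of $(\TrueConst^T\Act)^2\le 2^{-2\Round}$ into $\sum_\Round 2^{2\Round}\rho^*_\Round$ followed by $\UncertainSet_\Round\subseteq\ActSet$ and Kiefer--Wolfowitz. The only cosmetic difference is that you control the failure probability by a direct union bound $\sum_\Round\delta^2/\Round^2\le\delta$ (valid for $\delta\le 6/\pi^2$), whereas the paper bounds the conditional product $\prod_\Round(1-\delta_\Round)\ge\sin(\pi\delta)/(\pi\delta)\ge 1-\delta$ in its \Cref{lemma:delta_bound}.
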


We prove the theorem in \Cref{app:proofs}. The key step uses \Cref{thm:ols_bound_static} to show that the confidence intervals shrink exponentially. This implies that in a logarithmic number of rounds, the largest confidence interval will be less than $\MinConst$; and once this is the case, $\UncertainSet_\Round$ is empty and the algorithm returns the correct solution. Combining this with the round lengths of $N_t$ allows us to prove the result.

The sample complexity of \AlgShort is of order $\WeakComplexity(\Problem)$, except for logarithmic factors. Also, we show that $\WeakComplexity \leq d/(\MinConst)^2$, so the bound matches the lower bound of \Cref{thm:instance_independent_lower_bound} for worst-case instances, but it is much tighter for benign instances. In particular, the bound in \Cref{thm:adaptive_algorithm_complexity} contains the same min-max problem as the instance dependent sample complexity $\Complexity(\Problem)$, only with the maximization being over different sets, namely $\UncertainSet_\Round$ instead of $\BetterActSet(\BestAct)$. Note, that we cannot expect a practical algorithm to only explore arms in $\BetterActSet(\BestAct)$ because we do not know $\BestAct$ a priori. Instead, \AlgShort explores in $\UncertainSet_\Round$, a conservative estimate of $\BetterActSet(\BestAct)$ that shrinks over time given the knowledge so far. \Cref{thm:adaptive_algorithm_complexity} does not exactly match the instance dependent lower bound, but the difference only depends on how well $\UncertainSet_\Round$ approximates the set of relevant arms.

\subsection{Algorithms Using Adaptive Confidence Intervals}

\begin{algorithm}[t]
\caption{\AlgGreedyLong (\AlgGreedyShort).}
\label{fully_adaptive_algorithm}
\begin{algorithmic}[1]
    \STATE \textbf{Input:} $\beta_t$, $\lambda$
    \STATE initialize $\hat{S}_1(\hat{\ConstParam})$, $\UncertainSet_1 \gets \ActSet$, $\FeasibleSet_1 \gets \emptyset$, $A \gets \lambda I$, $\Round \gets 1$
    \WHILE{$\UncertainSet_\Round \neq \emptyset$}
        \STATE $\Act^* \gets \max_{\Act \in \UncertainSet_\Round} \| \Act \|_{A^{-1}}^2$
        \STATE Pull arm $\Act^*$ and observe constraint value
        \STATE $\Round \gets \Round + 1$, $A \gets A + \Act \Act^T$
        \STATE $\LowerConst^\Round(\Act) \gets \hat{\ConstParam}_\Round^T \Act - \sqrt{\beta_t} \| \Act \|_{A^{-1}}$ for all arms $\Act$
        \STATE $\UpperConst^\Round(\Act) \gets \hat{\ConstParam}_\Round^T \Act + \sqrt{\beta_t} \| \Act \|_{A^{-1}}$ for all arms $\Act$
        \STATE $\FeasibleSet_\Round \gets \FeasibleSet_{\Round-1} \cup \{ \Act | \UpperConst^\Round(\Act) \leq 0 \}$
        \STATE $\bar{r} \gets \max_{\Act\in\FeasibleSet_\Round} \RewParam^T \Act$
        \STATE $\UncertainSet_\Round \gets \UncertainSet_{\Round-1} \setminus \{ \Act | \LowerConst^\Round(\Act) > 0 \} \setminus \{ \Act | \UpperConst^\Round(\Act) \leq 0 \}$ \\
        \hspace{1.7cm}$\setminus \{ \Act | \RewParam^T \Act < \bar{r} \}$
    \ENDWHILE
    \STATE \textbf{return} $\Act^* \in \argmax_{\Act\in\FeasibleSet_\Round} \RewParam^T \Act$
\end{algorithmic}
\end{algorithm}

Whereas the algorithm we just introduced comes with a strong sample complexity guarantee, it is impractical in various ways, primarily because of the round-based structure. In particular, the algorithm requires a rounding procedure to determine a sequence of actions; it then follows this sequence for a predefined round length and can not stop before finishing a round. Also, in between rounds, the algorithm discards all previously made observations, which is necessary to apply \Cref{thm:ols_bound_static}.

Next, we present an alternative version of this algorithm that uses the adaptive confidence intervals of \Cref{thm:ols_bound_adaptive}. This allows us to remove the round-based structure in favor of a greedy algorithm that does not have the same limitation. This algorithm, which we call \emph{\AlgGreedyLong} (\AlgGreedyShort), is shown in \Cref{fully_adaptive_algorithm}. Unfortunately, for \AlgGreedyShort, we can only provide significantly weaker sample complexity guarantees; but we find it performs well empirically.

Since the adaptive confidence intervals hold for all $\Round > 0$ simultaneously, we can now check the stopping condition after each sample. Instead of determining a static allocation that reduces uncertainty about the uncertain arms, we now greedily select the arm to pull that reduces uncertainty within $\UncertainSet_\Round$ the most.  Thanks to \Cref{thm:ols_bound_adaptive}, this algorithm still stops and returns the correct solution. However, it achieves worse sample complexity due to the additional factor of $\sqrt{d}$ in \Cref{thm:ols_bound_adaptive}.

\paragraphsmall{Heuristic modifications.}
There is a variety of heuristic modifications that we can make to \AlgGreedyShort to improve its practical performance at the cost of losing some theoretical guarantees. First, we could use a different query rule within the set of uncertain arms, such as uniformly random querying, which reduces computational cost. Second, the $\beta_t$ resulting from \Cref{thm:ols_bound_adaptive} tends to be very large. In practice, we can try to tune $\beta_t$ to get good confidence intervals that are much smaller than the ones suggested by the theory. Third, we can turn the algorithm into an ``anytime'' algorithm by defining a recommendation rule, such as recommending the best arm that is certainly feasible. Then, we can stop the algorithm after an a priori unknown budget of queries and receive a best guess for the optimal arm.

\section{Experiments}\label{sec:experiments}

\begin{figure*}
    \centering
    \subfigure[Irrelevant dimensions]{
        \includegraphics[width=0.235\linewidth]{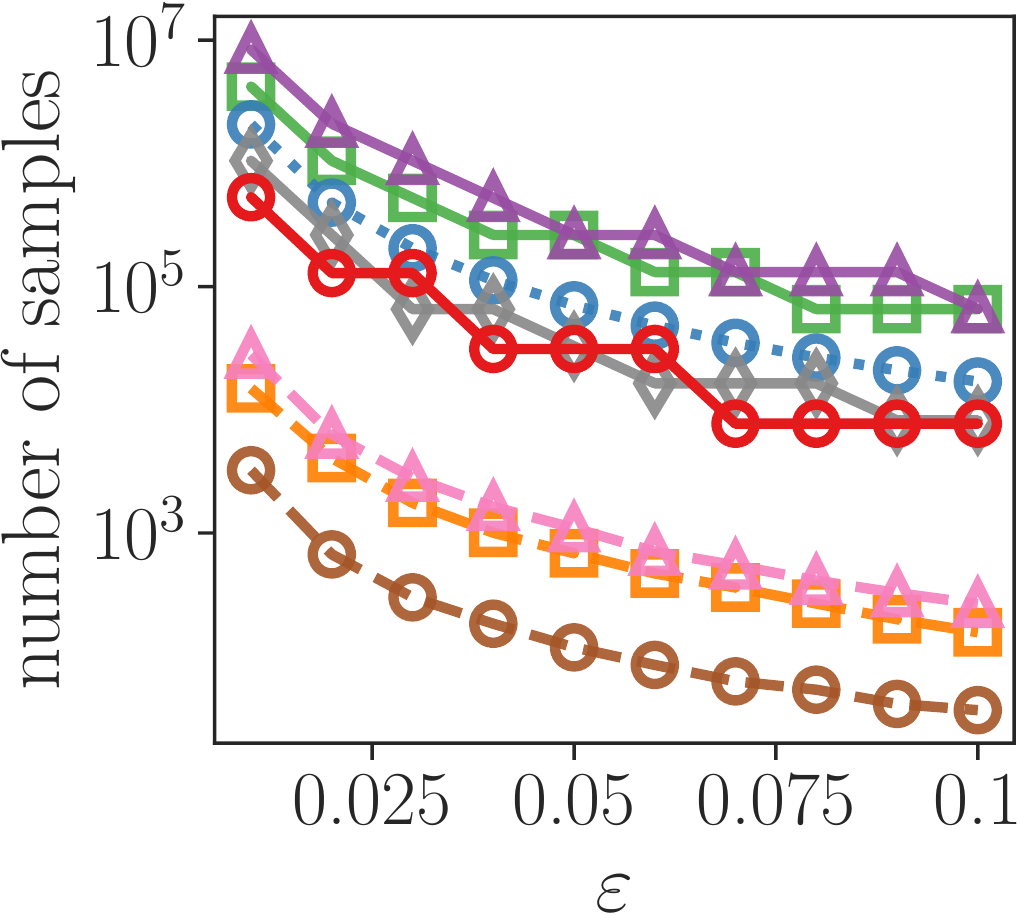}\hspace{0.1em}
        \includegraphics[width=0.235\linewidth]{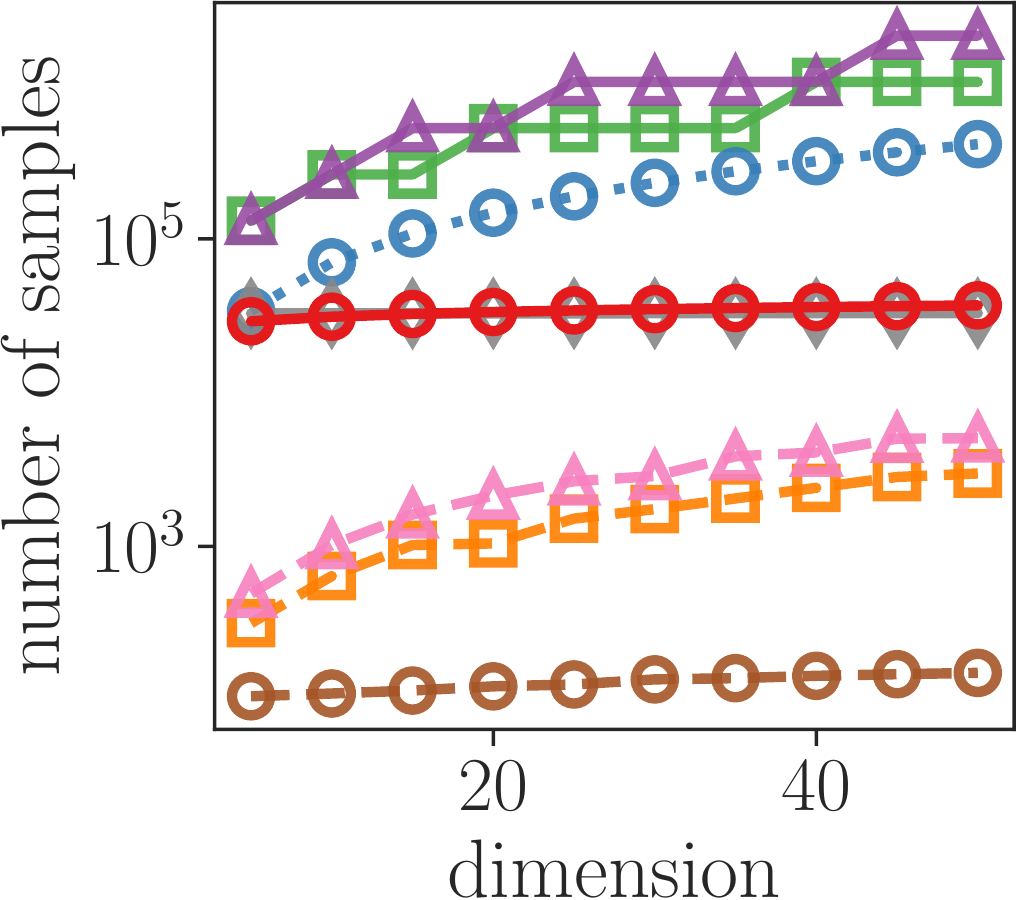}
        \label{subfig:results_irrelevant_dimensions}
    }\hfill
    \subfigure[Unit sphere]{
        \includegraphics[width=0.235\linewidth]{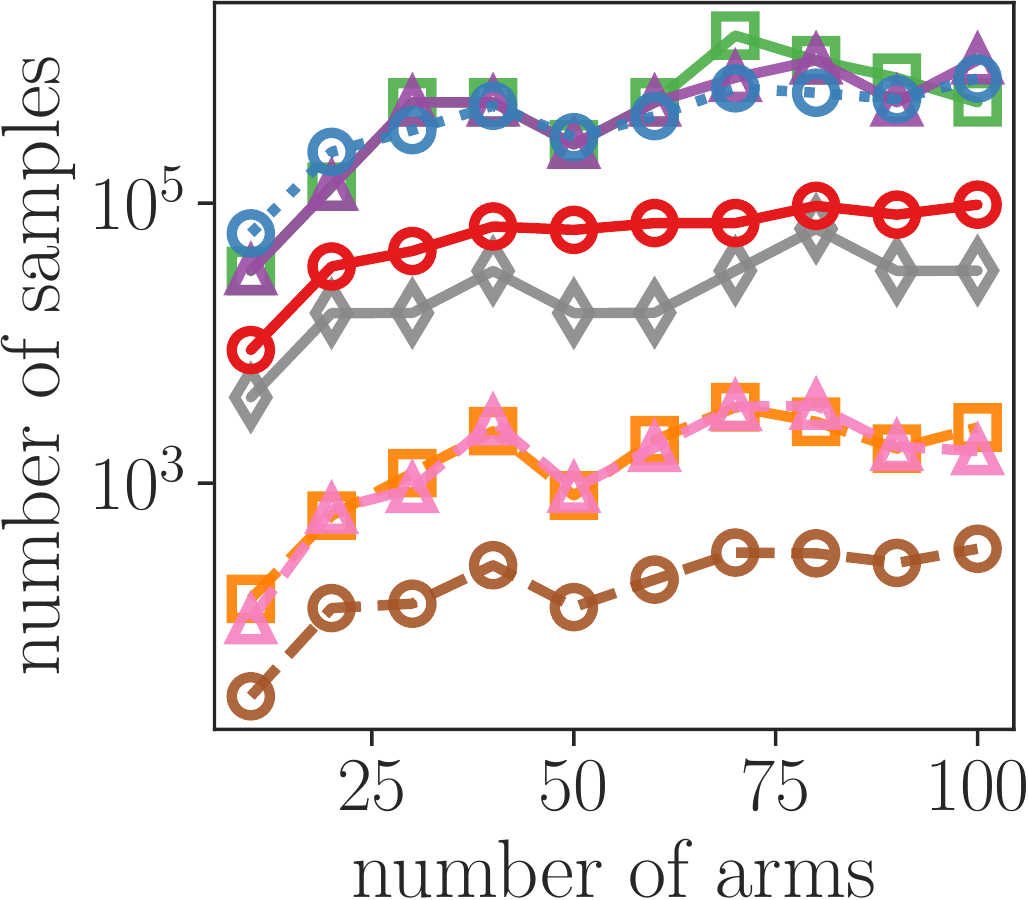}\hspace{0.1em}
        \includegraphics[width=0.235\linewidth]{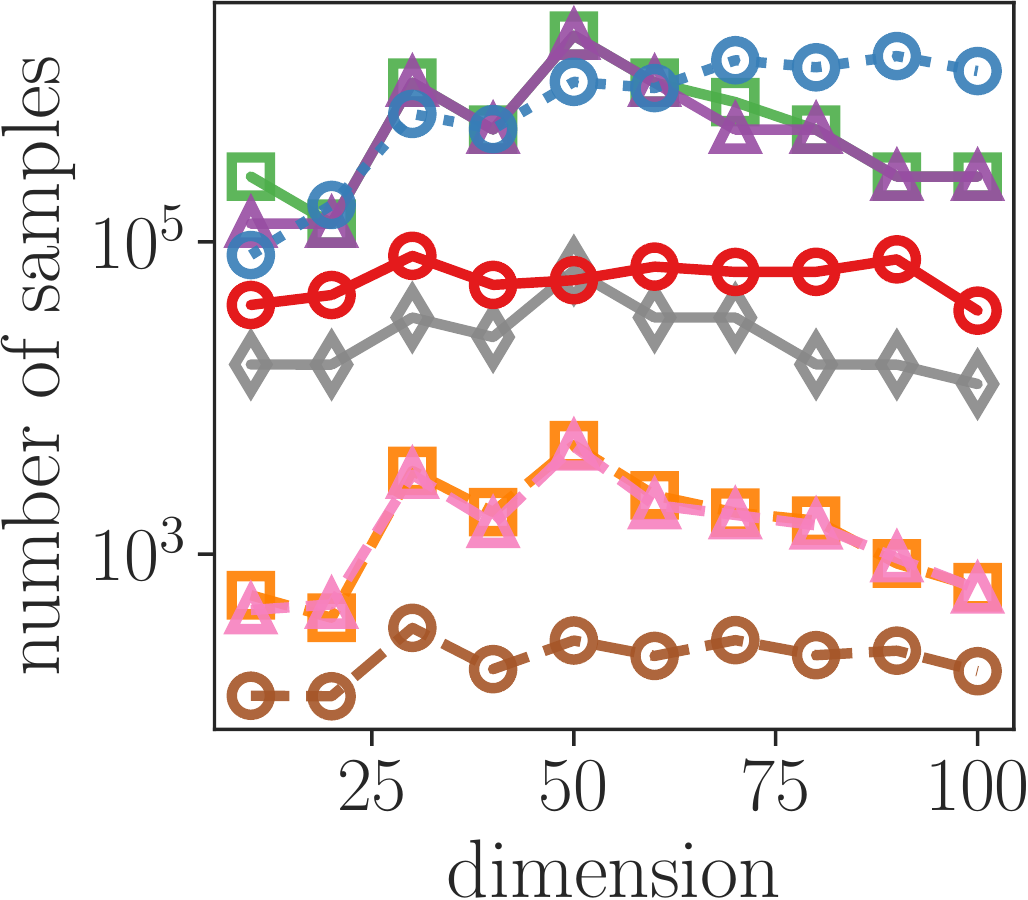}
        \label{subfig:results_unit_sphere}
    }
    {\small
    \begin{tabular}{llllllll}
        \legendUniform & Uniform &
        \legendGAllocation & G-Allocation &
        \legendAdaptiveStatic & \AlgShort (ours) &
        \legendOracle & Oracle \\
        \legendUniformAllTuned & Adaptive Uniform (tuned) &
        \legendMaxvarAllTuned & Greedy MaxVar (tuned) &
        \legendAdaptiveTuned & \AlgGreedyShort (tuned) &
        \legendAdaptive & \AlgGreedyShort (theory)
    \end{tabular}
    }
    \caption{For both synthetic instances, we plot the median number of iterations for finding the constrained optimal solution as a function of different parameters of the problem instance. All methods return the correct constrained optimal solution. For ``irrelevant dimensions'', we vary $\varepsilon$ for fixed $d=10$, and $d$ for fixed $\varepsilon = 0.05$. For ``unit sphere'', we vary $n$ for fixed $d=30$, and $d$ for fixed $n=30$. Note that for ``unit sphere'', the instances are randomly sampled for each random seed, whereas the ``irrelevant dimensions'' instance stays the same. For legibility, we only show the median computed over 30 random seeds, and omit a few of the baselines we evaluated. For plots with all baselines that include confidence intervals, see \Cref{app:more_results}. Overall, \AlgShort is the most sample efficient approach of all algorithms that provide theoretical guarantees. In rare cases, it even needs fewer samples than the oracle. However, this is mostly an artifact of both algorithms using slightly different round lengths (cf. \Cref{app:experiment_details}). By ``tuning'' $\beta_t$, we can gain several orders of magnitude in sample efficiency at the cost of theoretical guarantees. \AlgGreedyShort remains the most sample efficient among these tuned approaches.
    }
    \label{fig:bandit_results}
\end{figure*}

\looseness -1 
We perform three experiments. First, in \Cref{sec:bandit_experiments}, we consider synthetic \CBAIShort instances to evaluate \AlgShort and compare it to natural baselines. Additionally, we investigate the effect of various heuristic modifications to the algorithm. Second, in \Cref{sec:regret_compare_experiment}, we compare \AlgShort to algorithms that safely minimize regret. And, third, in \Cref{sec:driving_experiments}, we consider learning constraints that represent human preferences in a simulated driving scenario. This experiment illustrates how to model preference learning problems as \CBAIShort problems. In the driving simulation, we also demonstrate the benefits of learning constraints in terms of robustness and transferability.

We provide more details on the experiments in \Cref{app:experiment_details} and we provide the full source code to reproduce our experiments.\footnote{\codeurl} For all experiments we use a significance of $\delta = 0.05$ and, if not stated differently, observations have Gaussian noise with $\sigma = 0.05$.

\subsection{Synthetic Experiments}\label{sec:bandit_experiments}

We consider two synthetic \CBAIShort instances and a range of baselines and multiple variants of \AlgShort / \AlgGreedyShort. 

\paragraphsmall{Instance 1 -- Irrelevant dimensions.}
First, we consider \CBAIShort instances which contain a number of dimensions that are irrelevant for learning the correct constraint boundary. The problems have dimension $d$, and $d+1$ arms: $\Act_1, \dots, \Act_{d+1}$. For each $i=1, \dots, d-1$, we have $\Act_i = \mathbf{e}_i$, whereas $\Act_d = (1-\varepsilon) \mathbf{e}_d$, and $\Act_{d+1} = (1+\varepsilon) \mathbf{e}_d$, for some $\varepsilon > 0$. $\mathbf{e}_i$ denotes the $i$-th unit vector. The reward and constraint parameter are both $\RewParam = \ConstParam = \mathbf{e}_d$. We define a threshold $\tau=1$; hence, $\Act_1, \dots, \Act_{d-1}$ are feasible but suboptimal, $\Act_d$ is optimal and $\Act_{d+1}$ is infeasible. Importantly, the arms $\Act_1, \dots, \Act_{d-1}$ are ``irrelevant'' to finding the correct constraint boundary between $\Act_d$ and $\Act_{d+1}$. An ideal algorithm would focus its queries primarily on $\Act_d$ and $\Act_{d+1}$. We can vary the problem difficulty by changing $\varepsilon$ (more difficult for small values), and $d$ (more difficult for large values).

\paragraphsmall{Instance 2 -- Unit sphere.}
To create \CBAIShort instances with a range of different reward and constraint functions, we sample arms $\Act_1, \dots, \Act_n$ uniformly from a $d$-dimensional unit sphere. We also sample the reward parameter $\RewParam$ from the unit sphere. As constraint parameter, we choose $\ConstParam = \Act_i - \Act_j$ where $\Act_i$ and $\Act_j$ are the two closest arms in $\ell_2$-distance. We can increase the problem difficulty by increasing the dimension $d$ and the number of arms $n$.

\paragraphsmall{Baselines.}
We compare \AlgShort and \AlgGreedyShort to various baselines. The \emph{Oracle} solution uses knowledge of the true constraint parameter to choose the best possible static allocation (cf. \Cref{app:oracle}). In practice, we cannot implement the oracle because we do not know the constraint parameter; but, it yields a performance upper bound to which we can compare other algorithms. \emph{G-Allocation} uses a static allocation that uniformly reduces uncertainty (cf. \Cref{app:g-allocation}), whereas \emph{Uniform} pulls all arms with equal probability. We also consider variants of these algorithms that use the adaptive confidence interval in \Cref{thm:ols_bound_adaptive}. We call the adaptive version of G-Allocation \emph{Greedy MaxVar} because it greedily selects arms with the highest uncertainty esimate from $\UncertainSet_\Round$. We call uniform sampling with the adaptive confidence intervals \emph{Adaptive Uniform} respectively. For all algorithms that use adaptive confidence intervals, in addition to the version using \Cref{thm:ols_bound_adaptive}, we test a ``tuned'' version that considers $\beta_t$ as a numeric hyperparameter instead (indicated by the name of the algorithms followed by \textit{(tuned)}). We chose $\beta_t = \frac14$, for all experiments, which we determined from minimal tuning on the ``irrelevant dimensions'' instance for the Greedy MaxVar algorithm.
For clarity, we omit a few of the baselines that perform poorly in our plots. \Cref{app:more_results} provides the full results.

\paragraphsmall{Results.}
\Cref{fig:bandit_results} shows our results in the synthetic \CBAIShort instances. All algorithms find the correct solution, but their sample efficiency varies widely. From all algorithms with theoretical guarantees, the (unrealistic) oracle solution needs the fewest number of iterations, as expected. But \AlgShort can get close to the oracle performance and outperforms G-Allocation and uniform sampling in all cases. For example, if we increase the number of irrelevant dimensions in the first experiment, G-Allocation and uniform sampling need more samples to determine which dimension is relevant. In contrast, both \AlgShort quickly focuses on the relevant dimension. Therefore, the number of iterations it needs does not increase when adding irrelevant dimensions to the problem, similar to the oracle solution.

Methods that use adaptive confidence intervals with $\beta_t$ suggested by \Cref{thm:ols_bound_adaptive} turn out to be less sample efficient than their round-based counterparts using static confidence intervals, including \AlgGreedyShort performing worse than \AlgShort. The reason for this is that the confidence interval in \Cref{thm:ols_bound_adaptive} is quite loose. We can heuristically choose smaller confidence intervals and consider $\beta_t$ as a tunable hyperparameter. We find that we can achieve orders of magnitude better sample complexity without much tuning and still always find the correct solution. Even though this approach loses the theoretical guarantees, it could be very valuable in practical applications.

\begin{figure}\centering
   \includegraphics[width=0.7\linewidth]{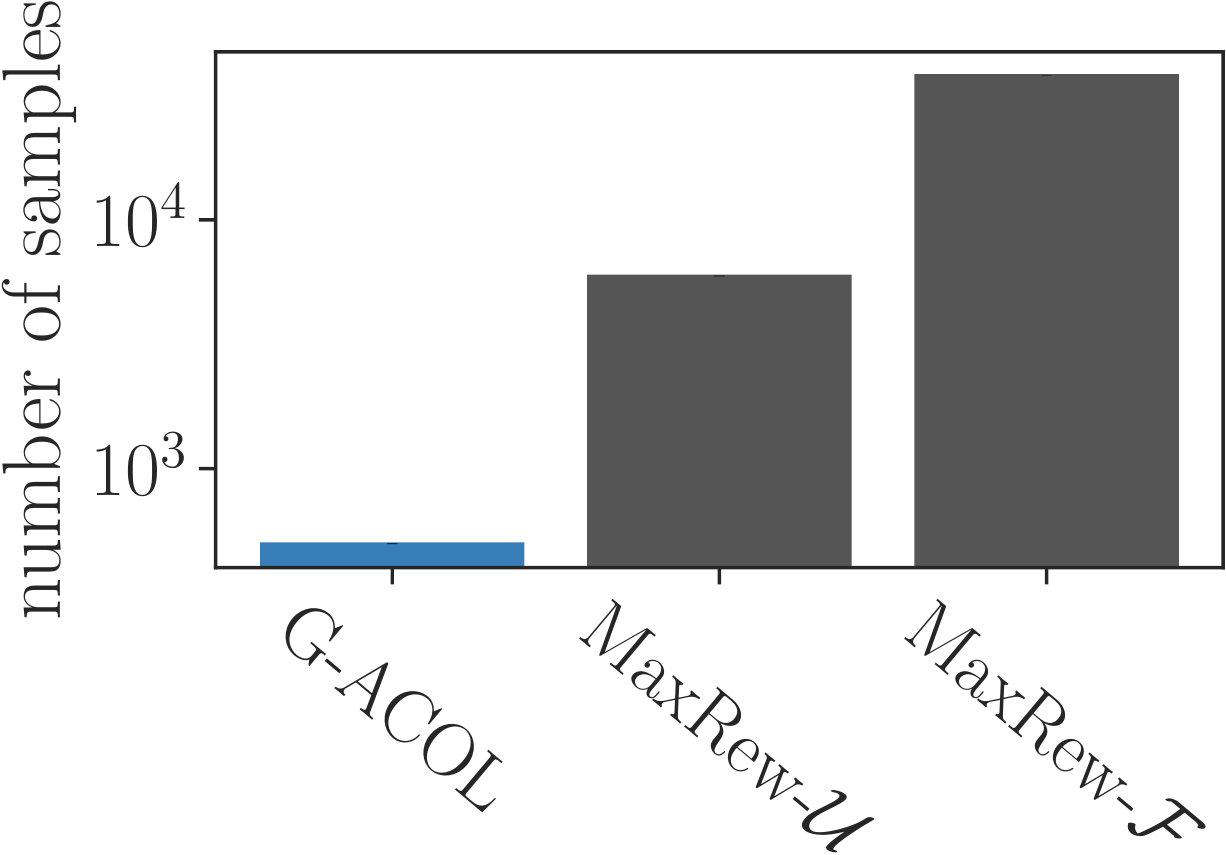}
   \caption{We compare \AlgGreedyShort to \MaxRewF and \MaxRewU, that adapt regret minimization approaches to the \CBAIShort setting. We focus on a simple 1-dimensional problem, where we ensure the set of feasible arms is connected. We find that \MaxRewF is particularly sample inefficient because it only selects arms that are certainly feasible. \MaxRewU is also less sample efficient than \AlgGreedyShort because it selects arms with high reward over other arms that would be more informative during exploration.}
   \label{fig:maxrew}
\end{figure}

\subsection{Comparing \AlgShort to Regret Minimization}
\label{sec:regret_compare_experiment}

To highlight the difference of our \CBAILong setting to regret minimization with constraints, we perform an experiment to compare \AlgGreedyShort to the approaches by \citet{amani2019linear} and \citet{moradipari2021safe}. The algorithm by \citet{amani2019linear} performs UCB and the algorithm by \citet{moradipari2021safe} performs Thompson sampling, both within the set of certainly feasible arms.

We can translate both approaches to our setting with known rewards by greedily selecting arms from $\FeasibleSet_\Round$ w.r.t.\ their reward. Because we do not start with a known safe arm, we add an additional phase in which we select arms randomly until $\FeasibleSet_\Round$ is not empty. Let us call this approach \emph{\MaxRewF}. As a hybrid of this approach and \AlgShort, we can design an algorithm that greedily select arms from $\UncertainSet_\Round$ w.r.t. their reward. Let us call this algorithm \emph{\MaxRewU}.

Unfortunately, \MaxRewF gets stuck in our synthetic instances because we do not make any assumptions on the safe set such as convexity and compactness.  To evaluate these algorithms, we, therefore, consider a third synthetic instance in which the safe set is connected. We consider $10$ arms in $d=1$ that are equally spaced between $0$ and $1$. The reward and constraint vectors are $\RewParam = \ConstParam = 1$, and the threshold is $\tau=0.25$. Here the safe set is connected, but we can learn the constraint boundary more efficiently if we are allowed to violate the constraint during exploration.

We compare \AlgGreedyShort to \MaxRewF and \MaxRewU in \Cref{fig:maxrew}. We find that \AlgGreedyShort explores much more efficiently than both of the other approaches. \MaxRewF is particularly sample inefficient, because it ensures feasibility during exploration, which is not necessary in our case. In \Cref{app:more_results}, we provide results for \MaxRewU in all of our environments. We cannot provide these results for \MaxRewF because it gets stuck in all other environments.

\begin{figure}\centering
   \includegraphics[height=10em]{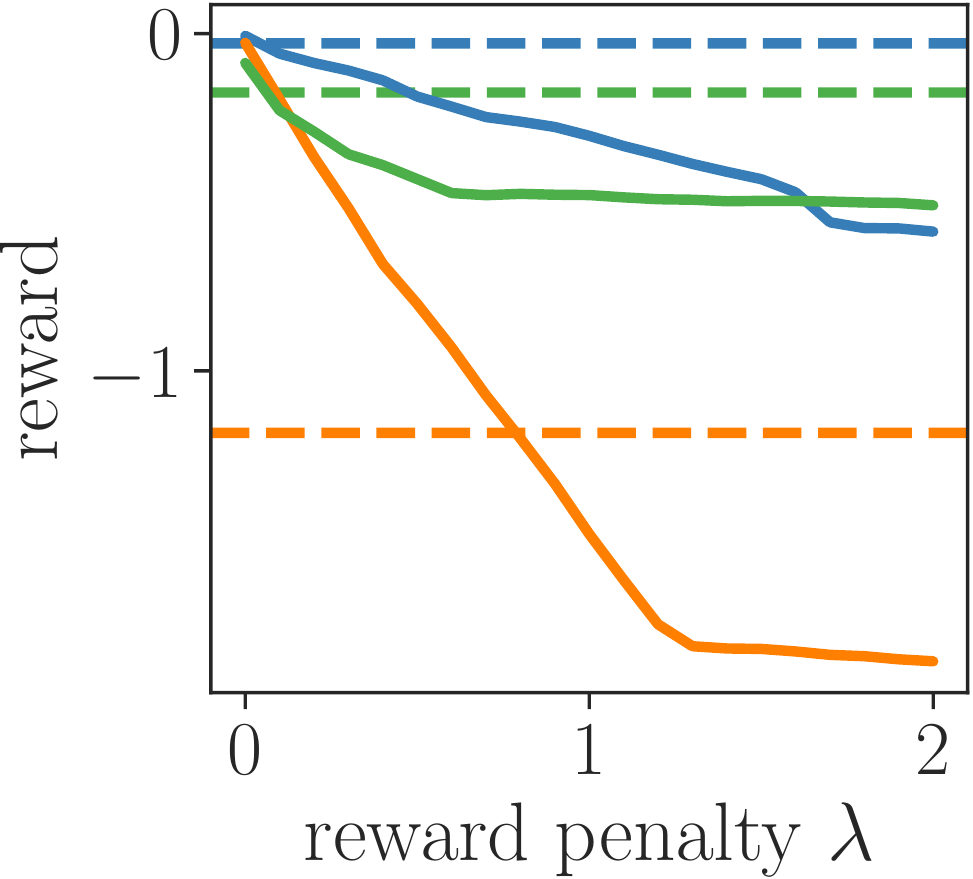}\hfill \includegraphics[height=10em]{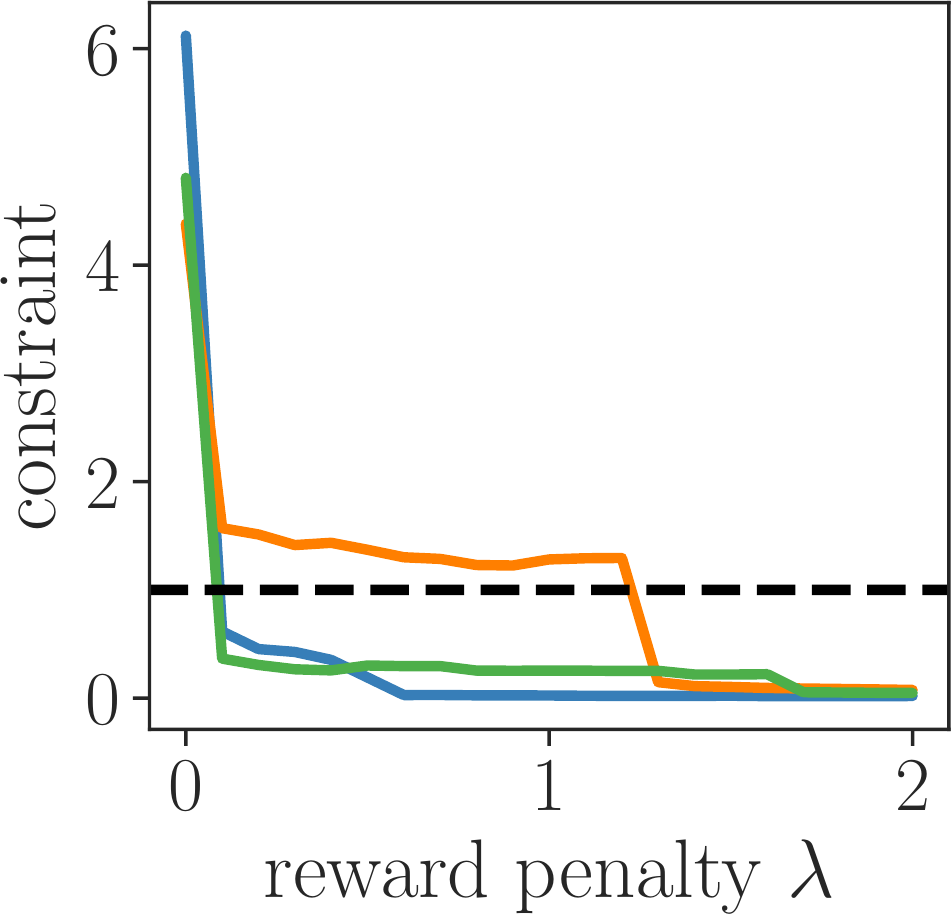}
   \caption{We quantify our finding that learning constraints is more robust to changes in the environment than learning a penalized reward function. We consider the three scenarios from \Cref{fig:driver_example}: the base scenario (\legendDriverTargetVelocity), a scenario with a different goal (\legendDriverTargetLocation), and a scenario with a change in the environment (\legendDriverBlocked). We find a policy that optimizes the reward function $\RewParam^T \Act - \lambda \ConstParam^T \Act$ and plot the reward and the constraint of the solution for different values of $\lambda$. In particular, we need to choose a different value of $\lambda$ for each environment to find the best solution with a constraint value below $1$. The dashed horizontal lines in the reward plot show the reward a constrained solution obtains on the corresponding instance, which does not require any tuning. For each scenario, the smallest $\lambda$ we find to yield a feasible solution still gives a worse solution in terms of reward than the constrained solution.}
   \label{fig:driver_robustness}
\end{figure}

\begin{figure*}\centering
   \hspace{0.2cm}
   \begin{minipage}{0.23\linewidth}
   \includegraphics[width=\linewidth]{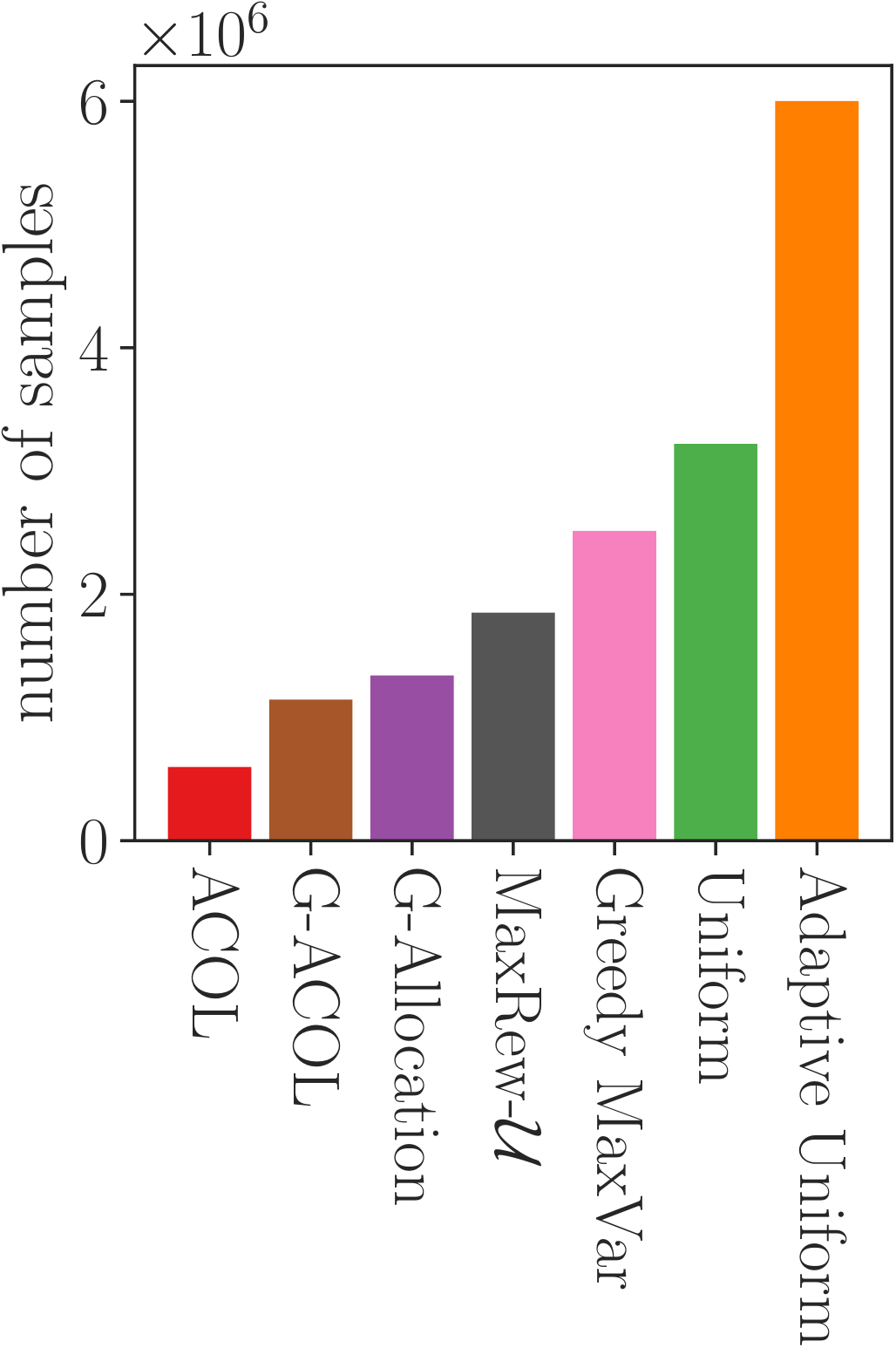}
   \end{minipage}
   \begin{minipage}{0.7\linewidth}
   \centering
   \includegraphics[width=0.4\linewidth]{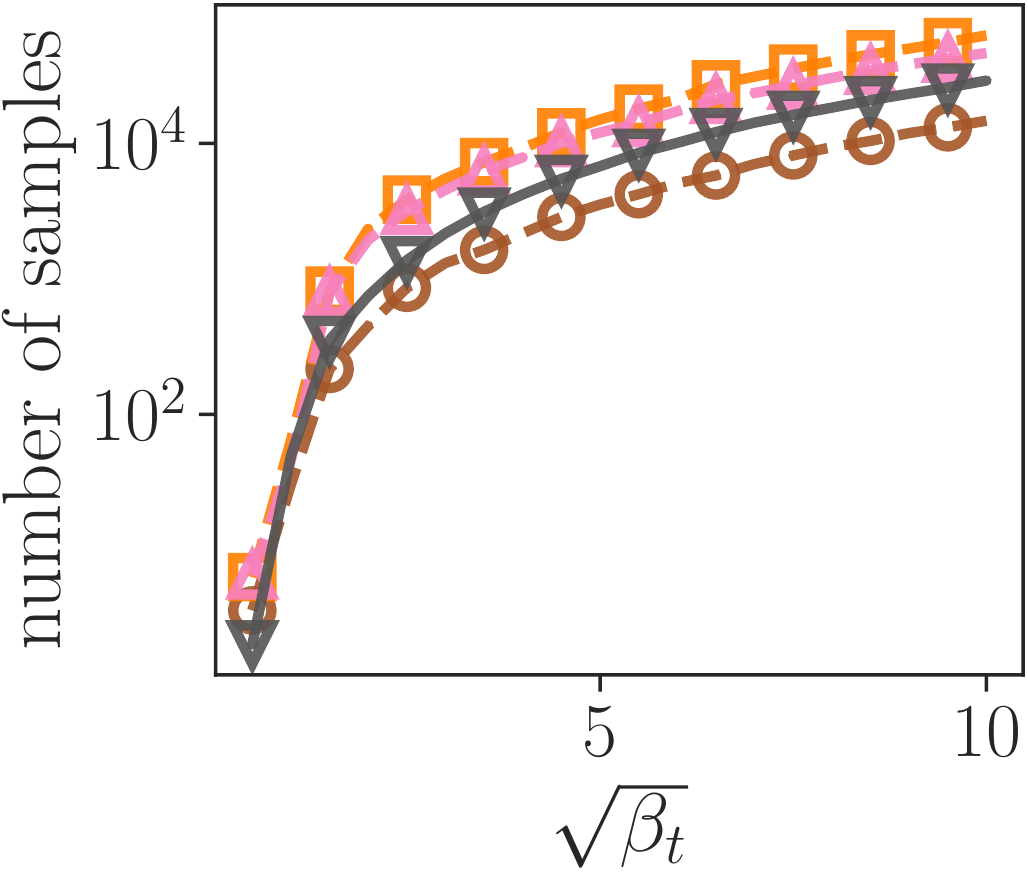}\hspace{0.6cm}
   \includegraphics[width=0.4\linewidth]{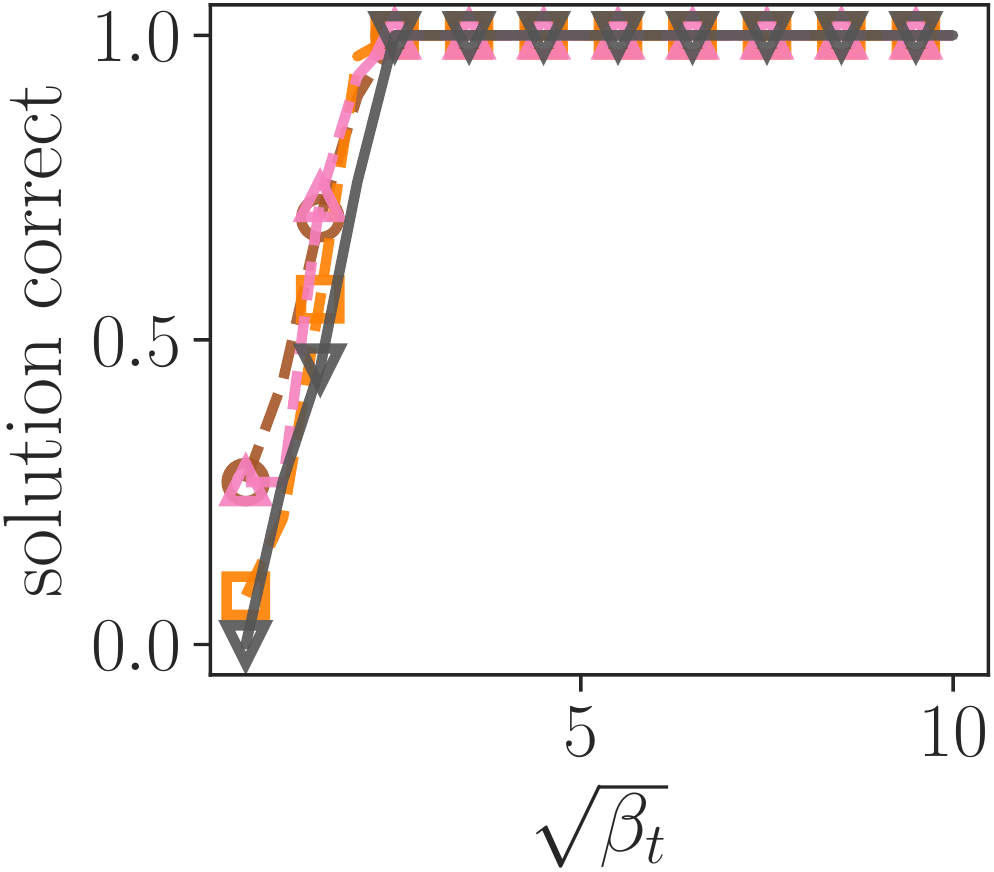} \\\vspace{1em}
   {\small\centering
    \hspace{0.8cm}
    \begin{tabular}{llll}
        \legendUniformAllTuned & Adaptive Uniform
        &\legendMaxvarAllTuned & Greedy MaxVar \\
        \legendMaxRewUTuned & \MaxRewU
        &\legendAdaptiveTuned & \AlgGreedyShort
    \end{tabular}
    }
    \end{minipage}
   \caption{The left chart shows the number of iterations that all algorithms with theoretical guarantees need to find the correct solution in the driving scenario. \AlgShort is the fastest, but it still needs $\sim 10^5$ samples. Instead, we can use heuristic confidence intervals where we consider $\beta_t$ as a hyperparameter instead of choosing the values suggested by theory. The two right plots shows the number of iterations and the percentage of the times the methods return a correct solution as a function of $\beta_t$. None of these algorithms is guaranteed to return the correct solution. But, empirically, we find that for $\sqrt{\beta_t}$ beyond the vertical line, the algorithms always return the correct solution. This again shows that tuning $\beta_t$ can drastically improve the sample efficiency while still returning the correct solution empirically.}
   \label{fig:driver_results}
\end{figure*}

\subsection{Preference Learning Experiments}\label{sec:driving_experiments}

We now consider the application that initially motivated us to define the \CBAIShort problem. As discussed in \Cref{sec:introduction}, we are interested in situations where the reward parameter $\RewParam$ describes an easy-to-specify goal or metric, and the constraint parameter $\ConstParam$ describes expensive-to-evaluate human preferences. 

As an example of this, we consider a driving simulator, which \citet{dorsa2017active} originally introduced to study learning reward functions to represent human preferences about driving behavior. Instead, we change the setting to have the reward $\RewParam$ represent an easy-to-specify goal such as ``drive at velocity $v$'', and the constraint $\ConstParam$ represent other driving rules such as ``usually drive in a lane'' or ``don't get too close to other cars'', as shown in \Cref{fig:driver_example}. \Cref{app:experiment_details} provides more details on the environment.

\looseness -1 The decision-maker has to select a controller to drive the car from a set of precomputed controllers $\ActSet$, i.e., the set of ``arms''. The optimal controller $\Act^*$ maximizes $\RewParam^T \Act^*$ and satisfies $\ConstParam^T \Act^* \leq \tau$. The decision-maker can try out individual controllers to get feedback on whether they are feasible. In contrast to our previous experiments, the feedback is binary. However, we can still model it via a sub-Gaussian noise model by ensuring the constraint values are in $[0, 1]$ and interpreting them as probabilities. Therefore, this is a \CBAIShort problem, and we can apply the same algorithms we applied to our synthetic problems.

\paragraphsmall{Robustness of learning constraints.}
First, we want to quantify the observation of \Cref{fig:driver_example} that constraints can be a particularly robust representation of human preferences. Specifically, using constraints to represent human preferences can increase robustness to changes in the environment and allow to transfer the constraints to different reward functions. Constraints are more robust than modeling the same preferences as a penalty on the reward function. \Cref{fig:driver_robustness} quantifies this by directly comparing the two options in terms of the reward and constraint values they achieve. In particular, we find that the magnitude of the reward penalty often has to be updated if the environment changes, whereas the constraint formulation is robust to such changes.

\paragraphsmall{Results of learning constraints.}
We consider the driving scenario as a \CBAIShort problem and study learning the constraint function. Here, we only report results for the base scenario in \Cref{fig:driver_example}. \Cref{app:more_results} contains similar results for the other two scenarios which are qualitatively similar. In \Cref{fig:driver_results}, we compare the performance of \AlgShort and other algorithms with theoretical correctness guarantees to versions of these algorithms with heuristic confidence intervals. In both cases \AlgShort or \AlgGreedyShort is the most sample efficient algorithm. By choosing the heuristic confidence intervals, we can reduce the number of samples necessary by two orders of magnitude from $\sim 10^5$ to $\sim 10^3$, at the cost of theoretical guarantees. In all cases, using \AlgShort is preferable over alternatives because it finds the correct solution with fewer queries about the constraint function.

\section{Conclusion}

It is natural to formalize sequential decision-making problems in many practical situations as optimizing a known reward function subject to unknown, expensive-to-evaluate constraints. We studied \CBAILong (\CBAIShort), a linear bandit setting to learn about constraints efficiently, and proposed \AlgLong (\AlgShort) to efficiently solve this problem.

\paragraphsmall{Limitations and future work.} Our theoretical analysis is limited to a single constraint function, which might not be appropriate for applications where the constraints are non-additive. It should be possible to extend the same theoretical ideas to multiple linear constraints that all have to be satisfied, which would allow to apply \AlgShort to such situations. From the empirical perspective, we found that modelling human preferences as constraints rather than rewards can be more robust. Future work should study using constraints to model human preferences in more practical applications.

\paragraphsmall{Broader impact.} Sample efficient methods to learn about human preferences could help to avoid misspecified objectives in ML \citep{amodei2016concrete}. By focusing on learning constraints, it might be possible to make preference learning more robust and interpretable. Of course, such algorithms could be misused, but we are optimistic that robust methods to learn from humans will lead to safer ML methods overall.

\section*{Acknowledgements}

This project has received funding from the Microsoft Swiss Joint Research Center (Swiss JRC), and from the European Research Council (ERC) under the European Union's Horizon 2020 research and innovation programme grant agreement No 815943. 
We thank Ilija Bogunovic and Alexandru \textcommabelow{T}ifrea for valuable feedback on early drafts of this paper.

\newcommand{\ICML}{Proceedings of International Conference on Machine Learning (ICML)}
\newcommand{\RSS}{Proceedings of Robotics: Science and Systems (RSS)}
\newcommand{\NeurIPS}{Advances in Neural Information Processing Systems}
\newcommand{\IJCAI}{Proceedings of International Joint Conferences on Artificial Intelligence (IJCAI)}
\newcommand{\ICLR}{International Conference on Learning Representations (ICLR)}
\newcommand{\CoRL}{Conference on Robot Learning (CoRL)}
\newcommand{\UAI}{Uncertainty in Artificial Intelligence (UAI)}
\newcommand{\AAAI}{AAAI Conference on Artificial Intelligence}
\newcommand{\COLT}{Conference on Learning Theory (COLT)}
\newcommand{\AISTATS}{International Conference on Artificial Intelligence and Statistics (AISTATS)}

\bibliography{references}

\begin{thebibliography}{29}
\providecommand{\natexlab}[1]{#1}
\providecommand{\url}[1]{\texttt{#1}}
\expandafter\ifx\csname urlstyle\endcsname\relax
  \providecommand{\doi}[1]{doi: #1}\else
  \providecommand{\doi}{doi: \begingroup \urlstyle{rm}\Url}\fi

\bibitem[Abbasi-Yadkori et~al.(2011)Abbasi-Yadkori, P{\'a}l, and
  Szepesv{\'a}ri]{abbasi2011improved}
Abbasi-Yadkori, Y., P{\'a}l, D., and Szepesv{\'a}ri, C.
\newblock Improved algorithms for linear stochastic bandits.
\newblock In \emph{\NeurIPS}, 2011.

\bibitem[Amani et~al.(2019)Amani, Alizadeh, and Thrampoulidis]{amani2019linear}
Amani, S., Alizadeh, M., and Thrampoulidis, C.
\newblock Linear stochastic bandits under safety constraints.
\newblock In \emph{\NeurIPS}, 2019.

\bibitem[Amodei et~al.(2016)Amodei, Olah, Steinhardt, Christiano, Schulman, and
  Man{\'e}]{amodei2016concrete}
Amodei, D., Olah, C., Steinhardt, J., Christiano, P., Schulman, J., and
  Man{\'e}, D.
\newblock Concrete problems in {AI} safety.
\newblock \emph{arXiv:1606.06565}, 2016.

\bibitem[Audibert et~al.(2010)Audibert, Bubeck, and Munos]{audibert2010best}
Audibert, J.-Y., Bubeck, S., and Munos, R.
\newblock Best arm identification in multi-armed bandits.
\newblock In \emph{\COLT}, 2010.

\bibitem[B{\i}y{\i}k et~al.(2020)B{\i}y{\i}k, Palan, Landolfi, Losey, and
  Sadigh]{biyik2019asking}
B{\i}y{\i}k, E., Palan, M., Landolfi, N.~C., Losey, D.~P., and Sadigh, D.
\newblock Asking easy questions: A user-friendly approach to active reward
  learning.
\newblock In \emph{\CoRL}, 2020.

\bibitem[Daniel et~al.(2014)Daniel, Viering, Metz, Kroemer, and
  Peters]{daniel2014active}
Daniel, C., Viering, M., Metz, J., Kroemer, O., and Peters, J.
\newblock Active reward learning.
\newblock In \emph{\RSS}, 2014.

\bibitem[Fiez et~al.(2019)Fiez, Jain, Jamieson, and
  Ratliff]{fiez2019sequential}
Fiez, T., Jain, L., Jamieson, K.~G., and Ratliff, L.
\newblock Sequential experimental design for transductive linear bandits.
\newblock In \emph{\NeurIPS}, 2019.

\bibitem[Friedrich(2006)]{friedrich2006optimal}
Friedrich, P.
\newblock \emph{Optimal design of experiments}.
\newblock Siam. Society for industrial and applied mathematics, 2006.

\bibitem[Gardner et~al.(2014)Gardner, Kusner, Xu, Weinberger, and
  Cunningham]{gardner2014bayesian}
Gardner, J.~R., Kusner, M.~J., Xu, Z.~E., Weinberger, K.~Q., and Cunningham,
  J.~P.
\newblock Bayesian optimization with inequality constraints.
\newblock In \emph{\ICML}, 2014.

\bibitem[Gelbart et~al.(2014)Gelbart, Snoek, and Adams]{gelbart2014bayesian}
Gelbart, M.~A., Snoek, J., and Adams, R.~P.
\newblock Bayesian optimization with unknown constraints.
\newblock In \emph{\UAI}, 2014.

\bibitem[Hern{\'a}ndez-Lobato et~al.(2016)Hern{\'a}ndez-Lobato, Gelbart, Adams,
  Hoffman, and Ghahramani]{hernandez2016general}
Hern{\'a}ndez-Lobato, J.~M., Gelbart, M.~A., Adams, R.~P., Hoffman, M.~W., and
  Ghahramani, Z.
\newblock A general framework for constrained {B}ayesian optimization using
  information-based search.
\newblock \emph{Journal of Machine Learning Research}, 17, 2016.

\bibitem[Kano et~al.(2019)Kano, Honda, Sakamaki, Matsuura, Nakamura, and
  Sugiyama]{kano2019good}
Kano, H., Honda, J., Sakamaki, K., Matsuura, K., Nakamura, A., and Sugiyama, M.
\newblock Good arm identification via bandit feedback.
\newblock \emph{Machine Learning}, 108, 2019.

\bibitem[Kaufmann et~al.(2016)Kaufmann, Capp{\'e}, and
  Garivier]{kaufmann2016complexity}
Kaufmann, E., Capp{\'e}, O., and Garivier, A.
\newblock On the complexity of best-arm identification in multi-armed bandit
  models.
\newblock \emph{The Journal of Machine Learning Research}, 17, 2016.

\bibitem[Kazerouni et~al.(2017)Kazerouni, Ghavamzadeh, Abbasi~Yadkori, and
  Van~Roy]{kazerouni2017conservative}
Kazerouni, A., Ghavamzadeh, M., Abbasi~Yadkori, Y., and Van~Roy, B.
\newblock Conservative contextual linear bandits.
\newblock In \emph{\NeurIPS}, 2017.

\bibitem[Khezeli \& Bitar(2020)Khezeli and Bitar]{khezeli2020safe}
Khezeli, K. and Bitar, E.
\newblock Safe linear stochastic bandits.
\newblock In \emph{\AAAI}, 2020.

\bibitem[Kiefer \& Wolfowitz(1960)Kiefer and Wolfowitz]{kiefer1960equivalence}
Kiefer, J. and Wolfowitz, J.
\newblock The equivalence of two extremum problems.
\newblock \emph{Canadian Journal of Mathematics}, 12, 1960.

\bibitem[Lattimore \& Szepesv{\'a}ri(2020)Lattimore and
  Szepesv{\'a}ri]{lattimore2020bandit}
Lattimore, T. and Szepesv{\'a}ri, C.
\newblock \emph{Bandit algorithms}.
\newblock Cambridge University Press, 2020.

\bibitem[Locatelli et~al.(2016)Locatelli, Gutzeit, and
  Carpentier]{locatelli2016optimal}
Locatelli, A., Gutzeit, M., and Carpentier, A.
\newblock An optimal algorithm for the thresholding bandit problem.
\newblock In \emph{\ICML}, 2016.

\bibitem[Moradipari et~al.(2021)Moradipari, Amani, Alizadeh, and
  Thrampoulidis]{moradipari2021safe}
Moradipari, A., Amani, S., Alizadeh, M., and Thrampoulidis, C.
\newblock Safe linear {T}hompson sampling with side information.
\newblock \emph{IEEE Transactions on Signal Processing}, 2021.

\bibitem[Pacchiano et~al.(2021)Pacchiano, Ghavamzadeh, Bartlett, and
  Jiang]{pacchiano2021stochastic}
Pacchiano, A., Ghavamzadeh, M., Bartlett, P., and Jiang, H.
\newblock Stochastic bandits with linear constraints.
\newblock In \emph{\AISTATS}, 2021.

\bibitem[Perrone et~al.(2019)Perrone, Shcherbatyi, Jenatton, Archambeau, and
  Seeger]{perrone2019constrained}
Perrone, V., Shcherbatyi, I., Jenatton, R., Archambeau, C., and Seeger, M.
\newblock Constrained bayesian optimization with max-value entropy search.
\newblock In \emph{NeurIPS 2019 Workshop on Metalearning}, 2019.

\bibitem[Rubinstein \& Kroese(2004)Rubinstein and Kroese]{rubinstein2004cross}
Rubinstein, R.~Y. and Kroese, D.~P.
\newblock \emph{The cross-entropy method: a unified approach to combinatorial
  optimization, {M}onte-{C}arlo simulation, and machine learning}.
\newblock Springer, 2004.

\bibitem[Sadigh et~al.(2017)Sadigh, Dragan, Sastry, and
  Seshia]{dorsa2017active}
Sadigh, D., Dragan, A.~D., Sastry, S., and Seshia, S.~A.
\newblock Active preference-based learning of reward functions.
\newblock In \emph{\RSS}, 2017.

\bibitem[Settles(2012)]{settles2009active}
Settles, B.
\newblock \emph{Active learning}.
\newblock Morgan \& Claypool Publishers, 2012.

\bibitem[Soare(2015)]{soare2015sequential}
Soare, M.
\newblock \emph{Sequential resource allocation in linear stochastic bandits}.
\newblock PhD thesis, Universit{\'e} Lille 1-Sciences et Technologies, 2015.

\bibitem[Soare et~al.(2014)Soare, Lazaric, and Munos]{soare2014best}
Soare, M., Lazaric, A., and Munos, R.
\newblock Best-arm identification in linear bandits.
\newblock In \emph{\NeurIPS}, 2014.

\bibitem[Sui et~al.(2015)Sui, Gotovos, Burdick, and Krause]{sui2015safe}
Sui, Y., Gotovos, A., Burdick, J., and Krause, A.
\newblock Safe exploration for optimization with {G}aussian processes.
\newblock In \emph{\ICML}, 2015.

\bibitem[Wang et~al.(2022)Wang, Wagenmaker, and Jamieson]{wang2021best}
Wang, Z., Wagenmaker, A., and Jamieson, K.
\newblock Best arm identification with safety constraints.
\newblock In \emph{\AISTATS}, 2022.

\bibitem[Wen \& Topcu(2020)Wen and Topcu]{wen2020constrained}
Wen, M. and Topcu, U.
\newblock Constrained cross-entropy method for safe reinforcement learning.
\newblock \emph{IEEE Transactions on Automatic Control}, 2020.

\end{thebibliography}
\bibliographystyle{icml2022}

\clearpage
\appendix
\onecolumn
\renewcommand{\thetable}{\Alph{section}.\arabic{table}}

\section{Proofs}
\label{app:proofs}

This section provides the full proofs of our key results of the paper: the sample complexity lower bound for \CBAIShort problems (\Cref{app:proof_lower_bound}) and the sample complexity of \AlgShort (\Cref{app:proof_acol}).

\subsection{Lower Bounds}
\label{app:proof_lower_bound}

\LowerBound*

\begin{proof}
Our proof has a similar structure to the proof of Theorem 3.1 by \citet{soare2015sequential}. Let us denote the optimal arm of problem $\Problem$ with $\BestAct$ and the optimal arm of $\Problem'$ with $\BestActP$. Let $\cA$ be a $\delta$-PAC algorithm to solve constrained linear bandit problems, and let $A$ be the event that $\cA$ recommends $\BestAct$ as the optimal arm. If we denote by $P_\Problem(A)$ the probability of $A$ happening for instance $\Problem$, and by $P_{\Problem'}(A)$ the probability for instance $\Problem'$, we have
$P_\Problem(A) \geq 1 - \delta$ and $P_{\Problem'}(A) \leq \delta$.

Let $\tilde{\varepsilon} = \ConstParam' - \ConstParam$, and let $\StoppingTime$ be the stopping time of $\cA$. Let $ (\Act_1, \dots, \Act_\StoppingTime)$ be the sequence of arms $\cA$ pulls and $(z_1, \dots, z_t)$ the corresponding observed noisy constraint values $z_i = \Act_i^T \ConstParam + \noise_{\Act_i}$ with $\noise_{\Act_i} \sim \NormalDist(0, 1)$ being independent Gaussian noise.

Now, consider the log-likelihood ratio of these observations under algorithm $\cA$:
\begin{align*}
    L_\StoppingTime &= \log \left( \prod_{s=1}^\StoppingTime \frac{P_\Problem(z_s | \Act_s)}{P_{\Problem'}(z_s | \Act_s)} \right)
    = \sum_{s=1}^\StoppingTime \log \left( \frac{P_\Problem(z_s | \Act_s)}{P_{\Problem'}(z_s | \Act_s)} \right)
    = \sum_{s=1}^\StoppingTime \log \left( \frac{P_\Problem(\eta_s)}{P_{\Problem'}(\eta_s')} \right)
    = \sum_{s=1}^\StoppingTime \log \left( \frac{\exp(-\eta_s^2/2)}{\exp(-\eta_s'^2/2)} \right) \\
    &= \sum_{s=1}^\StoppingTime \frac12 ((z_s - \Act_s^T \ConstParam')^2 - (z_s - \Act_s^T \ConstParam)^2)
    = \sum_{s=1}^\StoppingTime \frac12 (z_s^2 - 2 z_s \Act_s^T \ConstParam' + (\Act_s^T \ConstParam')^2 - z_s^2 + 2 z_s \Act_s^T \ConstParam - (\Act_s^T \ConstParam)^2) \\
    &= \sum_{s=1}^\StoppingTime \frac12 (2 z_s \Act_s^T (\ConstParam - \ConstParam') + (\Act_s^T \ConstParam' - \Act_s^T \ConstParam) (\Act_s^T \ConstParam' + \Act_s^T \ConstParam))
    = \sum_{s=1}^\StoppingTime \frac12 (-2 z_s \Act_s^T \tilde{\varepsilon} + \Act_s^T \tilde{\varepsilon} (\Act_s^T \ConstParam + \Act_s^T \tilde{\varepsilon} + \Act_s^T \ConstParam)) \\
    &= \sum_{s=1}^\StoppingTime (\Act_s^T \tilde{\varepsilon}) \frac{-2 z_s + 2 \Act_s^T \ConstParam + x_s^T \tilde{\varepsilon}}{2}
    = \sum_{s=1}^\StoppingTime (\Act_s^T \tilde{\varepsilon}) \left( \frac{\Act_s^T \tilde{\varepsilon}}{2} - \eta_s \right)
\end{align*}

Taking the expectation of this log-likelihood ratio gives:
\begin{align*}
    \Expectation_{\Problem} [L_\StoppingTime]
    &= \Expectation_{\Problem} \left[ \sum_{s=1}^\StoppingTime (\Act_s^T \tilde{\varepsilon}) \left( \frac{\Act_s^T \tilde{\varepsilon}}{2} - \eta_s \right) \right]
    = \frac12 \Expectation_{\Problem} \left[ \sum_{s=1}^\StoppingTime (\Act_s^T \tilde{\varepsilon})^2 \right] - \underbrace{\Expectation_{\Problem} \left[ \eta_s  \right]}_{= 0} \\
    &= \frac12 \Expectation_{\Problem} \left[ \sum_{s=1}^\StoppingTime \tilde{\varepsilon}^T \Act_s \Act_s^T \tilde{\varepsilon} \right]
    = \frac12 \Expectation_{\Problem} \left[ \sum_{\Act\in\ActSet}  \Expectation_{\Problem} [\StoppingTime] \lambda(x) \tilde{\varepsilon}^T \Act \Act^T \tilde{\varepsilon} \right] \\
    &= \frac12 \Expectation_{\Problem} [\StoppingTime] \Expectation_{\Problem} \left[ \sum_{\Act\in\ActSet}  \lambda(x) \tilde{\varepsilon}^T \Act \Act^T \tilde{\varepsilon} \right]
    = \frac12 \Expectation_{\Problem} [\StoppingTime] \tilde{\varepsilon}^T \DesMat \tilde{\varepsilon}
\end{align*}

Next, we can apply Lemma 19 from \citet{kaufmann2016complexity}:
\[
\Expectation_{\Problem} [L_\StoppingTime]
= \frac12 \Expectation_{\Problem} [\StoppingTime] \tilde{\varepsilon}^T \DesMat \tilde{\varepsilon}
\geq \KL(P_\Problem(A), P_{\Problem'}(A)) \geq \log \frac{1}{2.4 \delta}
\]
\begin{align}\label{eq:bound_with_eps}
\Expectation_{\Problem} [\StoppingTime]
\geq 2 \log\left(\frac{1}{2.4\delta}\right) \frac{1}{\tilde{\varepsilon}^T \DesMat \tilde{\varepsilon}}
\end{align}

To obtain a lower bound, we now aim to find the smallest $\tilde{\varepsilon}$ such that $\Problem$ and $\Problem'$ have different constrained optimal arms.

Let $\BetterActSet(\Act) = \{ \Act' \in \ActSet | \RewParam^T \Act' \geq \RewParam^T \Act \}$ be the set of arms with higher reward than $\Act$. There are two ways we can modify $\Problem$ to change its optimal arm. We can change $\ConstParam$ to $\ConstParam'$ such that either, \textbf{Case (i)}, the previous optimum $\Act_\Problem^*$ becomes infeasible in $\Problem'$, or, \textbf{Case (ii)}, a solution $\Act_\Problem^* \in \BetterActSet(\Act_\Problem^*)$ that was infeasible in $\Problem$ is now feasible in $\Problem'$. We will consider both cases separately, and aim to find an $\tilde{\varepsilon}$ for each case that minimizes $\tilde{\varepsilon}^T \DesMat \tilde{\varepsilon}$.

\paragraph{Case (i).} We want to find $\tilde{\varepsilon}$ that minimizes $\frac{1}{2} \varepsilon^T \DesMat \varepsilon$ such that $\ConstParam'^T \Act_\Problem^* > 0$, i.e., the previously optimal arm becomes infeasible. We can write this constraint equivalently as
\begin{align*}
    \ConstParam'^T \Act_\Problem^* > 0
    \equiv \ConstParam^T \Act_\Problem^* - \ConstParam'^T \Act_\Problem^* < \ConstParam^T \Act_\Problem^*
    \equiv \varepsilon^T \Act_\Problem^* < \ConstParam^T \Act_\Problem^*
    \equiv \varepsilon^T \Act_\Problem^* - \ConstParam^T \Act_\Problem^* < 0
\end{align*}
Which results in the following optimization problem:
\begin{equation*}
    \min_\varepsilon \frac{1}{2} \varepsilon^T \DesMat \varepsilon ~~~
    \textrm{s.t.} ~~~ \varepsilon^T \Act_\Problem^* - \ConstParam^T \Act_\Problem^* + \alpha \leq 0,
\end{equation*}
where $\alpha > 0$. The Lagrangian is
$L(\varepsilon, \gamma) = \frac{1}{2} \varepsilon^T \DesMat \varepsilon - \gamma (\varepsilon^T \Act_\Problem^* - \ConstParam^T \Act_\Problem^* + \alpha)$, and requiring $\pd{L}{\varepsilon} = \pd{L}{\gamma} = 0$ yields:
\begin{align*}
    \pd{L}{\varepsilon} &= \DesMat \varepsilon - \gamma \Act_\Problem^*  = 0
    \equiv \DesMat\varepsilon = \gamma \Act_\Problem^*
    \equiv \DesMat^{\frac12} \varepsilon = \gamma \DesMat^{-\frac12} \Act_\Problem^* \\
    \pd{L}{\gamma} &= \varepsilon^T \Act_\Problem^* - \ConstParam^T \Act_\Problem^* + \alpha = 0 \equiv \varepsilon^T \Act_\Problem^* = \ConstParam^T \Act_\Problem^* - \alpha
\end{align*}
From the first equation, it follows that
\begin{align*}
    {\Act_\Problem^*}^T \varepsilon
    &= {\Act_\Problem^*}^T \DesMat^{-\frac12} \DesMat^{\frac12} \varepsilon
    = \gamma {\Act_\Problem^*}^T \DesMat^{-1} \Act_\Problem^*
    = \gamma \| {\Act_\Problem^*} \|_{\DesMat^{-1}}^2 \\
    {\Act_\Problem^*}^T \varepsilon
    &= {\Act_\Problem^*}^T \DesMat^{-\frac12} \DesMat^{\frac12} \varepsilon
    = \frac{1}{\gamma} \varepsilon^T \DesMat \varepsilon
    = \frac{1}{\gamma} \| \varepsilon \|_{\DesMat}^2
\end{align*}
and therefore
\begin{align*}
    {\Act_\Problem^*}^T \varepsilon &= \| {\Act_\Problem^*} \|_{\DesMat^{-1}} \| \varepsilon \|_{\DesMat} = \ConstParam^T \Act_\Problem^* - \alpha \\
    \| \varepsilon \|_{\DesMat} &= \frac{\ConstParam^T \Act_\Problem^* - \alpha}{\| {\Act_\Problem^*} \|_{\DesMat^{-1}}} > \frac{\ConstParam^T \Act_\Problem^*}{\| {\Act_\Problem^*} \|_{\DesMat^{-1}}}
\end{align*}
where the last inequality follows because $\alpha > 0$ and $\DesMat$ is positive definite.

\paragraph{Case (ii).} We want to find $\tilde{\varepsilon}$ that minimizes $\frac{1}{2} \varepsilon^T \DesMat \varepsilon$ such that there exists an $\Act \in \ActSet$ for which $\RewParam^T \Act > \RewParam^T \Act_\Problem^*$ and $\ConstParam'^T \Act \leq 0$, i.e., $\Act$ has higher reward than $\Act_\Problem^*$ and it is feasible in $\Problem'$.
We can write these constraints as
\begin{align*}
    \RewParam^T \Act > \RewParam^T \Act_\Problem^*
    &\equiv \RewParam^T (\Act_\Problem^* - \Act) + \alpha \leq 0  \\
    \ConstParam'^T \Act \leq 0
    &\equiv \varepsilon^T \Act + \ConstParam^T \Act \leq 0
\end{align*}
with $\alpha > 0$. This results in the following optimization problem:
\begin{equation*}
\begin{aligned}
    \min_\varepsilon \quad & \frac{1}{2} \varepsilon^T \DesMat \varepsilon \\
    \textrm{s.t.} \quad \exists \Act:~ & \RewParam^T (\Act_\Problem^* - \Act) + \alpha &\leq 0 \\
    & \varepsilon^T \Act + \ConstParam^T \Act &\leq 0
\end{aligned}
\end{equation*}
The Lagrangian of this problem is
\[
    L(\varepsilon, \gamma, \delta) = \frac{1}{2} \varepsilon^T \DesMat \varepsilon - \gamma(\RewParam^T (\Act_\Problem^* - \Act) + \alpha) - \delta (\varepsilon^T \Act + \ConstParam^T \Act)
\]
Requiring $\pd{L}{\varepsilon} = \pd{L}{\delta} = 0$ results in
\begin{align*}
    \pd{L}{\varepsilon} &= \DesMat \varepsilon - \delta \Act = 0
    \equiv \DesMat \varepsilon = \delta \Act
    \equiv \DesMat^{\frac12} \varepsilon = \delta \DesMat^{-\frac12} \Act \\
    \pd{L}{\delta} &= \varepsilon^T \Act + \ConstParam^T \Act = 0
    \equiv \varepsilon^T \Act = - \ConstParam^T \Act
\end{align*}
It follows that
\begin{align*}
    \Act^T \varepsilon
    &= \Act^T \DesMat^{-\frac12} \DesMat^{\frac12} \varepsilon
    = \delta \Act^T \DesMat^{-1} \Act
    = \delta \| \Act \|_{\DesMat^{-1}}^2 \\
    \Act^T \varepsilon
    &= \Act^T \DesMat^{-\frac12} \DesMat^{\frac12} \varepsilon
    = \frac{1}{\delta} \varepsilon^T \DesMat \varepsilon
    = \frac{1}{\delta} \| \varepsilon \|_{\DesMat}^2
\end{align*}
and therefore
\begin{align*}
    \Act^T \varepsilon = \| \Act \|_{\DesMat^{-1}} \| \varepsilon \|_{\DesMat} = \ConstParam^T \Act
    ~\Rightarrow~ \| \varepsilon \|_{\DesMat} = \frac{\ConstParam^T \Act}{\| \Act \|_{\DesMat^{-1}}}
\end{align*}
Combining this result with the remaining constraint $\RewParam^T \Act > \RewParam^T \Act_\Problem^*$ which implies $\Act \in \BetterActSet(\Act_\Problem^*)$, we can conclude
\begin{align*}
    \| \varepsilon \|_{\DesMat} \geq \min_{\Act \in \BetterActSet(\Act_\Problem^*)} \frac{\ConstParam^T \Act}{\| \Act \|_{\DesMat^{-1}}}
\end{align*}

\paragraph{Combining cases (i) and (ii).} We can conclude that the $\varepsilon$ that minimizes $\| \varepsilon \|_{\DesMat}^2$ while still ensuring that $\Problem'$ has a different solution than $\Problem$, satisfies:
\begin{align*}
    \| \varepsilon \|_{\DesMat} \geq \min \Bigl[
        \underbrace{\frac{\ConstParam^T \Act_\Problem^*}{\| {\Act_\Problem^*} \|_{\DesMat^{-1}}}}_{\text{Case (i)}},
        \underbrace{\min_{\Act \in \BetterActSet(\Act_\Problem^*)} \frac{\ConstParam^T \Act}{\| \Act \|_{\DesMat^{-1}}}}_{\text{Case (ii)}}
    \Bigr]
\end{align*}
But because $\Act_\Problem^* \in \BetterActSet(\Act_\Problem^*)$, it is simply
\begin{align*}
    \| \varepsilon \|_{\DesMat} \geq \min_{\Act \in \BetterActSet(\Act_\Problem^*)} \frac{\ConstParam^T \Act}{\| \Act \|_{\DesMat^{-1}}}
\end{align*}

Combining this result with \cref{eq:bound_with_eps}, gives the final bound:
\begin{align*}
    \Expectation_{\Problem} [\StoppingTime]
    \geq 2 \log\left(\frac{1}{2.4\delta}\right) \frac{1}{\left( \min_{\Act \in \BetterActSet(\Act_\Problem^*)} \frac{\ConstParam^T \Act}{\| \Act \|_{\DesMat^{-1}}} \right)^2}
    = 2 \log\left(\frac{1}{2.4\delta}\right) \max_{\Act \in \BetterActSet(\Act_\Problem^*)} \frac{\| \Act \|_{\DesMat^{-1}}^2}{(\ConstParam^T \Act)^2}
\end{align*}
\end{proof}

Next, we derive the worst case bound on the quantity making up the \CBAIShort lower bound.

\InstanceIndependentLowerBound*

\begin{proof}
\begin{align*}
    \Complexity(\Problem)
    = \min_\lambda \max_{\Act \in \BetterActSet(\BestAct)} \frac{\| \Act \|_{A_{\lambda^*}^{-1}}^2}{(\TrueConst^T \Act)^2}
    \leq \frac{1}{\MinConst^2} \min_\lambda \max_{\Act \in \BetterActSet(\BestAct)} \| \Act \|_{A_{\lambda^*}^{-1}}^2
    \leq \frac{d}{\MinConst^2}
\end{align*}
where the last inequality uses the well-known result by \citet{kiefer1960equivalence}. Equality holds, for example, if all $\Act \in \ActSet$ are linearly independent and have the same constraint value $\MinConst$.
\end{proof}

\subsection{Adaptive Constraint Learning}
\label{app:proof_acol}

In this section, we analyse the sample complexity of \AlgShort and prove our main result.

\AdaptiveAlgorithmComplexity*

\begin{proof}

Let $\GoodEvent_{\Round} \coloneqq \{ \UncertainSet_\Round \subseteq \SmallGapSet_{\Round} \}$ where $\SmallGapSet_{\Round} \coloneqq \{ \Act \in \ActSet | \UpperConst^\Round(\Act) - \LowerConst^\Round(\Act) \leq 2^{-\Round} \}$. So, $\GoodEvent_\Round$ is the event that all arms in $\UncertainSet_\Round$ have confidence interval smaller than $2^{-\Round}$. We will first show that $P(\GoodEvent_1) \geq 1 - \delta_1$ and $P(\GoodEvent_\Round | \GoodEvent_{\Round-1} ) \geq 1 - \delta_\Round$, which ensures that the set of arms we are uncertain about shrinks exponentially in the rounds $\Round$.

Let $\Act \in \UncertainSet_\Round$. Then, using \Cref{thm:ols_bound_static}, and the $\varepsilon$-approximate rounding strategy, it holds with probability at least $1-\delta_t$ that:
\begin{align*}
\UpperConst^\Round(\Act) - \LowerConst^\Round(\Act)
\leq 2 \sqrt{2\log\left(\frac{|\ActSet|}{\delta_\Round}\right) \frac{1+\varepsilon}{N_\Round}} \| \Act \|_{A_{\Design_\Round^*}^{-1}}
\end{align*}

Using the length of a round $N_\Round = \left\lceil 2^{2\Round+3} \log\left(\frac{|\ActSet|}{\delta_t}\right) (1+\varepsilon) \rho^*_\Round \right\rceil$, and that we select arms to reduce uncertainty in $\UncertainSet_\Round$, we get
\begin{align*}
\UpperConst^\Round(\Act) - \LowerConst^\Round(\Act) &\leq 2^{-\Round} \sqrt{ \left( \min_{\Design} \max_{\tilde{\Act} \in \UncertainSet_\Round} \| \tilde{\Act} \|_{\DesMat^{-1}}^2 \right)^{-1} } \| \Act \|_{A_{\Design_\Round^*}^{-1}} \\
&\leq 2^{-\Round} \sqrt{ \left( \min_{\Design} \max_{\tilde{\Act} \in \UncertainSet_\Round} \| \tilde{\Act} \|_{\DesMat^{-1}}^2 \right)^{-1} } \left( \min_\Design \max_{\tilde{\Act} \in \UncertainSet_\Round} \| \tilde{\Act} \|_{\DesMat^{-1}} \right)
\leq 2^{-\Round}
\end{align*}

Note, that $\Act$ can only be in $\UncertainSet_\Round$ if $\UpperConst^\Round(\Act) > 0$ and $\LowerConst^\Round(\Act) \leq 0$. It follows that $P(\GoodEvent_\Round | \GoodEvent_{\Round-1}) \geq 1-\delta_\Round$.

Now consider round $\bar{\Round} \coloneqq \left\lceil \log_2 \frac{1}{\MinConst} \right\rceil$. We show $P(\UncertainSet_{\bar{\Round}} = \emptyset | \GoodEvent_{\bar{\Round}} ) = 1$. Assume $\GoodEvent_{\bar{\Round}}$, i.e., $\UncertainSet_\Round \subseteq \SmallGapSet_\Round$. Let $\Act \in \UncertainSet_{\bar{\Round}}$, then:
\[
| \TrueConst^T \Act | \leq 2^{-\bar{\Round}} \leq 2^{-\log_2{1/\MinConst}} = \MinConst
\]
which is a contradiction because otherwise $\Act$ would have a smaller constraint value than $\MinConst$. Consequently, the set of uncertain arms $\UncertainSet_{\bar{\Round}}$ is empty and the algorithm returns the correct solution given $\GoodEvent_{\bar{\Round}}$.
\Cref{lemma:delta_bound} shows that the unconditional probability of the algorithm returning the correct solution after round $\bar{\Round}$ is at least $1-\delta$.

Finally, we can compute the total number of samples the algorithm needs to return the correct solution:
\begin{align*}
N &= \sum_{\Round=1}^{\bar{\Round}} \lceil 2^{2\Round+3} \log\left(\frac{|\ActSet|}{\delta_\Round}\right) (1+\varepsilon) \rho^*_\Round \rceil
\leq \sum_{\Round=1}^{\bar{\Round}} 2^{2\Round+3} \log\left(\frac{|\ActSet|}{\delta_\Round}\right) (1+\varepsilon) \rho^*_\Round + \bar{\Round} \\
&\leq 8 \log\left(\frac{|\ActSet|\bar{\Round}^2}{\delta^2}\right) (1+\varepsilon) \sum_{\Round=1}^{\bar{\Round}} (2^\Round)^2 \rho^*_\Round + \bar{\Round}
= 8 \log\left(\frac{|\ActSet|\bar{\Round}^2}{\delta^2}\right) (1+\varepsilon) \sum_{\Round=1}^{\bar{\Round}} (2^\Round)^2 \min_{\lambda} \max_{\tilde{\Act} \in \UncertainSet_\Round} \| \tilde{\Act} \|_{\DesMat^{-1}}^2 + \bar{\Round} \\
&= 8 \log\left(\frac{|\ActSet|\bar{\Round}^2}{\delta^2}\right) (1+\varepsilon) \sum_{\Round=1}^{\bar{\Round}} \min_{\lambda} \max_{\tilde{\Act} \in \UncertainSet_\Round} \frac{\| \tilde{\Act} \|_{\DesMat^{-1}}^2}{(2^{-\Round})^2} + \bar{\Round}
\overset{(a)}{\leq} 8 \log\left(\frac{|\ActSet|\bar{\Round}^2}{\delta^2}\right) (1+\varepsilon) \sum_{\Round=1}^{\bar{\Round}} \min_{\lambda} \max_{\tilde{\Act} \in \UncertainSet_\Round} \frac{\| \tilde{\Act} \|_{\DesMat^{-1}}^2}{(\TrueConst^T \tilde{\Act})^2} + \bar{\Round} \\
&\overset{(b)}{\leq} 8 \log\left(\frac{|\ActSet|\bar{\Round}^2}{\delta^2}\right) (1+\varepsilon) \sum_{\Round=1}^{\bar{\Round}} \min_{\lambda} \max_{\tilde{\Act} \in \ActSet} \frac{\| \tilde{\Act} \|_{\DesMat^{-1}}^2}{(\TrueConst^T \tilde{\Act})^2} + \bar{\Round}
\leq 8 \log\left(\frac{|\ActSet|\bar{\Round}^2}{\delta^2}\right) (1+\varepsilon) \bar{\Round} \WeakComplexity(\Problem) + \bar{\Round}
\end{align*}
where (a) follows because we showed that $|\TrueConst^T \Act| \leq 2^{-\Round}$ w.h.p. for $\Act\in\UncertainSet_\Round$, and (b) follows simply because $\UncertainSet_\Round \subseteq \ActSet$. In the last step, we defined $\WeakComplexity(\Problem) = \min_{\lambda} \max_{\tilde{\Act} \in \ActSet} \frac{\| \tilde{\Act} \|_{\DesMat^{-1}}^2}{(\TrueConst^T \tilde{\Act})^2}$

Moreover,
\begin{align*}
   \WeakComplexity(\Problem) = \min_{\Design} \max_{\Act \in \ActSet} \frac{\| \Act \|_{\DesMat^{-1}}^2}{(\TrueConst^T \Act)^2}
   \leq \frac{1}{\MinConst^2} \min_{\Design} \max_{\Act \in \ActSet} \| \Act \|_{\DesMat^{-1}}^2
   \leq \frac{1}{\MinConst^2} \min_{\Design} \max_{\Act \in \R^d} \| \Act \|_{\DesMat^{-1}}^2
   \leq \frac{d}{\MinConst^2}
\end{align*}
using the result by \citet{kiefer1960equivalence}.

\end{proof}

\begin{lemma}\label{lemma:delta_bound}
Let $\GoodEvent_1, \dots, \GoodEvent_T$ be a Markovian sequence of events such that $P(\GoodEvent_1) \geq 1-\delta_1$ and $P(\GoodEvent_\Round | \GoodEvent_{\Round-1}) \geq 1-\delta_\Round$ for all $\Round = 2, \dots, T$, where $\delta_\Round = \delta^2 / \Round^2$ and $\delta \in (0, 1)$. $\GoodEvent_\Round$ is independent of other events conditioned on $\GoodEvent_{\Round-1}$. Then $P(\GoodEvent_T) \geq 1-\delta$.
\end{lemma}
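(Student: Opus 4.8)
The plan is to turn the conditional guarantees into a lower bound on $P(\GoodEvent_T)$ via a telescoping product, and then to control that product using the fact that the failure budget $\sum_\Round \delta_\Round$ is summable. First I would record a one-step recursion. For every $\Round \ge 2$, since $\GoodEvent_\Round \cap \GoodEvent_{\Round-1} \subseteq \GoodEvent_\Round$, we have $P(\GoodEvent_\Round) \ge P(\GoodEvent_\Round \cap \GoodEvent_{\Round-1}) = P(\GoodEvent_\Round \mid \GoodEvent_{\Round-1})\, P(\GoodEvent_{\Round-1}) \ge (1-\delta_\Round)\,P(\GoodEvent_{\Round-1})$, invoking the hypothesis $P(\GoodEvent_\Round \mid \GoodEvent_{\Round-1}) \ge 1-\delta_\Round$. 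Here the Markov assumption is what lets the conditioning collapse to the previous event only, so that iterating the recursion does not accumulate spurious dependence on the earlier history.

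Unrolling from the base case $P(\GoodEvent_1) \ge 1-\delta_1$ then yields $P(\GoodEvent_T) \ge \prod_{\Round=1}^{T}(1-\delta_\Round)$ by a straightforward induction. The remaining task is purely analytic: show that this product is at least $1-\delta$ under the schedule $\delta_\Round = \delta^2/\Round^2$.

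The clean, standard route is the Weierstrass product inequality $\prod_{\Round=1}^{T}(1-\delta_\Round) \ge 1 - \sum_{\Round=1}^{T}\delta_\Round$, combined with the Basel bound $\sum_{\Round=1}^{T}\delta^2/\Round^2 \le \delta^2\,\pi^2/6$, giving $P(\GoodEvent_T) \ge 1 - \delta^2\pi^2/6$. This is exactly where the quadratic-in-$\delta$ schedule pays off: because the summable series contributes only a constant and the per-round slack scales as $\delta^2$, the total slack is $O(\delta^2)$, which is dominated by $\delta$. The one subtlety --- and the step I expect to be the real obstacle --- is that this crude bound only gives $P(\GoodEvent_T) \ge 1-\delta$ for $\delta \le 6/\pi^2 \approx 0.61$, whereas the lemma claims the full range $\delta \in (0,1)$. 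To recover the whole range I would instead keep the product exact: since each factor lies in $(0,1)$, the finite product dominates the infinite one, and Euler's factorization of the sine gives $\prod_{\Round=1}^{\infty}(1-\delta^2/\Round^2) = \sin(\pi\delta)/(\pi\delta)$. It then remains to verify the elementary inequality $\sin(\pi\delta)/(\pi\delta) \ge 1-\delta$ on $(0,1)$ (checkable, e.g., via the Taylor expansion near the endpoints together with concavity in between), which closes the argument for all admissible $\delta$.
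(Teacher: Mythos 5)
Your proof is correct and follows essentially the same route as the paper's: lower-bound $P(\GoodEvent_T)$ by the telescoping product $\prod_{\Round=1}^{T}(1-\delta^2/\Round^2)$, dominate it by the infinite Euler sine product $\sin(\pi\delta)/(\pi\delta)$, and invoke $\sin(\pi\delta)/(\pi\delta)\geq 1-\delta$ on $(0,1)$. Your derivation of the product via $P(\GoodEvent_\Round)\geq P(\GoodEvent_\Round\cap\GoodEvent_{\Round-1})$ is a minor (and arguably cleaner) variation on the paper's chain-rule factorization, and your observation that the Weierstrass/union-bound shortcut only covers $\delta\lesssim 6/\pi^2$ correctly identifies why the sine product is needed for the full range.
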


\begin{proof}
\begin{align*}
P(\GoodEvent_T)
= \left(\prod_{\Round=2}^{T} P(\GoodEvent_\Round | \GoodEvent_{\Round-1})\right) P(\GoodEvent_1)
\geq \left( \prod_{\Round=2}^{\bar{\Round}} (1-\delta_\Round) \right) (1-\delta_1)
\geq \prod_{\Round=1}^{\infty} \left( 1-\frac{\delta^2}{\Round^2} \right)
= \frac{\sin(\pi \delta)}{\pi \delta}
\geq 1-\delta
\end{align*}
where the last inequality holds for $0 \leq \delta \leq 1$.
\end{proof}

\section{Alternative Algorithms}

\begin{algorithm}[t]
\caption{Round based algorithm with a generic allocation $\Design^*$ with hyperparamater $v \in (1, 2)$. For $\Design^* \in \argmin_{\Design} \max_{\Act\in\BetterActSet(\BestAct)} \| \Act \|_{\DesMat^{-1}} / | \TrueConst^T \Act |$ this algorithm becomes the oracle solution. For $\Design^* \in \argmin_{\Design} \max_{\Act\in\ActSet} \| \Act \|_{\DesMat^{-1}}$ it becomes \emph{G-Allocation}.}
\label{alg:generic_round_based}
\begin{algorithmic}[1]
    \STATE \textbf{Input:} static design $\Design^*$, significance $\delta$
    \STATE $\UncertainSet_1 \gets \ActSet$ \hfill (uncertain arms)
    \STATE $\FeasibleSet_1 \gets \emptyset$ \hfill (feasible arms)
    \STATE $\Round \gets 1$ \hfill (round)
    \WHILE{$\UncertainSet_\Round \neq \emptyset$}
        \STATE $\delta_t \gets \delta^2 / \Round^2$
        \STATE $N_\Round \gets \lceil v^\Round \log(|\ActSet|/\delta_\Round) \rceil$
        \STATE $\vx_{N_\Round} \gets \mathtt{Round}(\Design^*, N_\Round)$
        \STATE Pull arms $\Act_1, \dots, \Act_{N_\Round}$ and observe constraint values
        \STATE $\Round \gets \Round + 1$
        \STATE Update $\hat{\ConstParam}_\Round$ and $A$ based on new data
        \STATE $\LowerConst^\Round(\Act) \gets \hat{\ConstParam}_\Round^T \Act - \sqrt{\beta_t} \| \Act \|_{A^{-1}}$ for all arms $\Act\in\ActSet$
        \STATE $\UpperConst^\Round(\Act) \gets \hat{\ConstParam}_\Round^T \Act + \sqrt{\beta_t} \| \Act \|_{A^{-1}}$ for all arms $\Act\in\ActSet$
        \STATE $\FeasibleSet_\Round \gets \FeasibleSet_{\Round-1} \cup \{ \Act | \UpperConst^\Round(\Act) \leq 0 \}$
        \STATE $\bar{r} \gets \max_{\Act\in\FeasibleSet_\Round} \RewParam^T \Act$
        \STATE $\UncertainSet_\Round \gets \UncertainSet_{\Round-1} \setminus \{ \Act | \LowerConst^\Round(\Act) > 0 \} \setminus \{ \Act | \UpperConst^\Round(\Act) \leq 0 \} \setminus \{ \Act | \RewParam^T \Act < \bar{r} \}$
    \ENDWHILE
    \STATE \textbf{return} $\Act^* \in \argmax_{\Act\in\FeasibleSet_\Round} \RewParam^T \Act$
\end{algorithmic}
\end{algorithm}

Given any static design $\Design^*$, we can consider different round-based algorithms using the static confidence intervals from \Cref{thm:ols_bound_static}. \Cref{alg:generic_round_based} shows the general algorithm. It uses the same stopping condition as \AlgShort but uses a more straightforward round length of $v^\Round \log(|\ActSet|/\delta_\Round)$ with $v$ a hyperparameter, and a fixed static allocation. In this section, we analyze two versions of this generic algorithm that are of particular interest: the \emph{oracle} solution (\Cref{app:oracle}) and \emph{G-Allocation} (\Cref{app:g-allocation}).

\subsection{Oracle Solution}
\label{app:oracle}

The oracle solution allocates samples according to $\Design^* \in \argmin_{\Design} \max_{\Act\in\BetterActSet(\BestAct)} \| \Act \|_{\DesMat^{-1}} / | \TrueConst^T \Act |$ in \Cref{alg:generic_round_based}. Note that this design exactly matches the term in our instance dependent lower-bound in \Cref{thm:lower_bound}. Therefore, this is the ideal allocation to achieve good sample complexity. However, this oracle solution requires knowledge of $\TrueConst$, which we do not know in practice.

As expected, this algorithm matches the sample complexity lower bound, i.e., it is instance-optimal apart from logarithmic factors. The following theorem formalizes this.

\begin{theorem}[Oracle sample complexity]\label{thm:oracle_sample_complexity}
The oracle algorithm finds the optimal solution to a constrained linear best-arm identification problem $\Problem = (\ActSet, \RewParam, \ConstParam)$ within $N \propto \Complexity(\Problem)$ with probability at least $1-\delta$.
\end{theorem}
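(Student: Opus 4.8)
The plan is to mirror the analysis of \AlgShort in \Cref{thm:adaptive_algorithm_complexity}, but the argument is simpler because the oracle allocation $\Design^*$ is fixed across all rounds: it depends only on $\TrueConst$ and $\ActSet$, so it is a legitimate non-adaptive allocation and \Cref{thm:ols_bound_static} applies verbatim in every round. First I would establish correctness. Define $\GoodEvent_\Round$ as the event that the confidence bounds $[\LowerConst^\Round(\Act), \UpperConst^\Round(\Act)]$ constructed in round $\Round$ all hold; by \Cref{thm:ols_bound_static} with significance $\delta_\Round = \delta^2/\Round^2$ we have $P(\GoodEvent_\Round) \geq 1-\delta_\Round$. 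On $\bigcap_\Round \GoodEvent_\Round$ the elimination rules only ever discard arms that are genuinely infeasible ($\TrueConst^T\Act > 0$), genuinely feasible, or genuinely suboptimal ($\RewParam^T\Act < \RewParam^T\BestAct$), so the arm returned from $\FeasibleSet$ is the constrained optimum. Since each round draws fresh samples, the $\GoodEvent_\Round$ are independent and the hypotheses of \Cref{lemma:delta_bound} hold, yielding that all these bounds are simultaneously valid with probability at least $1-\delta$.

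Next I would bound the stopping round. With the $\varepsilon$-approximate rounding and $N_\Round$ fresh samples from $\Design^*$, the width of the interval for arm $\Act$ is at most $2\sqrt{2\log(|\ActSet|/\delta_\Round)(1+\varepsilon)/N_\Round}\,\|\Act\|_{A_{\Design^*}^{-1}}$, exactly as in the \AlgShort proof. Under $\GoodEvent_\Round$, an arm $\Act$ leaves $\UncertainSet_\Round$ as soon as this width drops below $|\TrueConst^T\Act|$, because then its interval no longer straddles $0$. By definition of the oracle design, $\max_{\Act\in\BetterActSet(\BestAct)} \|\Act\|_{A_{\Design^*}^{-1}}/|\TrueConst^T\Act| = \sqrt{\Complexity(\Problem)}$, so every arm in $\BetterActSet(\BestAct)$ is classified once $N_\Round > 8(1+\varepsilon)\log(|\ActSet|/\delta_\Round)\,\Complexity(\Problem)$. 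The key structural point is that classifying exactly $\BetterActSet(\BestAct)$ empties $\UncertainSet$: the higher-reward infeasible arms in this set are removed by the $\LowerConst^\Round(\Act) > 0$ rule, while $\BestAct$ itself is certified feasible, which raises $\bar r$ to $\RewParam^T\BestAct$ and prunes every strictly-lower-reward arm by the reward rule. (A mild gap assumption, or treating reward ties and boundary arms with $\TrueConst^T\Act = 0$ separately, is needed so that no relevant arm has a vanishing gap.)

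Finally I would turn the per-round condition into a bound on total samples. Since $N_\Round \ge v^\Round \log(|\ActSet|/\delta_\Round)$, the classification condition holds once $v^\Round > 8(1+\varepsilon)\Complexity(\Problem)$, i.e.\ at round $\bar\Round = \lceil \log_v(8(1+\varepsilon)\Complexity(\Problem)) \rceil$, at which point $\UncertainSet_{\bar\Round} = \emptyset$ and the algorithm stops. Summing the geometric round lengths gives $N = \sum_{\Round=1}^{\bar\Round} N_\Round \le \frac{v}{v-1} v^{\bar\Round} \log(|\ActSet|\bar\Round^2/\delta^2) + \bar\Round$, and since $v^{\bar\Round} = \BigO((1+\varepsilon)\Complexity(\Problem))$ this is $\BigO(\Complexity(\Problem))$ up to the $v,\varepsilon$-dependent constant and the logarithmic factor $\log(|\ActSet|\bar\Round^2/\delta^2)$, which is the meaning of $N \propto \Complexity(\Problem)$.

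I expect the main obstacle to be not the confidence-interval bookkeeping, which is inherited from \Cref{thm:adaptive_algorithm_complexity}, but the structural argument that optimizing the allocation only over $\BetterActSet(\BestAct)$ still suffices for termination, together with handling degenerate arms exactly on the constraint boundary or tied in reward with $\BestAct$; these are precisely the cases where the clean criterion ``width $< |\TrueConst^T\Act|$'' degenerates and the gap-based counting must be supplemented.
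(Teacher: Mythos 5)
Your proposal is correct and follows essentially the same route as the paper's proof: the same fixed oracle design with round-based static confidence intervals from \Cref{thm:ols_bound_static}, the same terminal round $\bar\Round = \lceil\log_v(\BigO((1+\varepsilon)\Complexity(\Problem)))\rceil$ at which every arm in $\BetterActSet(\BestAct)$ is classified and $\UncertainSet_{\bar\Round}$ empties, the same appeal to \Cref{lemma:delta_bound}, and the same geometric summation of round lengths (the paper uses one-sided deviations $\TrueConst^T\Act - \LowerConst^\Round(\Act)$ rather than the full interval width, which only changes the constant). Your explicit flagging of reward ties and arms exactly on the constraint boundary is a point the paper's own proof glosses over rather than a gap in your argument.
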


\begin{proof}
Assuming a $(1+\varepsilon)$-approximate rounding procedure, in round $\Round$ we have:
$\| \Act \|_{A_{\vx_{N_\Round}^*}^{-1}}^2 \leq \frac{1+\varepsilon}{N_\Round} \| \Act \|_{A_{\Design^*}^{-1}}^2
$.
It follows, similar to the proof of \Cref{thm:adaptive_algorithm_complexity}, that in round $\Round$, for each $\Act \in \BetterActSet$ if $\TrueConst^T \Act > \TrueConst^T \BestAct$:
\begin{align*}
    \TrueConst^T \Act - \LowerConst^\Round(\Act)
    \leq \sqrt{2\log(|\ActSet|/\delta_\Round)} \| \Act \|_{A_{\vx_n^*}^{-1}}
    \leq \sqrt{2 (1+\varepsilon) \log(|\ActSet|/\delta_\Round) / {N_\Round}} \| \Act \|_{A_{\Design^*}^{-1}}
\end{align*}
A similar argument gives for $\BestAct$:
\begin{align*}
    \UpperConst^\Round(\BestAct) - \TrueConst^T \BestAct
    \leq \sqrt{2\log(|\ActSet|/\delta_\Round)} \| \BestAct \|_{A_{\vx_n^*}^{-1}}
    \leq \sqrt{2 (1+\varepsilon) \log(|\ActSet|/\delta_\Round) / {N_\Round}} \| \BestAct \|_{A_{\Design^*}^{-1}}
\end{align*}

Let us call the event that these confidence bounds hold $\GoodEvent_\Round$. We have $P(\GoodEvent_\Round | \GoodEvent_{\Round-1}) \geq 1-\delta_\Round$.
Now, consider round $\bar{\Round} = \lceil \log_{v} \left( 2 (1+\varepsilon) \Complexity(\nu) \right) \rceil $ with length $N_{\bar{\Round}} = \lceil 2 (1+\varepsilon) \log(|\ActSet|/\delta_\Round) \Complexity(\nu) \rceil$. For all $\Act \in \BetterActSet$ if $\TrueConst^T \Act > \TrueConst^T \BestAct$:
\begin{align*}
    \TrueConst^T \Act - \LowerConst^{\bar{\Round}}(\Act)
    \leq \sqrt{\frac{1}{\Complexity(\nu)}} \| \Act \|_{A_{\Design^*}^{-1}}
    \leq \sqrt{\frac{(\TrueConst^T \Act)^2}{\| \Act \|_{A_{\Design^*}^{-1}}^2}} \| \Act \|_{A_{\Design^*}^{-1}} \leq |\TrueConst^T \Act|
\end{align*}
Note that $\Act$ is infeasible and $\TrueConst^T \Act$, which implies $\LowerConst^{\bar{\Round}}(\Act) \geq 0$ and in turn $\Act \notin \UncertainSet_{\bar{\Round}}$.
Similarly, $\UpperConst^{\bar{\Round}}(\BestAct) - \TrueConst^T \BestAct \leq | \TrueConst^T \BestAct |$. $\BestAct$ is feasible and $\TrueConst^T \BestAct \leq 0$. Hence, $\UpperConst^{\bar{\Round}}(\BestAct) \leq 0$ and $\BestAct \notin \UncertainSet_{\bar{\Round}}$. This implies that $\UncertainSet_{\bar{\Round}} = \emptyset$ and, conditioned on $\GoodEvent_{\bar{\Round}}$, the oracle algorithm solves the problem in round $\bar{\Round}$ with probability $1$.
We can apply \Cref{lemma:delta_bound} to conclude that, unconditionally, the algorithm solves the problem in round $\bar{\Round}$ with a probability of at least $1-\delta$.

Let us compute the total iterations necessary:
\begin{align*}
    N = \sum_{\Round=1}^{\bar{\Round}} \lceil 2 (1+\varepsilon) \log(|\ActSet|/\delta_t) \Complexity(\nu) \rceil
    \leq \bar{\Round} (1 + 2 (1+\varepsilon) \log(|\ActSet|\bar{\Round}^2/\delta^2) \Complexity(\nu)) \propto \Complexity(\nu)
\end{align*}
So, $N$ is on order $\Complexity(\nu)$ except for logarithmic factors, concluding the proof.
\end{proof}

\subsection{G-Allocation}
\label{app:g-allocation}

We obtain G-Allocation by choosing $\Design^* \in \argmin_{\Design} \max_{\Act\in\ActSet} \| \Act \|_{\DesMat^{-1}}$ in \Cref{alg:generic_round_based}. G-Allocation uniformly reduce the uncertainty about the constraint function for all arms. This is not ideal because it does not focus on which arms are plausible optimizers according to the known reward function.

Still, the following theorem shows that G-Allocation achieves sample complexity on order $d/\MinConst^2$, so it matches the worst-case lower bound in \Cref{thm:instance_independent_lower_bound}.

\begin{theorem}\label{thm:g_allocation_sample_complexity}
G-Allocation finds the optimal arm within $N \propto d/\MinConst^2$ iterations with probability at least $1-\delta$.
\end{theorem}

\begin{proof}
As in the proof of \Cref{thm:oracle_sample_complexity}, we have in round $\Round$, for each $\Act \in \BetterActSet$ if $\TrueConst^T \Act > \TrueConst^T \BestAct$:
\begin{align*}
    \UpperConst^\Round(\Act) - \LowerConst^\Round(\Act)
    \leq 2 \sqrt{2\log(|\ActSet|/\delta_\Round)} \| \Act \|_{A_{\vx_n^*}^{-1}}
    \leq 2 \sqrt{2 (1+\varepsilon) \log(|\ActSet|/\delta_\Round) / {N_\Round}} \| \Act \|_{A_{\Design^*}^{-1}}
\end{align*}

Again, we call the event that these confidence bounds hold $\GoodEvent_\Round$, and have $P(\GoodEvent_\Round | \GoodEvent_{\Round-1}) \geq 1-\delta_\Round$.
Now, consider round
\begin{align*}
\bar{\Round} &= \log_{v} \left( 8 (1+\varepsilon) \argmin_\Design \max_{\Act\in\ActSet} \| \Act \|_{\DesMat^{-1}} / \MinConst^2 \right) \\ N_{\bar{\Round}} &= \left\lceil 8 (1+\varepsilon) \log(|\ActSet|/\delta_\Round) \argmin_\Design \max_{\Act\in\ActSet} \| \Act \|_{\DesMat^{-1}} / \MinConst^2 \right\rceil
\end{align*}
For all $\Act \in \UncertainSet_{\Round}$ it follows that:
\begin{align*}
    \UpperConst^{\bar{\Round}}(\Act) - \LowerConst^{\bar{\Round}}(\Act)
    \leq \sqrt{\frac{\MinConst^2}{\| \Act \|_{A_{\Design^*}^{-1}}}} \| \Act \|_{A_{\Design^*}^{-1}}
    \leq \MinConst
\end{align*}
This implies that G-Allocation solves the problem in round $\bar{\Round}$ with probability $1$, similar to the proof  of \Cref{thm:adaptive_algorithm_complexity}.
We can apply \Cref{lemma:delta_bound} to conclude that, unconditionally, the algorithm solves the problem in round $\bar{\Round}$ with a probability of at least $1-\delta$.

Let us compute the total iterations necessary:
\begin{align*}
    N &= \sum_{\Round=1}^{\bar{\Round}} \left\lceil 8 (1+\varepsilon) \log(|\ActSet|/\delta_\Round) \argmin_\Design \max_{\Act\in\ActSet} \| \Act \|_{\DesMat^{-1}} / \MinConst^2 \right\rceil \\
    &\leq \bar{\Round} \left(1 + 8 (1+\varepsilon) \log(|\ActSet|/\delta_\Round) \argmin_\Design \max_{\Act\in\ActSet} \| \Act \|_{\DesMat^{-1}} / \MinConst^2 \right) \\
    &\leq \bar{\Round} \left(1 + 8 (1+\varepsilon) \log(|\ActSet|/\delta_\Round) d / \MinConst^2 \right) \propto d/\MinConst^2
\end{align*}
where the last inequality uses the result by \citet{kiefer1960equivalence}.
\end{proof}

\section{Experimental Details About the Driving Environment}\label{app:experiment_details}

This section provides details on the driving environment we use in \Cref{sec:driving_experiments}. We provide full source code for all of our experiments at: \codeurl

We extend the \emph{Driver} proposed by \citet{dorsa2017active} and \citet{biyik2019asking}, to incorporate different tasks. Here, we provide a brief description of the dynamics and features of the environment.

The \emph{Driver} environment uses point-mass dynamics with a continuous state and action space. The state $s = (x, y, \varphi, v)$ consists of the agent's position $(x, y)$, its heading $\varphi$, and its velocity $v$. The actions $a = (a_1, a_2)$ consist of a steering input and an acceleration. The environment dynamics are given by
\begin{align*}
s_{t+1} = (x_{t+1}, y_{t+1}, \varphi_{t+1}, v_{t+1}) &= (x_{t} + \Delta x, y_{t} + \Delta y, \varphi_{t} + \Delta \varphi, \mathrm{clip}(v_{t} + \Delta v, -1, 1)) \\
(\Delta x, \Delta y, \Delta \varphi, \Delta v) &= (v\cos{\varphi}, v\sin{\varphi}, v a_1, a_2 - \alpha v)
\end{align*}
where $\alpha = 1$ is a friction parameter, and the velocity is clipped to $[-1, 1]$ at each timestep.

The environment represents a highway with three lanes. In addition to the agent, the environment contains a second car that moves on a predefined trajectory. The reward and the constraint functions are linear in a set of features
\[
f(s) = (f_1(s), f_2(s), f_3(s), f_4(s), f_5(s), f_6(s), f_7(s), f_8(s), 1)
\]
that are described in detail in \Cref{tab:driver_features}.

The (known) rewards for the three scenarios are:
\begin{align*}
    &\text{Base scenario:} \quad \theta_1 = (1, 0, 0, 0, 0, 0, 0, 0, 0) \\
    &\text{Different reward:} \quad \theta_2 = (0, 1, 0, 0, 0, 0, 0, 0, 0) \\
    &\text{Different environment:} \quad \theta_3 = (1, 0, 0, 0, 0, 0, 0, 0, 0)
\end{align*}
The (unknown) constraint is:
\begin{align*}
    \phi = (0, 0, 0.3, 0.05, 0.02, 0.5, 0.3, 0.8), \quad \tau = 1
\end{align*}

Our \emph{Driver} environment uses a fixed time horizon $T=20$, and policies are represented simply as sequences of $20$ actions because the environment is deterministic.

\subsection{Cross-Entropy Method for Constrained RL}

We find policies in the Driver environment with a given reward function using the cross-entropy method \citep{rubinstein2004cross}. For the constrained reinforcement learning problem, we use a modified cross-entropy method, proposed by \citet{wen2020constrained}, that takes the feasibility of solutions into account. \Cref{alg:cross_entropy_driver} contains pseudocode of this method.

\begin{algorithm}[t]
\caption{Cross-entropy method for (constrained) reinforcement learning. For more details on the cross-entropy method, see \citet{rubinstein2004cross}, and for the application to constrained RL, see \citet{wen2020constrained}.}
\label{alg:cross_entropy_driver}
\begin{algorithmic}[1]
  \STATE \textbf{Input}: $n_\text{iter}$, $n_\text{samp}$, $n_\text{elite}$
  \STATE Initialize policy parameters $\mu\in\R^d$, $\sigma\in\R^d.$
  \FOR{iteration = $1, 2, \dots, n_\text{iter}$}
      \STATE Sample $n_\text{samp}$ samples of $\omega_i \sim \Gaussian(\mu, \diag(\sigma))$
      \STATE Evaluate policies $\omega_1, \dots, \omega_{n_\text{samp}}$ in the environment
      \IF{constrained problem}
          \STATE Sort $\omega_i$ in ascending order of constraint value $J(\omega_i)$
          \STATE Let $E$ be the first $n_\text{elite}$ policies
          \IF{$J(\omega_{n_\text{elite}}) \leq 0$}
              \STATE Sort $\{\omega_i | J(\omega_i) \leq 0\}$ in descending order of return $G(\omega_i)$
              \STATE Let $E$ be the first $n_\text{elite}$ policies
          \ENDIF
      \ELSE
          \STATE Sort $\omega_i$ in descending order of return $G(\omega_i)$
          \STATE Let $E$ be the first $n_\text{elite}$ policies
      \ENDIF
      \STATE Fit Gaussian distribution with mean $\mu$ and diagonal covariance $\sigma$ to $E$
  \ENDFOR
  \STATE \textbf{return} $\mu$
\end{algorithmic}
\end{algorithm}

\subsection{Binary Feedback}

So far, we considered numerical observations of the constraint value $\ConstParam^T \Act + \noise$ where $\noise$ is subgaussian noise. In the driving environment, we (more realistic) binary observations in $\{-1, 1\}$.

If we assume that all true constraint values are in $[-1, 1]$, we can define the observation model $P(y = 1 | \ConstParam, \Act) = (\ConstParam^T \Act + 1) / 2$. We can consider this as bounded, sub-gaussian noise on the constraint value, and so all our analysis still applies.

\subsection{Setup}

To translate learning the unknown constraint function in the Driver environment into a constrained linear best arm identification problem, we consider a set of pre-computed policies $\Pi$. This set of policies corresponds to the arms of a linear bandit problem, and both the return $G(\pi)$ of a policy and the constraint function $J(\pi)$ are linear in the expected feature counts of the policy:
$G(\pi) = \vf(\pi) \cdot \vr$ and $J(\pi) = \vf(\pi) \cdot \vc$.

For binary observations, we normalize the features of all policies such that all constraint values are between $-1$ and $1$.

\begin{table}
\centering
\begin{tabular}{ccccccccc}
  Feature & Description & Type & Definition & & $\theta_1$ & $\theta_2$ & $\theta_3$ & $\phi$ \\\toprule
  $f_1(s)$ & Target velocity & Numerical & $-(v - 0.4)^2$ & & 1 & 0 & 1 & 0 \\\midrule
  $f_2(s)$ & Target location & Numerical & \specialcell{$-(x - x_r)^2$ \\ $x_r$ center of right lane } & & 0 & 1 & 0 & 0 \\\midrule
  $f_3(s)$ & Stay on street & Binary & $1$ iff off street & \raisebox{-.5\height}{\includegraphics[width=0.1\linewidth]{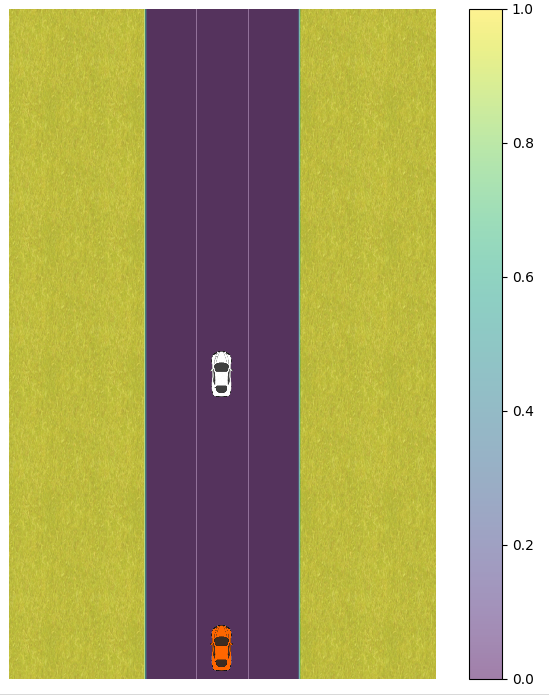}} & 0 & 0 & 0 & 0.3 \\\midrule
  $f_4(s)$ & Stay in lane & Numeric & \specialcell{$\frac{1}{1 + \exp(-bd+a)}$, \\ $d$ distance to closest lane center, \\ $b=10000$, $a=10$} & \raisebox{-.5\height}{\includegraphics[width=0.1\linewidth]{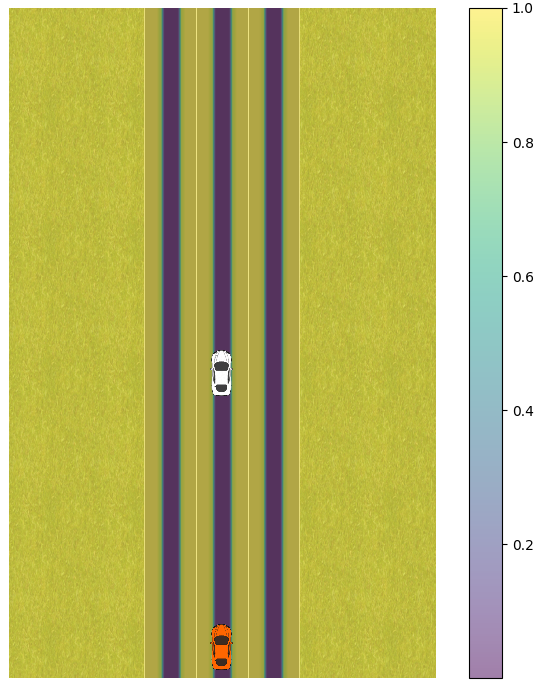}} & 0 & 0 & 0 & 0.05 \\\midrule
  $f_5(s)$ & \specialcell{Stay aligned \\ with street} & Numeric & $|\cos(\theta)|$ & & 0 & 0 & 0 & 0.02 \\\midrule
  $f_6(s)$ & \specialcell{Don't drive \\ backwards} & Binary & $1$ iff $v < 0$ & & 0 & 0 & 0 & 0.5 \\\midrule
  $f_7(s)$ & \specialcell{Stay within \\ speed limit} & Binary & $1$ iff $v > 0.6$ & & 0 & 0 & 0 & 0.3 \\\midrule
  $f_8(s)$ & \specialcell{Don't get too close \\ to other cars} & Numeric & \specialcell{$\exp(-b(c_1 d_x^2 + c_2 d_y^2) + ba)$, \\ $a=0.01$, $b=30$, \\ $c_1=4$, $c_2=1$} & \raisebox{-.5\height}{\includegraphics[width=0.1\linewidth]{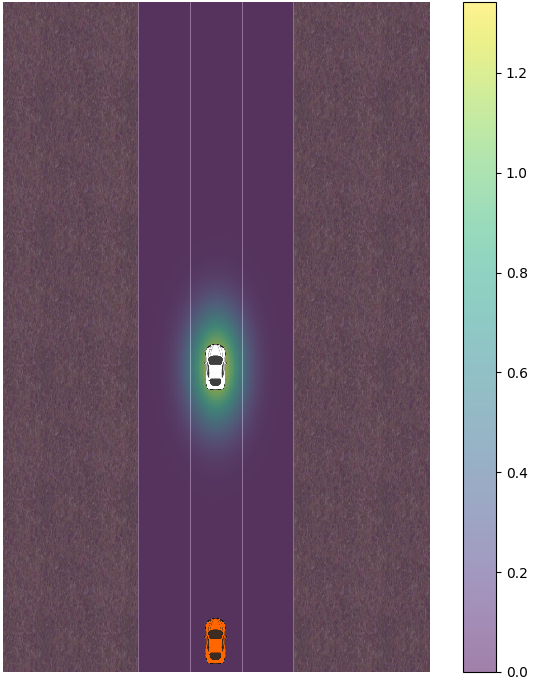}} & 0 & 0 & 0 & 0.8 \\\bottomrule
\end{tabular}
\caption{Features for representing the reward and constraint function in the Driver environment. The last four columns contain the reward weights for the three scenarios and the constraint weight that is shared.}
\label{tab:driver_features}
\end{table}

\section{Additional Experimental Results}\label{app:more_results}

Here, we provide the additional results for the experiments discussed in the main paper. Full results are shown in \Cref{fig:bandit_more_results} for the bandit results and \Cref{fig:driver_more_results} for the driving scenario. \Cref{tab:baselines} contains an overview of all algorithms and baselines that we evaluated.

We find that methods that select arms from $\UncertainSet$ randomly (\AlgGreedyShort Uniform) or by maximizing the reward (\MaxRewU) can perform quite well in some cases with tuned confidence intervals. Indeed, \MaxRewU outperforms \AlgGreedyShort in the unit sphere experiment. This is not consistent across environments, and \AlgGreedyShort performs comparable or better in all other environments. Still, in some cases, when theoretical guarantees are not required, these heuristic approaches might be valuable alternatives.

\begin{table*}[t]
   \centering
   \begin{tabular}{lcccc}
      \toprule
      Name & \specialcell{Confidence \\ Intervals} & \specialcell{Selection \\ Criterion} & \specialcell{Select From \\ Arms} & \specialcell{Plot \\ Color} \\
      \midrule
      Oracle & static & Oracle & All & \legendOracle \\
      G-Allocation & static & MaxVar & All & \legendGAllocation \\
      Uniform & static & Uniform & All & \legendUniform \\
      \AlgShort & static & MaxVar & Uncertain & \legendAdaptiveStatic \\
      Greedy MaxVar & adaptive & MaxVar & All & \legendMaxvarAll \\
      Adaptive Uniform & adaptive & Uniform & All & \legendUniformAll \\
      \MaxRewU & adaptive & Max Rew & Uncertain & \legendMaxRewU \\
      \MaxRewF & adaptive & Max Rew & Feasible & - \\
      \AlgGreedyShort & adaptive & MaxVar & Uncertain & \legendAdaptive \\
      \AlgGreedyShort Uniform & adaptive & Uniform & Uncertain & \legendAdaptiveUniform \\
      Greedy MaxVar (tuned) & adaptive tuned & MaxVar & All & \legendMaxvarAllTuned \\
      Adaptive Uniform (tuned) & adaptive tuned & Uniform & All & \legendUniformAllTuned \\
      \AlgGreedyShort (tuned) & adaptive tuned & MaxVar & Uncertain & \legendAdaptiveTuned \\
      \AlgGreedyShort Uniform (tuned) & adaptive tuned & Uniform & Uncertain & \legendAdaptiveUniformTuned \\
      \MaxRewU (tuned) & adaptive tuned & Max Rew & Uncertain & \legendMaxRewUTuned \\
      \MaxRewF (tuned) & adaptive tuned & Max Rew & Feasible & - \\
      \bottomrule
   \end{tabular}
   \caption{Overview of all algorithms we evaluate.}
   \label{tab:baselines}
\end{table*}

\begin{figure*}[t]
    \centering
    \subfigure[Irrelevant dimensions]{
        \includegraphics[width=0.23\linewidth]{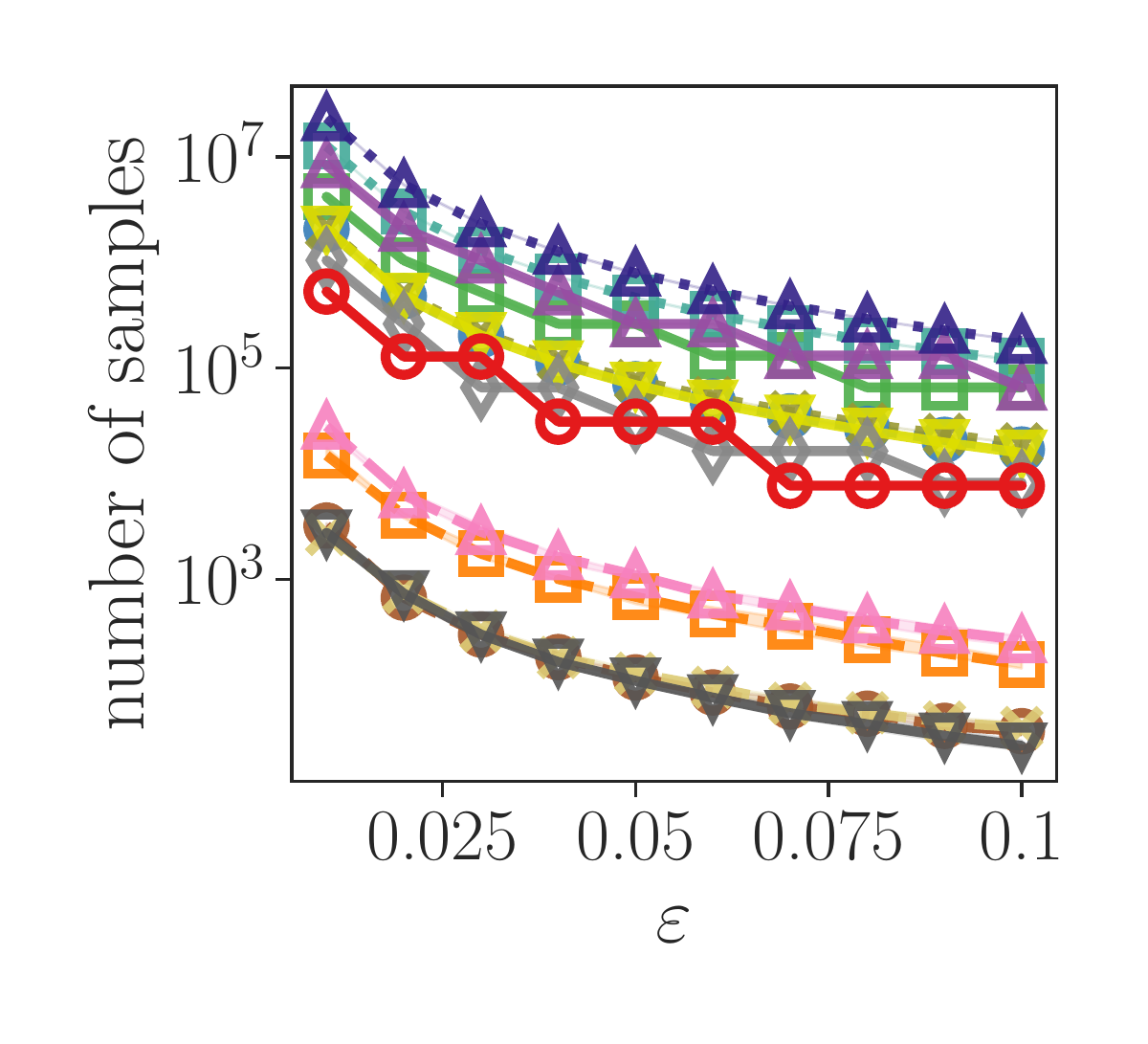}\hspace{0.5em}
        \includegraphics[width=0.23\linewidth]{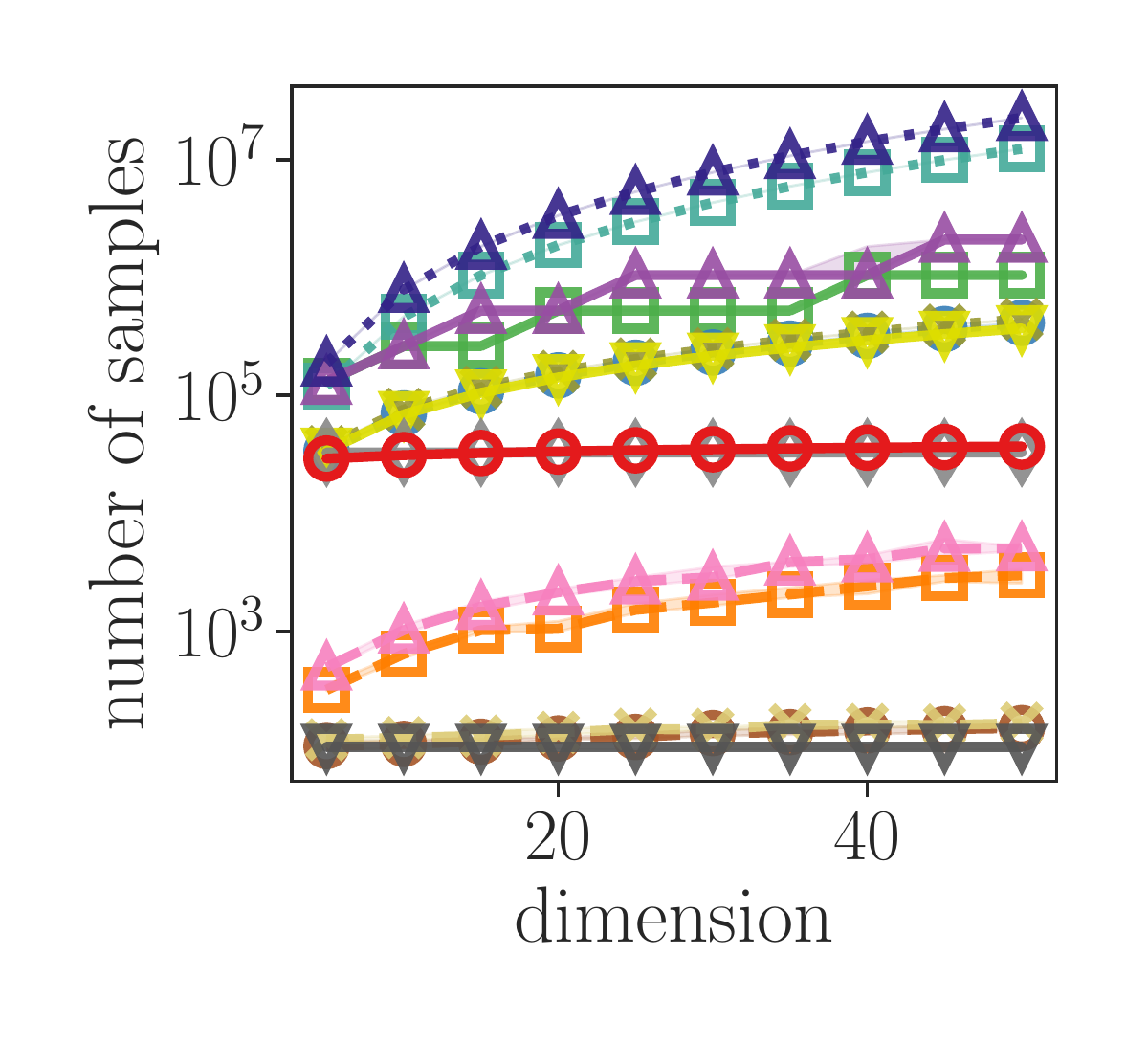}
    }\hfill
    \subfigure[Unit sphere]{
        \includegraphics[width=0.23\linewidth]{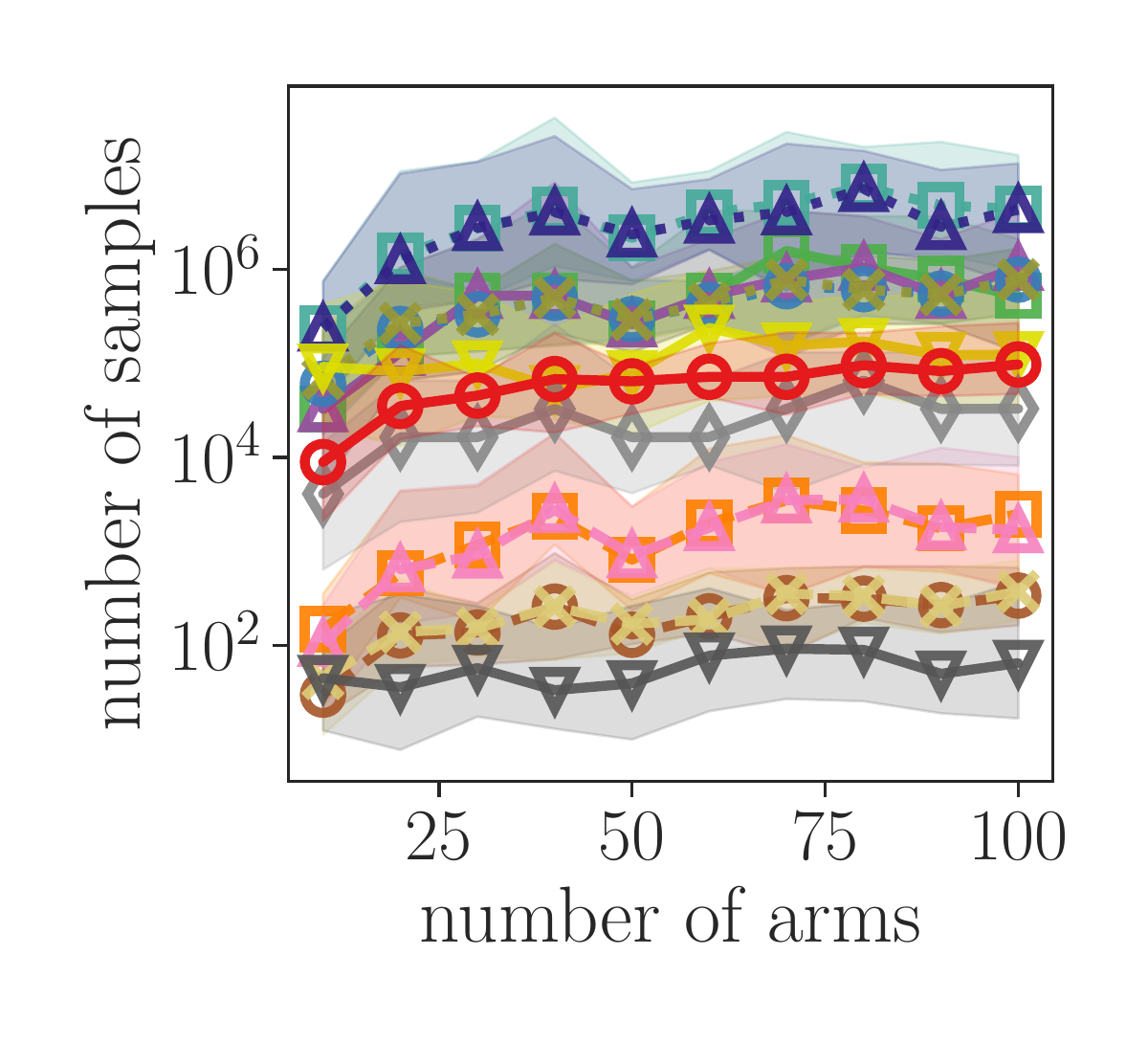}\hspace{0.5em}
        \includegraphics[width=0.23\linewidth]{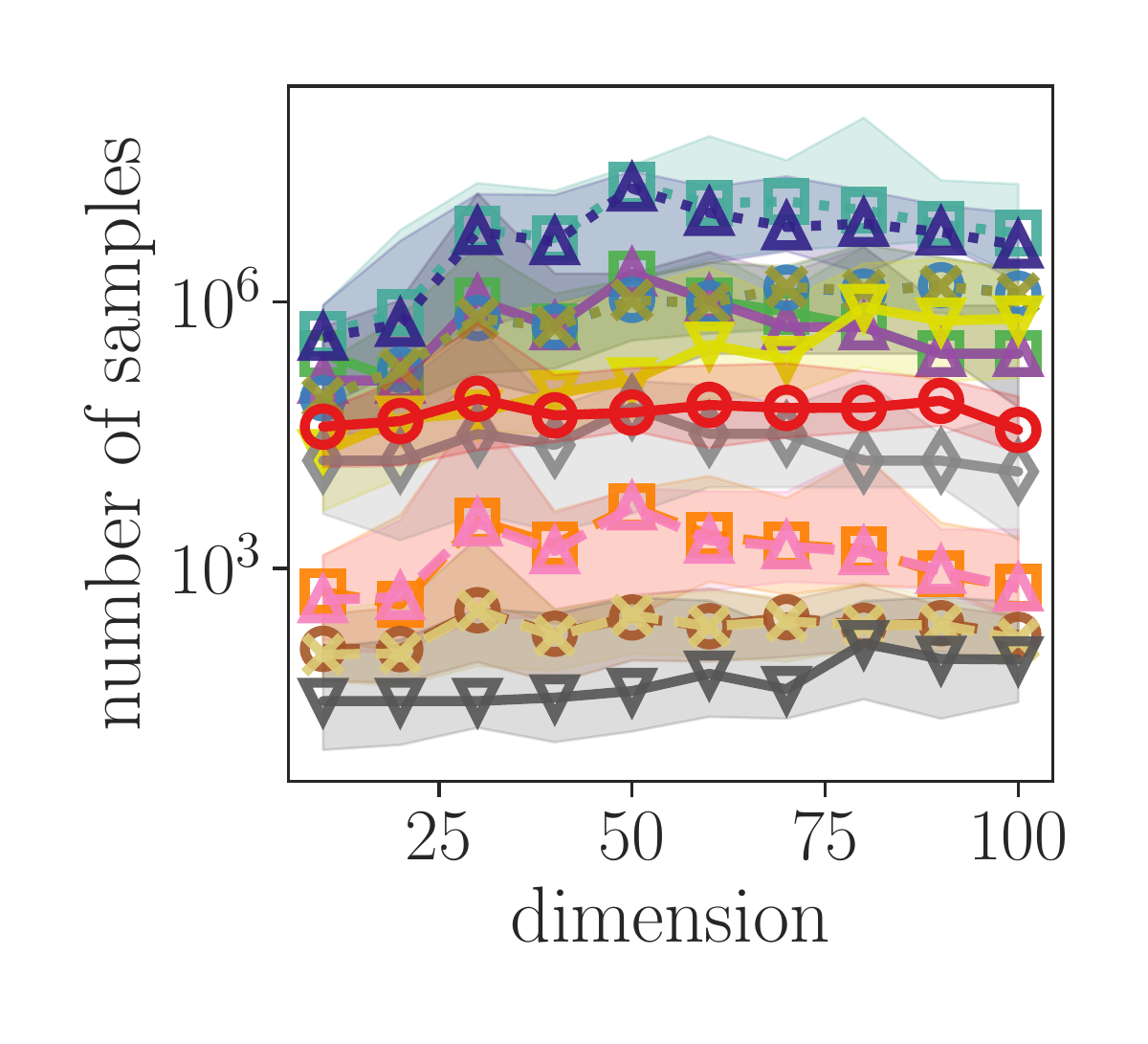}
    }
    {\small
    \begin{tabular}{llllll}
        \legendUniform & Uniform &
        \legendUniformAll & Adaptive Uniform &
        \legendUniformAllTuned & Adaptive Uniform (tuned) \\
        \legendGAllocation & G-Allocation &
        \legendMaxvarAll & Greedy MaxVar &
        \legendMaxvarAllTuned & Greedy MaxVar (tuned) \\
        \legendAdaptiveStatic & \AlgShort (ours) &
        \legendAdaptive & \AlgGreedyShort &
        \legendAdaptiveTuned & \AlgGreedyShort (tuned) \\
        \legendOracle & Oracle &
        \legendAdaptiveUniform & \AlgGreedyShort Uniform &
        \legendAdaptiveUniformTuned & \AlgGreedyShort Uniform (tuned) \\
        & & \legendMaxRewU & \MaxRewU & \legendMaxRewUTuned & \MaxRewU (tuned)
    \end{tabular}
    }
    \caption{
        Similar plots to \Cref{fig:bandit_results}, including some additional algorithms: the ``non-tuned'' versions of algorithms use the confidence interval from \Cref{thm:ols_bound_adaptive}, and \emph{\AlgGreedyShort Uniform} is \AlgGreedyShort with uniform sampling instead of the maximum variance objective. \Cref{tab:baselines} provides an overview of all baselines. Moreover, the plots here show the 25th and 75th percentiles over 30 random seeds. For ``irrelevant dimensions'', these are close to the median, but for ``unit sphere'', there is much more randomness because the instances are randomly generated.
    }
    \label{fig:bandit_more_results}
\end{figure*}

\begin{figure*}
    \centering
    \subfigure[``Base scenario'']{
       \begin{minipage}{0.2\linewidth}
        \includegraphics[width=\linewidth]{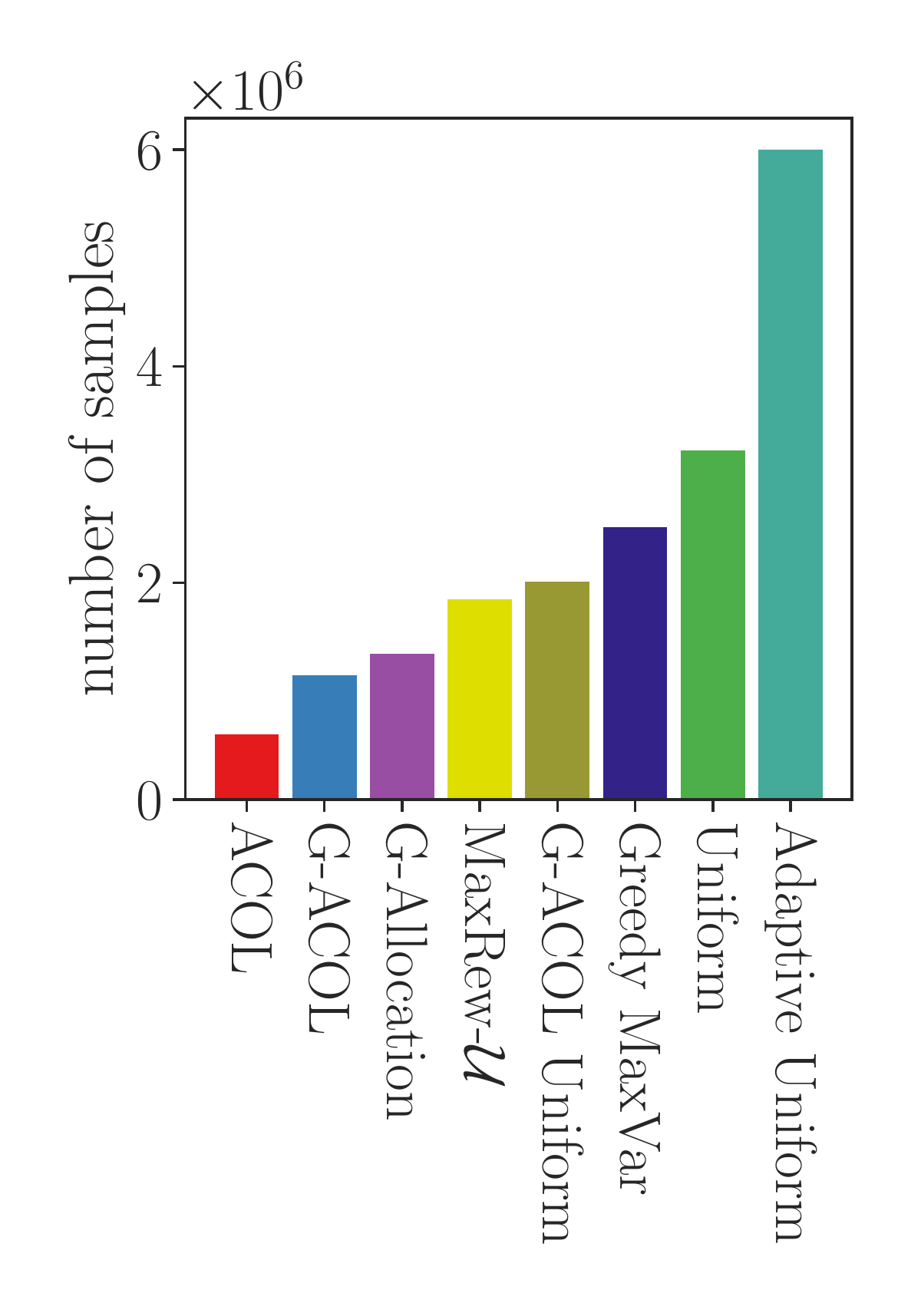}
       \end{minipage}\hspace{0.2cm}
       \begin{minipage}{0.28\linewidth}
        \includegraphics[width=0.95\linewidth]{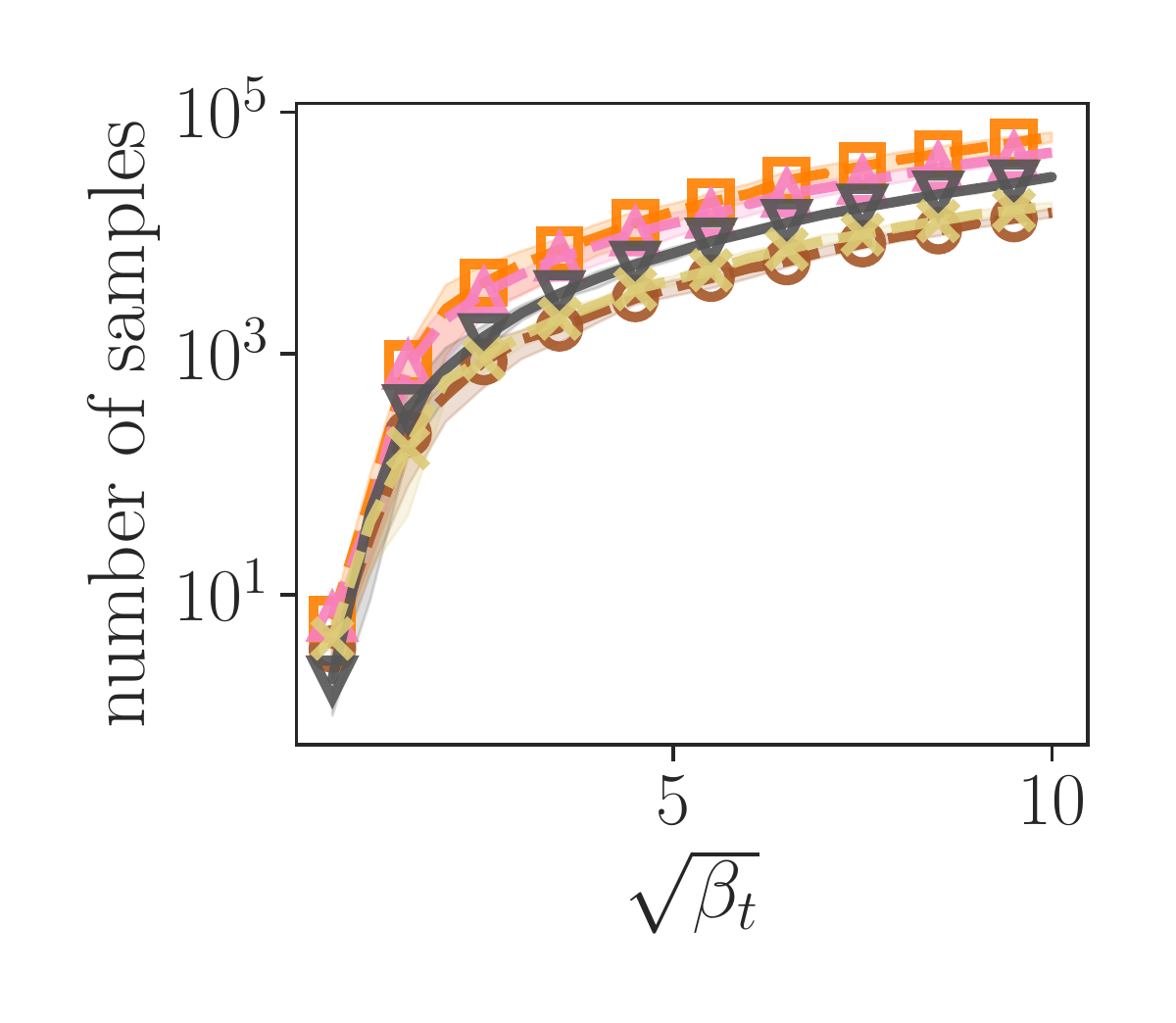}
       \end{minipage}
       \begin{minipage}{0.28\linewidth}
        \includegraphics[width=0.95\linewidth]{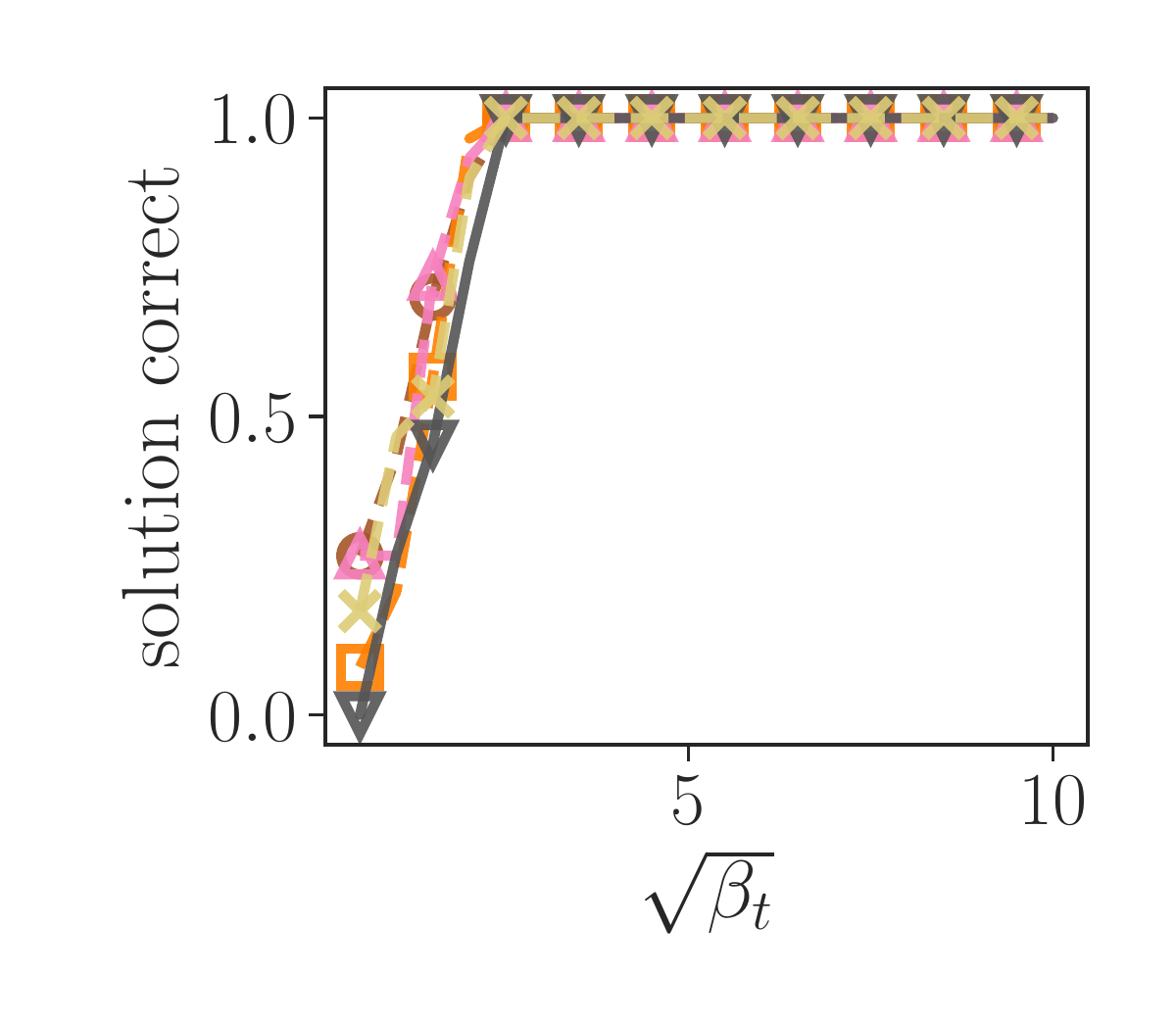}
       \end{minipage}
       \vspace{0.1cm}
    } \\
    \subfigure[``Different reward'']{
       \begin{minipage}{0.2\linewidth}
        \includegraphics[width=\linewidth]{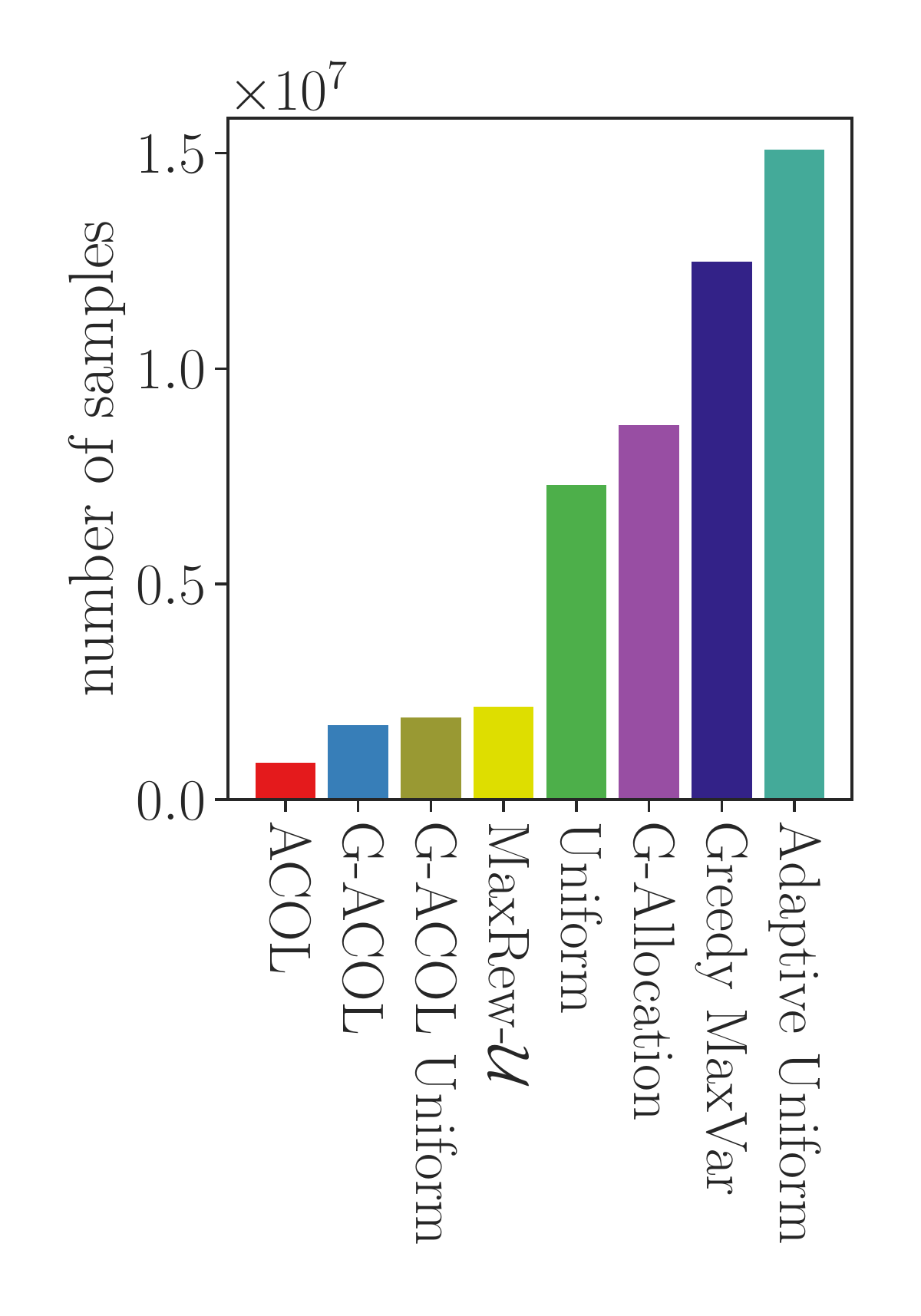}
       \end{minipage}\hspace{0.2cm}
       \begin{minipage}{0.28\linewidth}
        \includegraphics[width=0.95\linewidth]{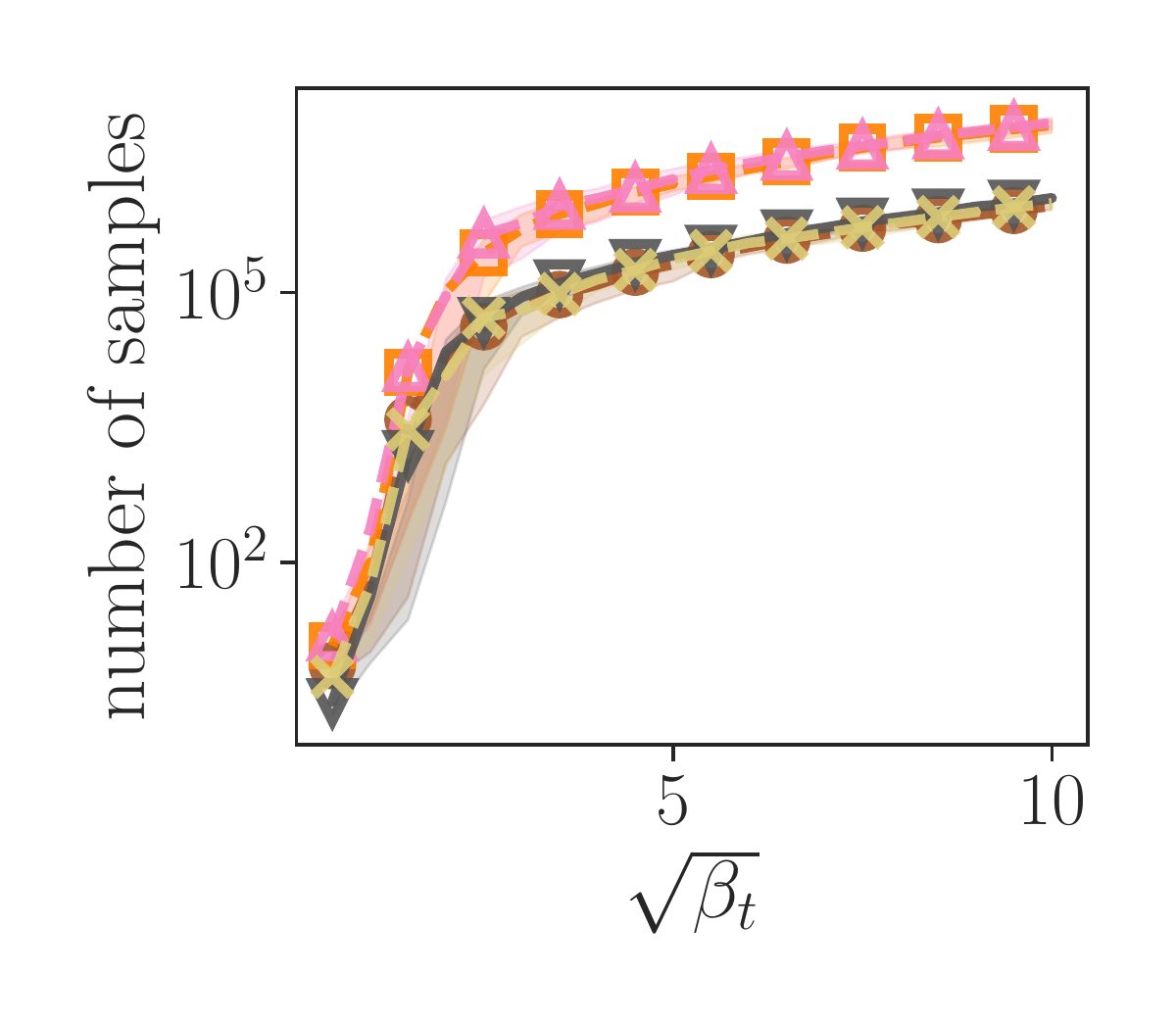}
       \end{minipage}
       \begin{minipage}{0.28\linewidth}
        \includegraphics[width=0.95\linewidth]{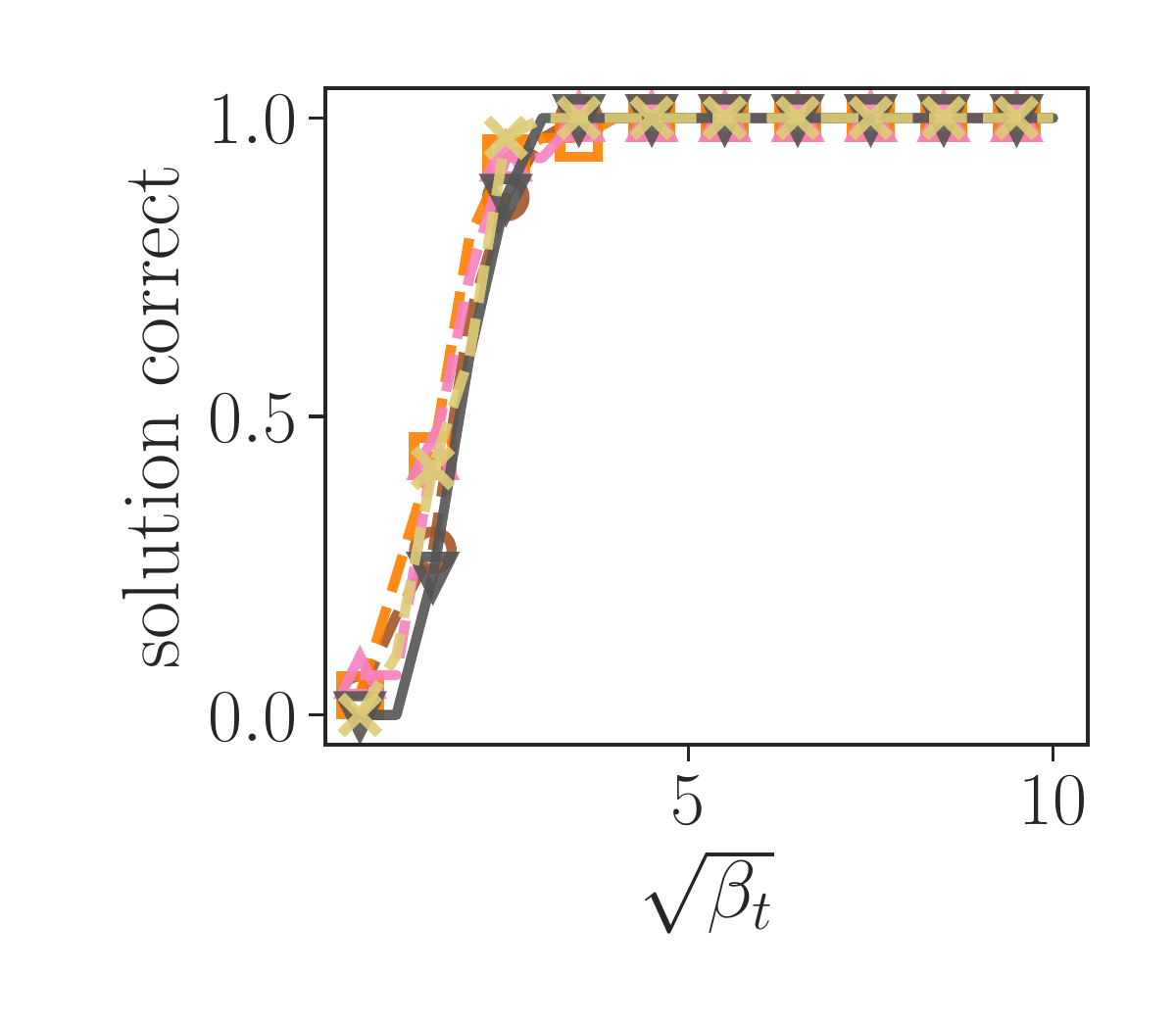}
       \end{minipage}
       \vspace{0.1cm}
    } \\
    \subfigure[``Different environment'']{
       \begin{minipage}{0.2\linewidth}
        \includegraphics[width=\linewidth]{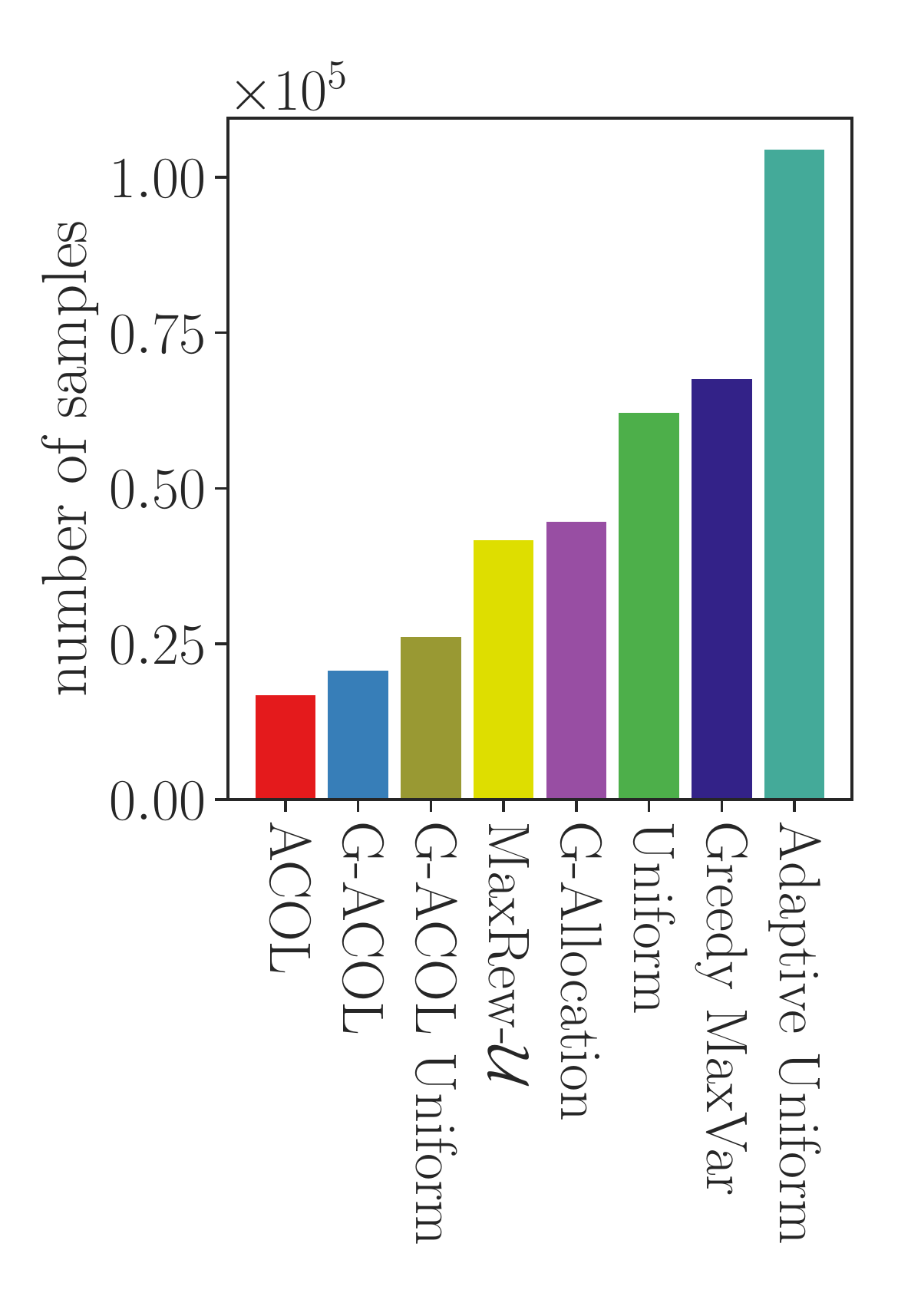}
       \end{minipage}\hspace{0.2cm}
       \begin{minipage}{0.28\linewidth}
        \includegraphics[width=0.95\linewidth]{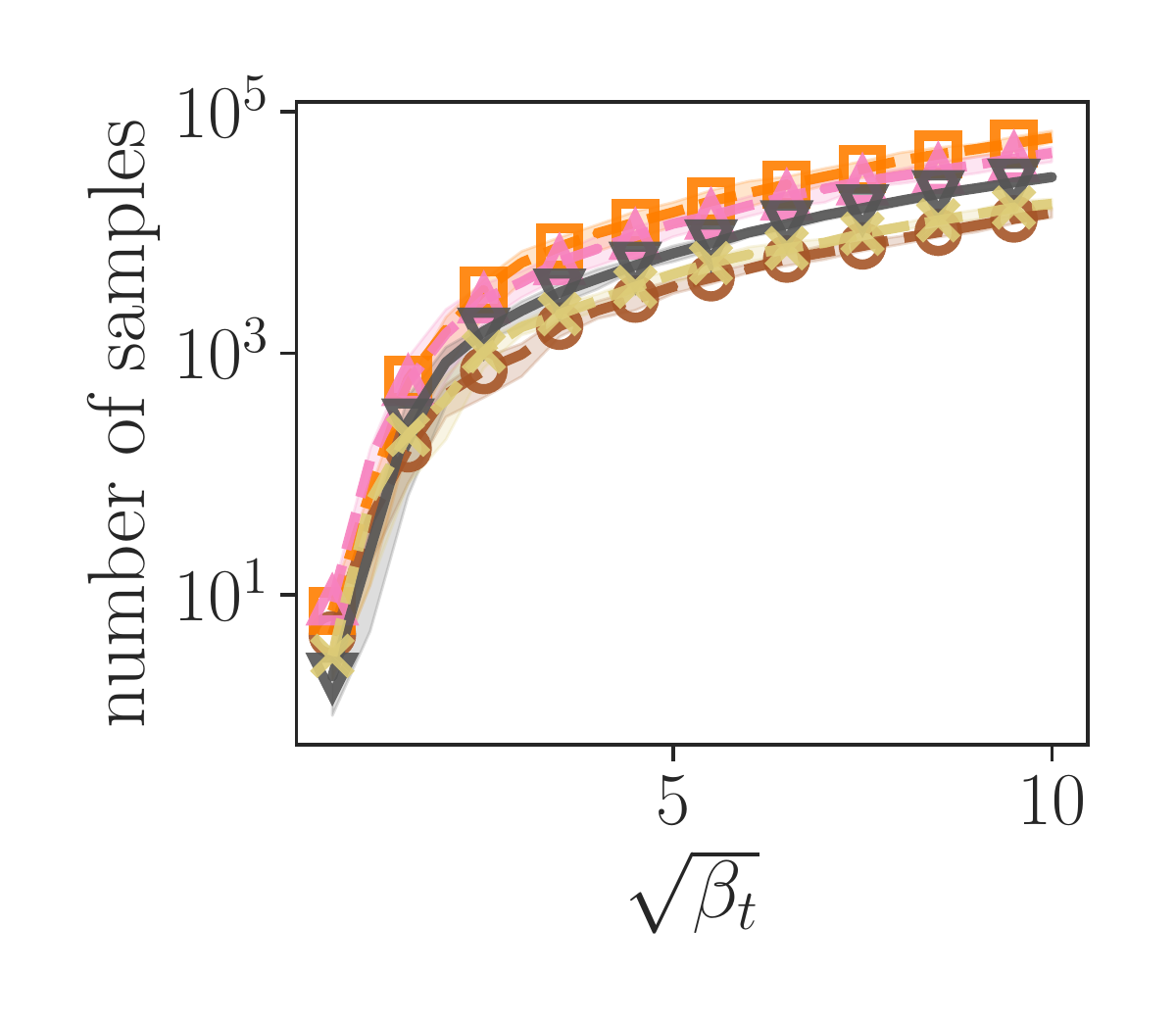}
       \end{minipage}
       \begin{minipage}{0.28\linewidth}
        \includegraphics[width=0.95\linewidth]{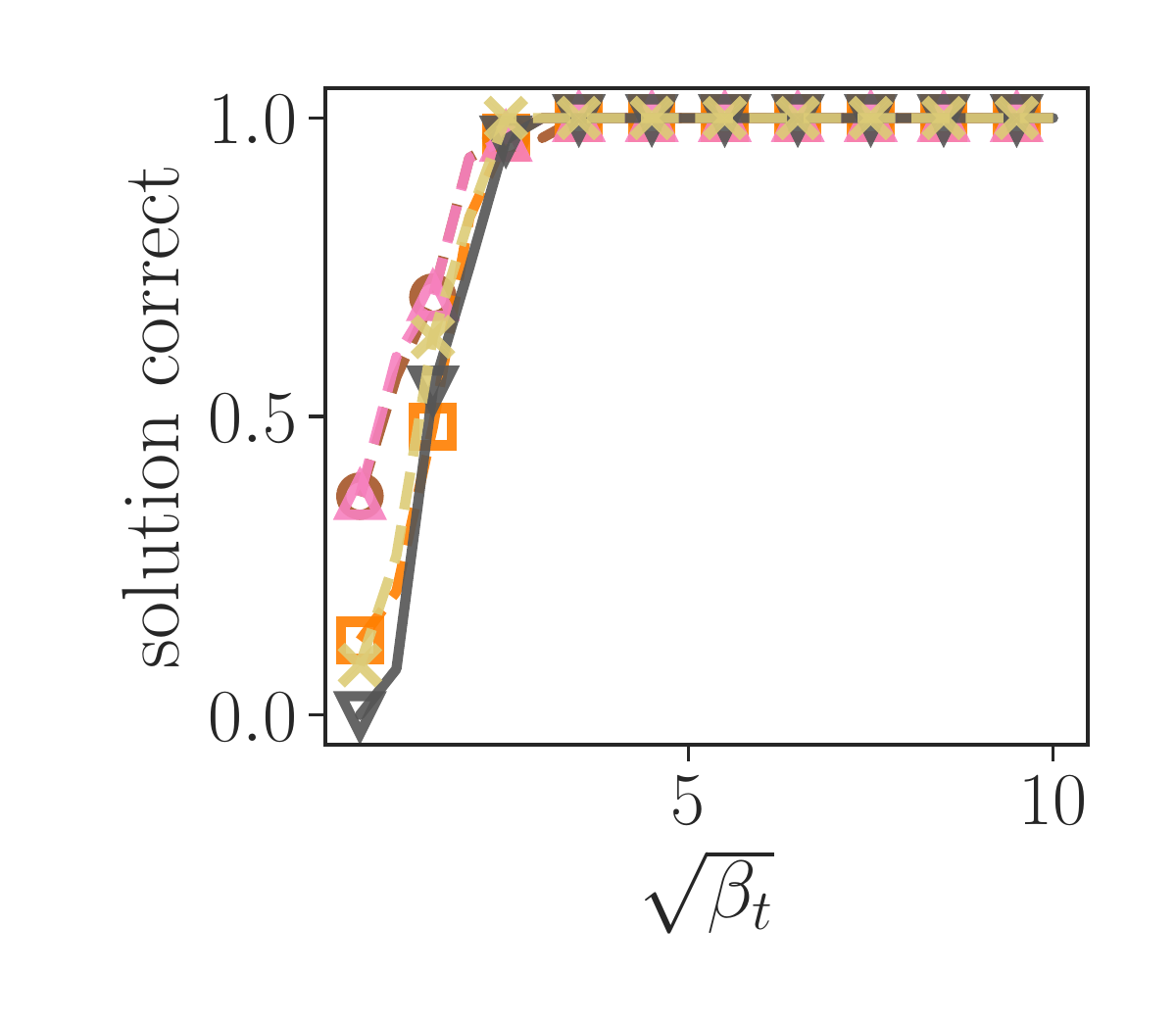}
       \end{minipage}
       \vspace{0.1cm}
    }
   {\small\centering
    \begin{tabular}{llllll}
    \vspace{0.1cm}\\
        \legendUniformAllTuned & Adaptive Uniform &
        \legendMaxvarAllTuned & Greedy MaxVar &
        \legendAdaptiveTuned & \AlgGreedyShort \\
        & & \legendMaxRewUTuned & \MaxRewU & \legendAdaptiveUniformTuned & \AlgGreedyShort Uniform
    \end{tabular}
    } 
    \caption{
        Similar plots as in \Cref{fig:driver_results} for all three driving scenarios from \Cref{fig:driver_example}, showing \AlgGreedyShort Uniform as aan additional baseline.
    }
    \label{fig:driver_more_results}
\end{figure*}

\end{document}